\theoremstyle{plain}
\newtheorem{theorem}{Theorem}
\newtheorem{proposition}[theorem]{Proposition}
\newtheorem{corollary}[theorem]{Corollary}
\theoremstyle{definition}
\theoremstyle{remark}
\newtheorem{remark}[theorem]{Remark}
\newcommand\figcaption{\def\@captype{figure}\caption}
\newcommand\tabcaption{\def\@captype{table}\caption}
\title{Learning to Integrate Diffusion ODEs\\ by Averaging the Derivatives}
\author{%
  Wenze Liu ~~~~~~ Xiangyu Yue\thanks{Corresponding author.} \\
  \\
  MMLab, The Chinese University of Hong Kong
}
\begin{document}

\maketitle

\begin{abstract}
 To accelerate diffusion model inference, numerical solvers perform poorly at extremely small steps, while distillation techniques often introduce complexity and instability. This work presents an intermediate strategy, balancing performance and cost, by learning ODE integration using loss functions derived from the derivative-integral relationship, inspired by Monte Carlo integration and Picard iteration. From a geometric perspective, the losses operate by gradually extending the tangent to the secant, thus are named as secant losses. The target of secant losses is the same as that of diffusion models, or the diffusion model itself, leading to great training stability. By fine-tuning or distillation, the secant version of EDM achieves a $10$-step FID of $2.14$ on CIFAR-10, while the secant version of SiT-XL/2 attains a $4$-step FID of $2.27$ and an $8$-step FID of $1.96$ on ImageNet-$256\times256$. Code is available at \url{https://github.com/poppuppy/secant-expectation}.
\end{abstract}

\section{Introduction}
\begin{wrapfigure}[22]{r}{0.35\linewidth}
    \vspace{-13pt}
    \setlength{\abovecaptionskip}{-4pt}
    \centering
    \includegraphics[width=\linewidth]{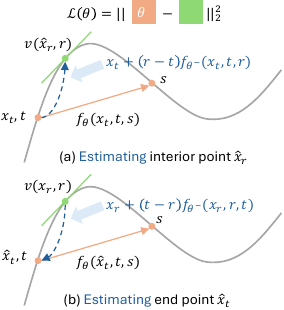}
    \caption{
Formulation of secant losses. We employ an $\ell_2$ loss between the learned secant and a tangent at a random interior time point. By estimating either the interior or the end point, we derive two variants shown in (a) and (b).}
    \label{fig:traj}
\end{wrapfigure}
Diffusion models~\cite{sohl2015deep, ho2020denoising,song2019generative,song2020score} generate images by reversely denoising their noised versions, which can be formulated by stochastic differential equations (SDEs) and the corresponding probability flow ordinary differential equations (PF-ODEs)~\cite{song2020score}. Over recent years, diffusion models have transformed the landscape of generative modeling, across multiple modalities such as image~\cite{dhariwal2021diffusion, karras2022elucidating,peebles2023scalable,rombach2022high}, video~\cite{blattmann2023align,ho2022video,blattmann2023stable} and audio~\cite{kong2020diffwave,huang2023make,liu2023audioldm}. Despite their remarkable performance, a significant drawback of diffusion models is their slow inference speed: they typically require hundreds to thousands of number of function evaluations (NFEs) to generate a single image. Considerable research has focused on reducing the number of required steps, with common approaches falling into categories including faster samplers~\cite{lu2022dpm,lu2022dpmpp,zheng2023dpm,watson2021learning} for diffusion SDEs or PF-ODEs, and diffusion distillation~\cite{luhman2021knowledge,zheng2023fast,salimans2022progressive,meng2023distillation,sauer2024adversarial,wang2022diffusion,sauer2024fast,wang2023prolificdreamer,yin2024one,yin2025improved,luo2023diff,luo2024diff,xie2024distillation,salimans2025multistep,song2023consistency,kim2023consistency,lu2024simplifying} methods that distill pretrained diffusion models to few-step generators. 
However, faster samplers experience significant performance degradation when operating with a small number of function evaluations (typically fewer than $10$ NFEs). Meanwhile, distillation approaches frequently introduce substantial computational overhead, complex training procedures, or training instabilities. Additionally, they sometimes face risks of model collapse and over-smoothing in the generated outputs.

\begin{figure}[!t]
    \centering
    \includegraphics[width=\linewidth]{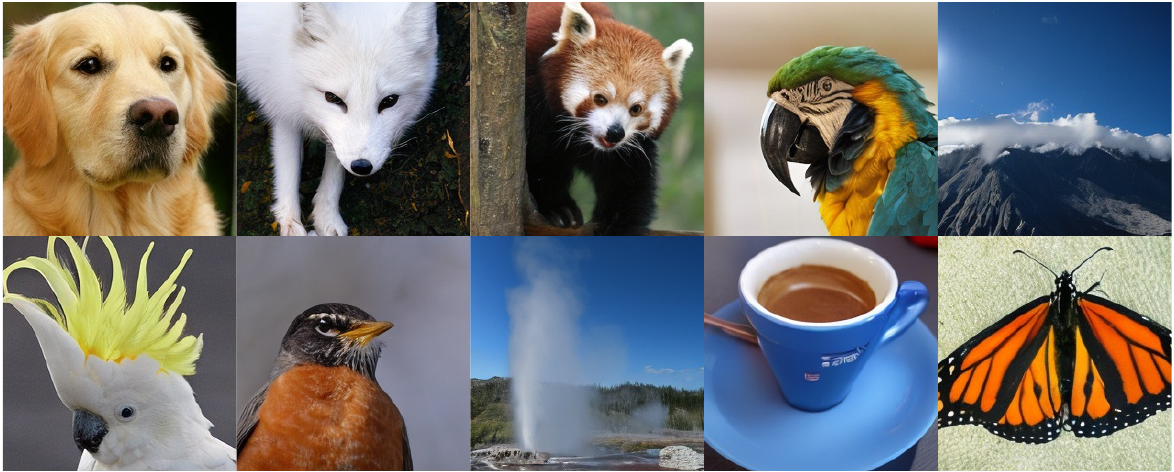}
    \caption{Selected $8$-step samples on ImageNet $256\times256$.}
    \label{fig:selected_imagenet}
\end{figure}
In this work, we propose a simple intermediate solution to reduce the inference steps of diffusion models through easy distillation or fine-tuning. A diffusion model constructs the PF-ODE by fitting the derivative of the noised image with respect to time, as indicated by the green line in Fig.~\ref{fig:traj}. From a geometric perspective, we refer to the diffusion model as the \textit{tangent} since it describes the instantaneous rate of change of the noised image as the time difference approaches zero. Correspondingly, we define the rate of change of the noised image between two time points as the \textit{secant}, marked in orange in the figure. Unlike diffusion models, we use neural networks to model the secant function instead. Based on the observation that the secant is the average of all tangents between two time points, we establish an integral equation as our loss function, termed \textit{secant losses}. This average is implemented by taking the expectation with respect to random time variables in a Monte Carlo fashion. Since we can only sample one noised image for evaluating either the secant or the tangent at each training iteration, we let the model estimate the other, inspired by Picard iteration. Two scenarios for estimating the interior or the end point are shown in Fig.~\ref{fig:traj} (a) and (b), respectively. This results in a straightforward loss formulation: the distance between the learned secant with respect to two given time points and a tangent at a random intermediate time between them, as presented at the top of Fig.~\ref{fig:traj}. Intuitively, secant losses work by gradually extending from the tangent to the secant. Unlike consistency models~\cite{song2023consistency,lu2024simplifying,kim2023consistency,boffi2024flow}, which either introduce discretization errors or rely on explicit differentiation in loss calculation, our approach avoids explicit differentiation while preserving accurate solutions at sufficiently small time intervals under mild conditions. More importantly, the target of our secant losses is identical to that of diffusion models or the diffusion model itself. This parallel to diffusion models provides significantly greater training stability compared with consistency models.

We evaluate our method on CIFAR-$10$~\cite{krizhevsky2009learning} and ImageNet-$256\times256$~\cite{deng2009imagenet} datasets, using EDM~\cite{karras2022elucidating} and SiT~\cite{ma2024sit} as teacher diffusion models, respectively. Our experiments demonstrate that diffusion models can be efficiently converted to their secant version, with significantly slower accuracy degradation compared to conventional numerical solvers as the step number decreases. On CIFAR-$10$, our approach achieves FID scores of $2.14$ with $10$ steps. For ImageNet-$256\times 256$ in latent space, we obtain a $4$-step FID of $2.78$ and an $8$-step FID of $2.33$. With the guidance interval technique~\cite{kynkaanniemi2024applying}, the performance is further improved to $2.27$ with $4$ steps and $1.96$ with $8$ steps.

\section{Related Work}
\noindent\textbf{Few-step diffusion distillation and training.} Reducing inference steps in diffusion models is commonly achieved through distillation. As an application of knowledge distillation~\cite{hinton2015distilling}, direct distillation methods~\cite{luhman2021knowledge,zheng2023fast} generate noise-image pairs by sampling from a pretrained diffusion model and train a one-step model on this synthetic dataset. Similarly, progressive distillation~\cite{salimans2022progressive,meng2023distillation} iteratively trains the models to merge adjacent steps toward a one-step model. These methods often introduce significant computational overhead due to extensive sampling or training costs. Adversarial distillation approaches~\cite{sauer2024adversarial,wang2022diffusion,sauer2024fast} apply GAN-style losses~\cite{goodfellow2014generative} to supervise the one-step distribution, potentially increasing training instability. Variational score distillation~\cite{wang2023prolificdreamer,yin2024one,yin2025improved,luo2023diff,luo2024diff,xie2024distillation,salimans2025multistep} and score identity distillation~\cite{zhou2024score,huang2024flow} also employ distribution-level distillation, while introduce additional complexity due to the simultaneous optimization of two online models, and may potentially result in over-smoothing artifacts in the generated outputs. Consistency distillation~\cite{song2023consistency,kim2023consistency,lu2024simplifying} leverages the consistency property of PF-ODEs to train models to solve these equations directly, but faces stability challenges~\cite{song2023consistency,lu2024simplifying} and may require model customization~\cite{lu2024simplifying}. While direct training of consistency models~\cite{song2023consistency,geng2024consistency,lu2024simplifying} is feasible, they typically underperform consistency distillation under high data variance~\cite{lu2024simplifying}. Similarly, Shortcut Models~\cite{frans2024one} utilize the consistency property as regularization in the loss function, which could be considered as simultaneously training and distillation. Rectified flow~\cite{liu2022flow,liu2023instaflow} adopts a multi-time training strategy aimed at gradually straightening the trajectory of the PF-ODE. Similar to consistency models, our work also introduces losses for both distillation and training. However, our loss function emphasizes local accuracy, which enhances stability, though the accuracy decreases more at larger time intervals.

\noindent\textbf{Fast diffusion samplers.} Common numerical solvers, such as the Euler solver for PF-ODEs and the Euler-Maruyama solver for SDEs, typically require hundreds to thousands of NFEs to sample an image from diffusion models. Since SDEs involve greater randomness, samplers based on them generally require much more NFEs to converge~\cite{ho2020denoising,song2020denoising,song2020score,karras2022elucidating,jolicoeur2021gotta,kong2021fast,bao2022analytic,zhang2022gddim}. Consequently, existing fast samplers are usually based on PF-ODEs, including both training-free and learnable methods. Training-free methods leverage mathematical tools to analyze and formulate the solving process. Examples include high-order samplers like the Heun sampler~\cite{karras2022elucidating}, exponential samplers~\cite{lu2022dpm,lu2022dpmpp,zheng2023dpm,zhang2022fast,zhao2023unipc}, and parallel samplers~\cite{shih2023parallel,tang2024accelerating}. Despite their convenience, these methods struggle in low NFE scenarios. Data-driven, learnable samplers enhance performance by training lightweight modules to learn hyperparameters in the sampling process, such as coefficients~\cite{bao2022estimating}, high-order derivatives~\cite{dockhorn2022genie}, and time schedules~\cite{watson2021learning,zhou2024fast,sabour2024align,chen2024trajectory,xue2024accelerating,tong2024learning}. However, both training-free and learned faster samplers still experience significant performance degradation when the step count falls below ten. Our method is also based on solving the diffusion ODE, and it degrades more gradually than conventional diffusion samplers as the step count decreases. Our approach can be seen as an effective compromise between fast samplers and diffusion distillation methods.

\section{Diffusion Models}
\label{sec:pre}

For simplicity, we review diffusion models within the flow matching framework~\cite{lipman2022flow,liu2022flow,albergo2023stochastic}. Let $p_d$ denote the data distribution, and $\boldsymbol{x}_0\sim p_d$ represent a data sample. Let $\boldsymbol{z}\sim \mathcal{N}(\boldsymbol{0},\boldsymbol{I})$ be a standard Gaussian sample. The noised image with respect to $\boldsymbol{x}_0$ and $\boldsymbol{z}$ at time $t$ is defined as $\boldsymbol{x}_t = \alpha_t \boldsymbol{x}_0 + \sigma_t \boldsymbol{z}$, where $t\in [0,1]$ represents the intensity of added noise. For a given neural network $\boldsymbol{v}_\theta$, the training objective is formulated as:
\begin{equation}
\label{eq:diffusion}
    \mathcal{L}_\textrm{Diff}(\theta)=\mathbb{E}_{\boldsymbol{x}_0, \boldsymbol{z}, t} \Vert \boldsymbol{v}_\theta (\boldsymbol{x}_t, t) - (\alpha_t' \boldsymbol{x}_0 + \sigma_t' \boldsymbol{z}) \Vert_2^2,
\end{equation}
where $\alpha_t'$ and $\sigma_t'$ are time derivatives. An established result from the literature on diffusion and flow matching~\cite{lipman2022flow,liu2022flow,albergo2023stochastic} is as follows:
\begin{proposition}
\label{prop:difusion_optimal}
    When $\mathcal{L}_\textrm{Diff}(\theta)$ reaches the minimum, the optimal solution $\boldsymbol{v}_\theta^*(\boldsymbol{x}_t,t)$ is 
\begin{equation}
\label{eq:tangent}
    \boldsymbol{v}(\boldsymbol{x}_t,t)=\mathbb{E}_{\boldsymbol{x}_0,\boldsymbol{z}}(\alpha_t' \boldsymbol{x}_0 + \sigma_t' \boldsymbol{z}\vert \boldsymbol{x}_t).
\end{equation}
\end{proposition}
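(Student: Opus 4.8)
The plan is to prove Proposition~\ref{prop:difusion_optimal} by the standard $L^2$-projection (bias--variance) argument, regarding $\mathcal{L}_\textrm{Diff}$ as an average, over $t$ and over the noised image $\boldsymbol{x}_t$, of pointwise quadratic problems whose unique minimizer is a conditional mean. First I would fix $t$ and use the tower property to rewrite the inner expectation over $(\boldsymbol{x}_0,\boldsymbol{z})$ as an outer expectation over $\boldsymbol{x}_t$ followed by a conditional expectation given $\boldsymbol{x}_t$; this is legitimate because $\boldsymbol{x}_t=\alpha_t\boldsymbol{x}_0+\sigma_t\boldsymbol{z}$ is a measurable function of $(\boldsymbol{x}_0,\boldsymbol{z})$ and, assuming $p_d$ has finite second moment and $\alpha_t',\sigma_t'$ are bounded on $[0,1]$, the integrand is integrable and the conditional expectation in~\eqref{eq:tangent} is well defined.

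Next I would carry out the bias--variance split. Abbreviating $\boldsymbol{u}:=\alpha_t'\boldsymbol{x}_0+\sigma_t'\boldsymbol{z}$ and $\bar{\boldsymbol{u}}(\boldsymbol{x}_t,t):=\mathbb{E}[\boldsymbol{u}\mid\boldsymbol{x}_t]$, I add and subtract $\bar{\boldsymbol{u}}$ inside the squared norm and expand. Since $\boldsymbol{v}_\theta(\boldsymbol{x}_t,t)-\bar{\boldsymbol{u}}(\boldsymbol{x}_t,t)$ is measurable with respect to $\boldsymbol{x}_t$, it factors out of the conditional expectation in the cross term, which then contains $\mathbb{E}[\bar{\boldsymbol{u}}-\boldsymbol{u}\mid\boldsymbol{x}_t]=\boldsymbol{0}$; hence the cross term vanishes and
\[ \mathbb{E}\big[\Vert\boldsymbol{v}_\theta(\boldsymbol{x}_t,t)-\boldsymbol{u}\Vert_2^2\mid\boldsymbol{x}_t\big]=\mathbb{E}\big[\Vert\boldsymbol{u}-\bar{\boldsymbol{u}}(\boldsymbol{x}_t,t)\Vert_2^2\mid\boldsymbol{x}_t\big]+\Vert\boldsymbol{v}_\theta(\boldsymbol{x}_t,t)-\bar{\boldsymbol{u}}(\boldsymbol{x}_t,t)\Vert_2^2 . \]
Taking the expectation over $\boldsymbol{x}_t$ and $t$ gives $\mathcal{L}_\textrm{Diff}(\theta)=C+\mathbb{E}_{\boldsymbol{x}_t,t}\Vert\boldsymbol{v}_\theta(\boldsymbol{x}_t,t)-\bar{\boldsymbol{u}}(\boldsymbol{x}_t,t)\Vert_2^2$, where $C$ does not depend on $\theta$. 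The second term is nonnegative and equals zero exactly when $\boldsymbol{v}_\theta(\boldsymbol{x}_t,t)=\bar{\boldsymbol{u}}(\boldsymbol{x}_t,t)$ for $(\boldsymbol{x}_t,t)$-almost every point, so $\mathcal{L}_\textrm{Diff}$ attains its minimum precisely at $\boldsymbol{v}_\theta^*(\boldsymbol{x}_t,t)=\mathbb{E}_{\boldsymbol{x}_0,\boldsymbol{z}}(\alpha_t'\boldsymbol{x}_0+\sigma_t'\boldsymbol{z}\mid\boldsymbol{x}_t)$, which is~\eqref{eq:tangent}.

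The argument as stated identifies the minimizer over \emph{all} measurable, square-integrable functions $\boldsymbol{x}_t\mapsto\boldsymbol{v}(\boldsymbol{x}_t,t)$; the point I would flag as the main caveat — rather than a genuine obstacle — is passing from this to the parametric family $\{\boldsymbol{v}_\theta\}$, which requires the usual implicit assumption that the network class is expressive enough to represent (or approximate arbitrarily closely) $\bar{\boldsymbol{u}}$, i.e. the claim is to be read in the idealized infinite-capacity limit. A second minor point is that the minimizer is unique only up to $\boldsymbol{x}_t$-null sets; since for $t>0$ the law of $\boldsymbol{x}_t=\alpha_t\boldsymbol{x}_0+\sigma_t\boldsymbol{z}$ has a strictly positive Lebesgue density (Gaussian convolution), this is exactly the expected almost-everywhere uniqueness and causes no difficulty.
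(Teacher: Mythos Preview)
Your proof is correct and follows essentially the same approach as the paper: both use the tower property to condition on $\boldsymbol{x}_t$ and then perform the bias--variance / completing-the-square decomposition to isolate $\mathbb{E}_{\boldsymbol{x}_t,t}\Vert\boldsymbol{v}_\theta-\boldsymbol{v}\Vert_2^2$ plus a $\theta$-independent remainder. Your version is slightly more careful about the measure-theoretic caveats (integrability, a.e.\ uniqueness, the infinite-capacity assumption), which the paper leaves implicit.
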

And the associated PF-ODE 
\begin{equation}
\label{eq:pf_ode}
 \frac{d\boldsymbol{x}_t}{dt}=\boldsymbol{v}(\boldsymbol{x}_t,t)    
\end{equation}
is guaranteed to generate $\boldsymbol{x}_0 \sim p_d$ at $t = 0$ starting from $\boldsymbol{z} \sim \mathcal{N}(\boldsymbol{0},\boldsymbol{I})$ at $t = 1$.

\section{Learning Integral with Secant Expectation}
\label{sec:secant}
In this section, we first introduce how to parametrize the model as the secant function. We then explain our derivation of loss functions based on Monte Carlo integration and Picard iteration.

\subsection{Secant Parametrization} Given the PF-ODE Eq.~\eqref{eq:pf_ode}, to solve $\boldsymbol{x}_s$ at time $s$ starting from $\boldsymbol{x}_t$ at time $t$, one calculates 
\begin{equation}
\label{eq:diff_solution}
    \boldsymbol{x}_s = \boldsymbol{x}_t + \int_t^s \boldsymbol{v}(\boldsymbol{x}_r,r)dr.
\end{equation}
We define the secant function from $\boldsymbol{x}_t$ at time $t$ to time $s$ as 
\begin{equation}
\label{eq:secant_definition}
    \boldsymbol{f}(\boldsymbol{x}_t, t, s)=\begin{cases}
        \displaystyle \boldsymbol{v}(\boldsymbol{x}_t,t), &\textrm{if } t=s,\\[0.5em]
        \displaystyle \frac{1}{s-t}\int_t^s \boldsymbol{v}(\boldsymbol{x}_r,r)dr,&\textrm{if } t\neq s.
    \end{cases}
\end{equation}
Since
\begin{equation}
\label{eq:sec_lim}
    \boldsymbol{f}(\boldsymbol{x}_t,t,t)=\lim\limits_{s\to t}\boldsymbol{f}(\boldsymbol{x}_t,t,s)=\lim\limits_{s\to t}\frac{1}{s-t}\int_t^s \boldsymbol{v}(\boldsymbol{x}_r,r)dr=\boldsymbol{v}(\boldsymbol{x}_t,t),
\end{equation}
$\boldsymbol{f}(\boldsymbol{x}_t,t,s)$ is continuous at $s=t$. And, we parametrize the neural network $\boldsymbol{f}_\theta(\boldsymbol{x}_t, t, s)$ to represent this secant function. If $\boldsymbol{f}_\theta(\boldsymbol{x}_t, t, s)$ is trained accurately fitting $\boldsymbol{f}(\boldsymbol{x}_t, t, s)$, we can directly jump from $\boldsymbol{x}_t$ to $\boldsymbol{x}_s$ using
\begin{equation}
\label{eq:secant_inference}
    \boldsymbol{x}_s=\boldsymbol{x}_t + \int_t^s \boldsymbol{v}(\boldsymbol{x}_r,r)dr=\boldsymbol{x}_t + (s-t)\boldsymbol{f}_\theta(\boldsymbol{x}_t,t,s).
\end{equation}
We choose this parameterization for its simplicity and clearer geometric interpretation, though other similar formulations~\cite{kim2023consistency,boffi2024flow,frans2024one} would also be viable in practice.

\subsection{Secant Expectation}
\label{subsec:secant_expectation}
\begin{figure}[t]
    \centering
    \begin{minipage}{0.49\textwidth}
        \centering
        \begin{algorithm}[H]
            \caption{Secant Distillation by Estimating the Interior Point (SDEI)}
            \label{alg:sdei}
            \begin{algorithmic}
                \STATE \textbf{Input:} dataset $\mathcal{D}$, neural network $\boldsymbol{f}_\theta$, teacher diffusion model $\boldsymbol{v}$, learning rate $\eta$
                \vspace{-10pt}
                \STATE \REPEAT
                \STATE $\theta^- \leftarrow \theta$
                \STATE Sample $\boldsymbol{x}\sim \mathcal{D}$, $\boldsymbol{z}\sim\mathcal{N}(\boldsymbol{0},\boldsymbol{I})$
                \STATE Sample $t$ and $s$
                \STATE Sample $r\sim \mathcal{U}[0,1]$, $r \leftarrow t + r(s-t)$
                \STATE $\boldsymbol{x}_t \leftarrow t\boldsymbol{x}+(1-t)\boldsymbol{z}$
                \STATE $\hat{\boldsymbol{x}}_r \leftarrow \boldsymbol{x}_t+(r-t)\boldsymbol{f}_{\theta^-}(\boldsymbol{x}_t,t,r)$
                \STATE $\boldsymbol{v}_r\leftarrow \boldsymbol{v}(\hat{\boldsymbol{x}}_r,r)$
                \STATE $\mathcal{L}(\theta)=\mathbb{E}_{\boldsymbol{x},\boldsymbol{z},t,s,r}\Vert \boldsymbol{f}_\theta(\boldsymbol{x}_t,t,s)-\boldsymbol{v}_r\Vert_2^2$
                \STATE $\theta \leftarrow \theta - \eta \nabla_\theta \mathcal{L}(\theta)$
                \UNTIL{convergence}
            \end{algorithmic}
        \end{algorithm}
    \end{minipage}\hfill
    \begin{minipage}{0.49\textwidth}
        \centering
        \begin{algorithm}[H]
            \caption{Secant Training by Estimating the End Point (STEE)}
            \label{alg:stee}
            \begin{algorithmic}
                \STATE \textbf{Input:} dataset $\mathcal{D}$, neural network $\boldsymbol{f}_\theta$, learning rate $\eta$
                \vspace{-10pt}
                \STATE \REPEAT
                \STATE $\theta^- \leftarrow \theta$
                \STATE Sample $\boldsymbol{x}\sim \mathcal{D}$, $\boldsymbol{z}\sim\mathcal{N}(\boldsymbol{0},\boldsymbol{I})$
                \STATE Sample $t$ and $s$
                \STATE Sample $r\sim \mathcal{U}[0,1]$, $r \leftarrow t + r(s-t)$
                \STATE $\boldsymbol{x}_r \leftarrow r\boldsymbol{x}+(1-r)\boldsymbol{z}$
                \STATE $\hat{\boldsymbol{x}}_t \leftarrow \boldsymbol{x}_r+(t-r)\boldsymbol{f}_{\theta^-}(\boldsymbol{x}_r,r,t)$
                \STATE $\boldsymbol{u}_r \leftarrow \boldsymbol{x}-\boldsymbol{z}$
                \STATE $\mathcal{L}(\theta)=\mathbb{E}_{\boldsymbol{x},\boldsymbol{z},t,s,r}\Vert \boldsymbol{f}_\theta(\hat{\boldsymbol{x}}_t,t,s)-\boldsymbol{u}_r\Vert_2^2$
                \STATE $\theta \leftarrow \theta - \eta \nabla_\theta \mathcal{L}(\theta)$
                \UNTIL{convergence}
            \end{algorithmic}
        \end{algorithm}
    \end{minipage}
\end{figure}

Examining Eq.~\eqref{eq:secant_definition} from a probabilistic perspective, we have
\begin{equation}
\label{eq:expectation_convert}
    \boldsymbol{f}(\boldsymbol{x}_t, t, s)=\frac{1}{s-t}\int_t^s \boldsymbol{v}(\boldsymbol{x}_r,r)dr=\mathbb{E}_{r\sim U(t,s)}\boldsymbol{v}(\boldsymbol{x}_r,r),
\end{equation}
which indicates that we can calculate the integral $\frac{1}{s-t}\int_t^s \boldsymbol{v}(\boldsymbol{x}_r,r)dr$ by uniformly sampling random values of $r\sim\mathcal{U}(t,s)$\footnote{We do not assume specific order relation between $t$ and $s$, and we sightly abuse the notion $r\sim\mathcal{U}(t,s)$ by meaning $r\sim\mathcal{U}(\min\{t,s\},\max\{t,s\})$ for simplicity.} and computing their average. However, during training, we cannot obtain a large number of $\boldsymbol{x}_r$'s to calculate this integral accurately. We address this challenge by formulating the integral as the optimal solution to a simple objective that involves only one $\boldsymbol{x}_r$ at a time:
\begin{equation}
\label{eq:secant_ideal}
    \mathcal{L}_\textrm{Na\"ive}(\theta)=\mathbb{E}_{\boldsymbol{x}_0, \boldsymbol{z}, r\sim U(t,s)}\Vert\boldsymbol{f}_\theta(\boldsymbol{x}_t, t, s)-\boldsymbol{v}(\boldsymbol{x}_r,r)\Vert_2^2.
\end{equation}
This approach is inspired by the diffusion objective that leads from Eq.~\eqref{eq:diffusion} to Eq.~\eqref{eq:tangent}. Inspecting Eq.~\eqref{eq:secant_ideal}, we observe that we can only access either $\boldsymbol{x}_t$ or $\boldsymbol{x}_r$ at each training step, but not both simultaneously. To address this issue, we draw inspiration from Picard iteration by estimating one of the solutions between $\boldsymbol{x}_t$ and $\boldsymbol{x}_r$ given the other. We refer to the family of obtained loss functions in this way as \textit{secant losses}. Specifically, one way is to sample $\boldsymbol{x}_t=\alpha_t \boldsymbol{x}_0 + \sigma_t \boldsymbol{z}$, and estimate $\boldsymbol{x}_r$ using
\begin{equation}
\label{eq:guess_t2r}
    \hat{\boldsymbol{x}}_r=\boldsymbol{x}_t+(r-t)\boldsymbol{f}_{\theta^-}(\boldsymbol{x}_t,t,r),
\end{equation}
where $\theta^-$ denotes the \texttt{stop\_gradient} version of $\theta$. This transforms the loss function in Eq.~\eqref{eq:secant_ideal} into 
\begin{equation}
\label{eq:loss_sdei}
    \mathcal{L}_\textrm{SDEI}(\theta)=\mathbb{E}_{\boldsymbol{x}_0, \boldsymbol{z}, t, s, r\sim \mathcal{U}(t,s)}\Vert\boldsymbol{f}_\theta(\boldsymbol{x}_t, t, s)-\boldsymbol{v}(\boldsymbol{x}_t+(r-t)\boldsymbol{f}_{\theta^-}(\boldsymbol{x}_t,t,r),r)\Vert_2^2,
\end{equation}
where $\mathcal{L}_\textrm{SDEI}$ stands for \textit{secant distillation by estimating the interior point}, meaning we sample at the end point $t$ and estimate the interior solution at $r$. According to Eq.~\eqref{eq:sec_lim}, we can also directly train the few-step model by incorporating the diffusion loss termed as 
\begin{equation}
\label{eq:stei}
    \begin{split}
    \mathcal{L}_\textrm{STEI}(\theta)=\mathbb{E}_{\boldsymbol{x}_0, \boldsymbol{z},t,s, r\sim \mathcal{U}(t,s)}\Vert\boldsymbol{f}_\theta(\boldsymbol{x}_t, t, s)-\boldsymbol{f}_{\theta^-}(\boldsymbol{x}_t+(r-t)\boldsymbol{f}_{\theta^-}(\boldsymbol{x}_t,t,r),r,r)\Vert_2^2\\
    + \lambda\mathbb{E}_{\boldsymbol{x}_0, \boldsymbol{z}, \tau}\Vert \boldsymbol{f}_\theta(\boldsymbol{x}_\tau,\tau,\tau) - (\alpha_\tau' \boldsymbol{x}_0 + \sigma_\tau' \boldsymbol{z})\Vert_2^2,
    \end{split}
\end{equation}
where $\lambda$ is a constant to balance diffusion loss and secant loss, and $\tau$ is a time step indicating the time sampling in the two parts is independent. Here STEI denotes \textit{secant training by estimating the interior point}.

Alternatively, we can sample $\boldsymbol{x}_r = \alpha_r \boldsymbol{x}_0 + \sigma_r \boldsymbol{z}$ and estimate the solution at $t$ from $r$ as
\begin{equation}
\label{eq:guess_r2t}
    \hat{\boldsymbol{x}}_t=\boldsymbol{x}_r+(t-r)\boldsymbol{f}_{\theta^-}(\boldsymbol{x}_r,r,t),
\end{equation}
which we term \textit{secant distillation by estimating the end point (SDEE)}. The loss from Eq.~\eqref{eq:secant_ideal} then becomes
\begin{equation}
\label{eq:loss_sdee}
    \mathcal{L}_\textrm{SDEE}(\theta)=\mathbb{E}_{\boldsymbol{x}_0, \boldsymbol{z}, t, s, r\sim \mathcal{U}(t,s)}\Vert \boldsymbol{f}_\theta (\boldsymbol{x}_r+(t-r)\boldsymbol{f}_{\theta^-}(\boldsymbol{x}_r, r, t),t,s) - \boldsymbol{v}(\boldsymbol{x}_r,r)\Vert_2^2.
\end{equation}
In the above equation, since we directly use the sampled $\boldsymbol{x}_r$ to evaluate $\boldsymbol{v}(\boldsymbol{x}_r,r)$, we can alternatively use $\alpha_r' \boldsymbol{x}_0 + \sigma_r' \boldsymbol{z}$ to estimate $\boldsymbol{v}(\boldsymbol{x}_r,r)$ like diffusion models do in Eq.~\eqref{eq:diffusion} to Eq.~\eqref{eq:tangent}. This leads to the training version termed as \textit{secant training by estimating the end point (STEE)}:
\begin{equation}
\label{eq:loss_stee}
    \mathcal{L}_\textrm{STEE}(\theta)=\mathbb{E}_{\boldsymbol{x}_0, \boldsymbol{z}, t, s, r\sim \mathcal{U}(t,s)}\Vert \boldsymbol{f}_\theta (\boldsymbol{x}_r+(t-r)\boldsymbol{f}_{\theta^-}(\boldsymbol{x}_r, r, t),t,s) - (\alpha_r' \boldsymbol{x}_0+\sigma_r' \boldsymbol{z})\Vert_2^2.
\end{equation}

As theoretical justification for the above losses, we present the following theorems, which corresponds to $\mathcal{L}_\textrm{SDEI}(\theta)$ and $\mathcal{L}_\textrm{STEE}(\theta)$. The other two can be derived from these.

\begin{theorem}[SDEI]\label{thm:sdei_loss} 
Let $\boldsymbol{f}_\theta(\boldsymbol{x}_t, t, s)$ be a neural network, and $\boldsymbol{v}(\boldsymbol{x}_t,t)=\mathbb{E}(\alpha_t'\boldsymbol{x}_0+\sigma_t'\boldsymbol{z}|\boldsymbol{x}_t)$. Assume $\boldsymbol{v}(\boldsymbol{x}_t,t)$ is $L$-Lipschitz continuous in its first argument, \textit{i.e.}, $\|\boldsymbol{v}(\boldsymbol{x}_1,t) - \boldsymbol{v}(\boldsymbol{x}_2,t)\|_2 \leq L\|\boldsymbol{x}_1 - \boldsymbol{x}_2\|_2$ for all $\boldsymbol{x}_1, \boldsymbol{x}_2 \in \mathbb{R}^n$, $t \in [0,1]$. Then, for each fixed $t$, in a sufficient small neighborhood $|s - t| \leq h$ for some $h>0$, if $\mathcal{L}_\textrm{SDEI}(\theta)$ reaches its minimum, we have $\boldsymbol{f}_\theta(\boldsymbol{x}_t,t,s)=\boldsymbol{f}(\boldsymbol{x}_t,t,s)$.
\end{theorem}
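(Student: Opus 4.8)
The idea is to identify the pointwise minimizer of $\mathcal{L}_\textrm{SDEI}$, observe that its optimality condition is exactly the Picard integral equation of the PF-ODE, and then conclude by uniqueness of ODE solutions. First I would fix $t$ and a nearby $s$, and---assuming $\boldsymbol{f}_\theta$ has enough capacity and the sampling of $(\boldsymbol{x}_0,\boldsymbol{z},s)$ gives $\boldsymbol{x}_t$ full support and covers $|s-t|\le h$---reduce the minimization of the expectation to the minimization, for each triple $(\boldsymbol{x}_t,t,s)$, of $\mathbb{E}_{r\sim\mathcal{U}(t,s)}\|\boldsymbol{f}_\theta(\boldsymbol{x}_t,t,s)-\boldsymbol{v}(\hat{\boldsymbol{x}}_r,r)\|_2^2$, where $\hat{\boldsymbol{x}}_r=\boldsymbol{x}_t+(r-t)\boldsymbol{f}_{\theta^-}(\boldsymbol{x}_t,t,r)$. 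Crucially, because $\boldsymbol{v}$ here is the \emph{deterministic} field $\mathbb{E}(\alpha_t'\boldsymbol{x}_0+\sigma_t'\boldsymbol{z}\mid\boldsymbol{x}_t)$, the target $\boldsymbol{v}(\hat{\boldsymbol{x}}_r,r)$ depends on $(\boldsymbol{x}_0,\boldsymbol{z})$ only through $\boldsymbol{x}_t$, so conditioned on $(\boldsymbol{x}_t,t,s)$ the sole remaining randomness is $r$. Since $\mathbb{E}\|Y-c\|_2^2$ is uniquely minimized over constants $c$ at $c=\mathbb{E}Y$, at the minimum
\[
\boldsymbol{f}_\theta(\boldsymbol{x}_t,t,s)=\mathbb{E}_{r\sim\mathcal{U}(t,s)}\,\boldsymbol{v}\big(\boldsymbol{x}_t+(r-t)\boldsymbol{f}_{\theta^-}(\boldsymbol{x}_t,t,r),\,r\big)=\frac{1}{s-t}\int_t^s \boldsymbol{v}\big(\boldsymbol{x}_t+(r-t)\boldsymbol{f}_{\theta^-}(\boldsymbol{x}_t,t,r),\,r\big)\,dr,
\]
and at a fixed point of the alternating update in Algorithm~\ref{alg:sdei} we have $\theta^-=\theta$, turning this into a self-consistent functional equation for $\boldsymbol{f}_\theta(\boldsymbol{x}_t,t,\cdot)$.

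Next I would change variables to remove the division. With $t$ and $\boldsymbol{x}_t$ fixed, set $\boldsymbol{y}(r):=\boldsymbol{x}_t+(r-t)\boldsymbol{f}_\theta(\boldsymbol{x}_t,t,r)$ for $|r-t|\le h$; this is continuous (including at $r=t$, where $\boldsymbol{y}(t)=\boldsymbol{x}_t$), either by continuity of $\boldsymbol{f}_\theta$ in its last argument or directly from the integral formula above as in Eq.~\eqref{eq:sec_lim}. Multiplying the fixed-point equation by $(s-t)$ and adding $\boldsymbol{x}_t$ yields
\[
\boldsymbol{y}(s)=\boldsymbol{x}_t+\int_t^s \boldsymbol{v}(\boldsymbol{y}(r),r)\,dr,
\]
which is precisely the integral form of the initial value problem $\tfrac{d}{dr}\boldsymbol{y}(r)=\boldsymbol{v}(\boldsymbol{y}(r),r)$, $\boldsymbol{y}(t)=\boldsymbol{x}_t$, i.e. the PF-ODE Eq.~\eqref{eq:pf_ode} started at $\boldsymbol{x}_t$. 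Since $\boldsymbol{v}(\cdot,r)$ is $L$-Lipschitz uniformly in $r$, the Picard--Lindel\"of theorem gives a unique continuous solution of this problem on $|s-t|\le h$. The PF-ODE trajectory $r\mapsto\boldsymbol{x}_r$ through $\boldsymbol{x}_t$ at time $t$ solves the same problem, so by uniqueness $\boldsymbol{y}(s)=\boldsymbol{x}_s$ on that neighborhood; hence for $s\ne t$, $\boldsymbol{f}_\theta(\boldsymbol{x}_t,t,s)=(\boldsymbol{y}(s)-\boldsymbol{x}_t)/(s-t)=(\boldsymbol{x}_s-\boldsymbol{x}_t)/(s-t)=\frac{1}{s-t}\int_t^s\boldsymbol{v}(\boldsymbol{x}_r,r)\,dr=\boldsymbol{f}(\boldsymbol{x}_t,t,s)$, and the case $s=t$ follows by continuity together with Eq.~\eqref{eq:sec_lim}.

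The main obstacle is not a hard estimate but pinning down the setup. Two points deserve care: (i) the reduction of the expected loss to a pointwise conditional-mean problem requires making the implicit full-support / universal-approximation assumptions explicit, and using that the teacher field is a genuine function of $(\boldsymbol{x}_t,t)$ so no posterior over $(\boldsymbol{x}_0,\boldsymbol{z})$ enters the target; and (ii) one must interpret ``$\mathcal{L}_\textrm{SDEI}$ reaches its minimum'' as a self-consistent optimum with $\theta^-=\theta$, since otherwise the optimality condition merely couples two distinct networks rather than being an autonomous integral equation. Once these are settled, the remaining work---recognizing the optimality condition as the Picard form of the PF-ODE and invoking ODE uniqueness---is routine, and it is here that the $L$-Lipschitz hypothesis and the restriction to a small $h$ are used (with global Lipschitz one could in fact take $h$ to span all of $[0,1]$, but the local statement is the safe one).
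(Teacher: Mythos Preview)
Your proposal is correct and shares its first step with the paper: both reduce minimizing $\mathcal{L}_\textrm{SDEI}$ to the self-consistent integral equation
\[
\boldsymbol{f}_\theta(\boldsymbol{x}_t,t,s)=\frac{1}{s-t}\int_t^s \boldsymbol{v}\big(\boldsymbol{x}_t+(r-t)\boldsymbol{f}_\theta(\boldsymbol{x}_t,t,r),r\big)\,dr,
\]
you by the conditional-mean characterization of the MSE minimizer, the paper by expanding the square in $r$ to show $\nabla_\theta\mathcal{L}_\textrm{SDEI}=\nabla_\theta\mathcal{L}_\star$ with $\mathcal{L}_\star$ the squared distance to the right-hand side above; these are the same computation.

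The second step genuinely differs. The paper stays at the level of secant functions and runs an explicit Picard contraction, proving $\sup_{s}\|\boldsymbol{f}_{n+1}-\boldsymbol{f}\|_2^2\le \tfrac{1}{3}Lh\,\sup_{s}\|\boldsymbol{f}_n-\boldsymbol{f}\|_2^2$, which yields the concrete threshold $h<3/L$ and exhibits the Banach fixed-point structure directly on the map $\boldsymbol{g}\mapsto\frac{1}{s-t}\int_t^s\boldsymbol{v}(\boldsymbol{x}_t+(r-t)\boldsymbol{g},r)\,dr$. You instead substitute $\boldsymbol{y}(s)=\boldsymbol{x}_t+(s-t)\boldsymbol{f}_\theta(\boldsymbol{x}_t,t,s)$, recognize the fixed-point equation as the integral form of the PF-ODE, and invoke Picard--Lindel\"of as a black box. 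Your route is cleaner and more conceptual, and it makes transparent why global Lipschitz would in fact remove the smallness restriction on $h$; the paper's route is self-contained, gives an explicit $h$, and makes the connection to the iterative training update ($\theta^-\leftarrow\theta$) more visible as a contraction. Your explicit treatment of the $\theta^-=\theta$ fixed-point interpretation and of continuity of $\boldsymbol{y}$ at $s=t$ are points the paper leaves implicit.
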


\begin{theorem}[STEE]\label{thm:stee_loss}
Let $\boldsymbol{f}_\theta(\boldsymbol{x}_t, t, s)$ be a neural network, and $\boldsymbol{v}(\boldsymbol{x}_t,t)=\mathbb{E}(\alpha_t'\boldsymbol{x}_0+\sigma_t'\boldsymbol{z}|\boldsymbol{x}_t)$. Assume $\boldsymbol{f}_\theta(\boldsymbol{x}_t,t,s)$ is $L$-Lipschitz continuous in its first argument, i.e., $\|\boldsymbol{f}_\theta(\boldsymbol{x}_1,t,s) - \boldsymbol{f}_\theta(x_2,t,s)\|_2 \leq L\|\boldsymbol{x}_1 - \boldsymbol{x}_2\|_2$ for all $\boldsymbol{x}_1, \boldsymbol{x}_2 \in \mathbb{R}^n$, $t, s \in [0,1]$. Then, for each fixed $[a,b]\subseteq[0,1]$ with $b-a$ sufficiently small, if $\mathcal{L}_\textrm{STEE}(\theta)$ reaches its minimum, we have 
$\boldsymbol{f}_\theta(\boldsymbol{x}_t,t,s)=\boldsymbol{f}(\boldsymbol{x}_t,t,s)$ for any $[t,s]\subseteq[a,b]$.
\end{theorem}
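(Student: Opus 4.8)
The plan is to pin down the global minimizer of $\mathcal{L}_\textrm{STEE}$ and show it must equal the true secant $\boldsymbol{f}$ on any short sub‑interval. First I would reduce the objective. At a minimizer the \texttt{stop\_gradient} copy is self‑consistent, $\theta^-=\theta$, so $\hat{\boldsymbol{x}}_t=\boldsymbol{x}_r+(t-r)\boldsymbol{f}_\theta(\boldsymbol{x}_r,r,t)$ is a deterministic function of $(\boldsymbol{x}_r,r,t)$. Conditioning on $\boldsymbol{x}_r$ and using the defining identity $\mathbb{E}[\alpha_r'\boldsymbol{x}_0+\sigma_r'\boldsymbol{z}\mid\boldsymbol{x}_r]=\boldsymbol{v}(\boldsymbol{x}_r,r)$ from Proposition~\ref{prop:difusion_optimal}, a bias--variance split gives, up to an additive $\theta$‑independent constant,
\[
\mathcal{L}_\textrm{STEE}(\theta)=\mathbb{E}_{\boldsymbol{x}_0,\boldsymbol{z},t,s,r}\bigl\|\boldsymbol{f}_\theta(\hat{\boldsymbol{x}}_t,t,s)-\boldsymbol{v}(\boldsymbol{x}_r,r)\bigr\|_2^2 ,
\]
so it suffices to analyze the minimizer of this regression‑type objective.

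Next I would extract a fixed‑point equation. Assuming the network is expressive enough to realize the pointwise optimum for a.e.\ input, the minimizer sets $\boldsymbol{f}_\theta(\boldsymbol{y},t,s)$ to the conditional mean of $\boldsymbol{v}(\boldsymbol{x}_r,r)$ given $\hat{\boldsymbol{x}}_t=\boldsymbol{y}$ and $r\sim\mathcal{U}(t,s)$. Here the Lipschitz hypothesis is essential: for $b-a$ small enough that $(b-a)L<1$, the map $\phi_r^\theta(\boldsymbol{x}):=\boldsymbol{x}+(t-r)\boldsymbol{f}_\theta(\boldsymbol{x},r,t)$ is injective for all $r,t\in[a,b]$ — if $\phi_r^\theta(\boldsymbol{x})=\phi_r^\theta(\boldsymbol{x}')$ then $\|\boldsymbol{x}-\boldsymbol{x}'\|=|t-r|\,\|\boldsymbol{f}_\theta(\boldsymbol{x},r,t)-\boldsymbol{f}_\theta(\boldsymbol{x}',r,t)\|\le(b-a)L\|\boldsymbol{x}-\boldsymbol{x}'\|$ forces $\boldsymbol{x}=\boldsymbol{x}'$ — so $\boldsymbol{x}_r=(\phi_r^\theta)^{-1}(\boldsymbol{y})$ is recovered from $\hat{\boldsymbol{x}}_t$. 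Restricting to inputs $\boldsymbol{y}$ on the noised‑image manifold and invoking the continuity (Liouville) equation, which makes the Jacobian reweighting along a true trajectory constant in $r$, the conditional mean collapses to
\[
\boldsymbol{f}_\theta(\boldsymbol{y},t,s)=\frac{1}{s-t}\int_t^s\boldsymbol{v}\bigl((\phi_r^\theta)^{-1}(\boldsymbol{y}),r\bigr)\,dr .
\]

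Then I would prove uniqueness of this fixed point by a Gr\"onwall estimate. One checks that $\boldsymbol{f}$ itself solves the equation: $(\phi_r^{\boldsymbol{f}})^{-1}(\boldsymbol{x}_t)=\boldsymbol{x}_r$ is exactly the ODE flow, so the right‑hand side becomes $\tfrac{1}{s-t}\int_t^s\boldsymbol{v}(\boldsymbol{x}_r,r)\,dr=\boldsymbol{f}(\boldsymbol{x}_t,t,s)$. Writing $E=\sup\{\|\boldsymbol{f}_\theta(\boldsymbol{y},r,t)-\boldsymbol{f}(\boldsymbol{y},r,t)\|:r,t\in[a,b],\ \boldsymbol{y}\text{ admissible}\}$, the implicit relations defining the two inverse maps give $\|(\phi_r^\theta)^{-1}(\boldsymbol{y})-(\phi_r^{\boldsymbol{f}})^{-1}(\boldsymbol{y})\|=O((b-a)E)$, hence (using a Lipschitz bound on $\boldsymbol{v}$, valid under the same mild regularity that makes $\boldsymbol{v}$ and the true secant Lipschitz) $\|\boldsymbol{f}_\theta(\boldsymbol{y},t,s)-\boldsymbol{f}(\boldsymbol{y},t,s)\|\le C(b-a)E$ for all $[t,s]\subseteq[a,b]$; taking the supremum yields $E\le C(b-a)E$, so $E=0$ once $C(b-a)<1$, i.e.\ $\boldsymbol{f}_\theta\equiv\boldsymbol{f}$ on $[a,b]$. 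The boundary value $\boldsymbol{f}_\theta(\boldsymbol{y},t,t)=\boldsymbol{v}(\boldsymbol{y},t)$ follows by continuity, consistently with Eq.~\eqref{eq:sec_lim}.

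The main obstacle is the second step. In the SDEI setting the regression input $\boldsymbol{x}_t$ is sampled directly, the conditional law of $r$ stays uniform under conditioning, and the analogous implicit equation $\boldsymbol{f}_\theta(\boldsymbol{x}_t,t,s)=\tfrac1{s-t}\int_t^s\boldsymbol{v}(\boldsymbol{x}_t+(r-t)\boldsymbol{f}_\theta(\boldsymbol{x}_t,t,r),r)\,dr$ closes by the same Gr\"onwall argument with no need to invert anything; but in STEE the input $\hat{\boldsymbol{x}}_t$ is itself a $\theta$‑ and $r$‑dependent push‑forward of the sampled point, so characterizing the minimizer requires both the injectivity estimate above and control of the disintegration of $r$ given $\hat{\boldsymbol{x}}_t$ — which is precisely where the transported‑density (continuity equation) argument is needed. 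Making that disintegration step fully rigorous, rather than under a positive‑density/genericity assumption on the noised‑image marginals, is the delicate part; everything else is bookkeeping with the Lipschitz constants and the expressiveness assumption on the network.
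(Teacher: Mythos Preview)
Your first step (reducing $\mathcal{L}_\textrm{STEE}$ to a regression against $\boldsymbol{v}(\boldsymbol{x}_r,r)$ by conditioning on $\boldsymbol{x}_r$) matches the paper. The gap is in your second step, and you have essentially diagnosed it yourself: the claim that ``the Jacobian reweighting along a true trajectory is constant in $r$'' by the continuity equation is circular. The continuity equation tells you that the \emph{true} flow $\boldsymbol{x}_r\mapsto\boldsymbol{x}_t$ pushes $p_r$ to $p_t$, so the conditional law of $r$ given $\hat{\boldsymbol{x}}_t=\boldsymbol{y}$ would collapse to uniform \emph{if} $\phi_r^\theta$ were the true ODE flow, i.e.\ if $\boldsymbol{f}_\theta=\boldsymbol{f}$ already. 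Before that, $(\phi_r^\theta)^{-1}(\boldsymbol{y})$ is not on the true trajectory through $\boldsymbol{y}$, the density $p_r$ at that point bears no special relation to $p_t(\boldsymbol{y})$, and the disintegration of $r$ is genuinely $\theta$-dependent. So the displayed fixed-point equation $\boldsymbol{f}_\theta(\boldsymbol{y},t,s)=\tfrac{1}{s-t}\int_t^s\boldsymbol{v}((\phi_r^\theta)^{-1}(\boldsymbol{y}),r)\,dr$ is not what the minimizer satisfies; the actual first-order condition carries an unknown $\theta$-dependent weight inside the integral, and your Gr\"onwall bound does not close.

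The paper avoids the disintegration problem entirely. After the same first reduction it never tries to compute the conditional mean at a point $\boldsymbol{y}$; instead it argues variationally that at the minimum of the reduced loss one must have, along each sampled trajectory $\{\boldsymbol{x}_r\}_{r\in[t,s]}$,
\[
\int_t^s \boldsymbol{f}_\theta\bigl(\boldsymbol{x}_r+(t-r)\boldsymbol{f}_{\theta}(\boldsymbol{x}_r,r,t),\,t,s\bigr)\,dr \;=\;\int_t^s \boldsymbol{v}(\boldsymbol{x}_r,r)\,dr\;=\;(s-t)\,\boldsymbol{f}(\boldsymbol{x}_t,t,s),
\]
by showing that otherwise an $r$-constant shift of the integrand strictly decreases the objective. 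This integral identity, not a pointwise conditional-mean identity, is what feeds the Picard/contraction step. The contraction then uses only the Lipschitz hypothesis on $\boldsymbol{f}_\theta$ in its first argument (via $\|\boldsymbol{f}_\theta(\boldsymbol{x}_t,\cdot)-\boldsymbol{f}_\theta(\hat{\boldsymbol{x}}_t,\cdot)\|\le L\|\boldsymbol{x}_t-\hat{\boldsymbol{x}}_t\|$ and $\boldsymbol{x}_t-\hat{\boldsymbol{x}}_t=(t-r)(\boldsymbol{f}-\boldsymbol{f}_\theta)(\boldsymbol{x}_r,r,t)$), yielding a factor $\tfrac{1}{3}L(b-a)^2$. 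Note also that your step~3 invokes a Lipschitz bound on $\boldsymbol{v}$, which is \emph{not} among the hypotheses of this theorem (it is the hypothesis of the SDEI theorem); the paper's argument needs only the stated Lipschitz continuity of $\boldsymbol{f}_\theta$.
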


\begin{remark}
\label{rmk:intuitive}
There are notable differences between variants that estimate the interior or end point. As illustrated in Fig.~\ref{fig:traj} (a), the estimation direction of $\hat{\boldsymbol{x}}_r$ aligns with the direction from $\boldsymbol{x}_t$ to $\boldsymbol{x}_s$. Consequently, Theorem~\ref{thm:sdei_loss} states that we can fix $t$ and progressively move $s$ further away. In contrast, for variant (b), the direction between $\hat{\boldsymbol{x}}_t$ estimation and secant learning must be opposite. This requires sampling time pairs where $t$ and $s$ can appear in any order for accurate estimation. As a result, Theorem~\ref{thm:stee_loss} shows that increasing the distance $|s-t|$ is accompanied by expanding the interval where $t$ and $s$ are randomly sampled rather than simply moving $s$. In short, $t$ and $s$ play symmetric roles in scenario (b), whereas their roles can be asymmetric in scenario (a). 
\end{remark}

The proofs of the above theorems rely on properties of conditional expectation and techniques related to the Picard-Lindel\"of theorem, specifically by constructing a convergent sequence and applying the Banach fixed-point theorem. Detailed proofs of the total four losses are provided in Appendix~\ref{sec:proof}. We note that while the proofs only guarantee local accuracy, the expansion to larger time intervals is achieved through a bootstrapping process. We provide illustrations of using $\mathcal{L}_\textrm{SDEI}$ (Eq.~\eqref{eq:loss_sdei}) for distillation and $\mathcal{L}_\textrm{STEE}$ (Eq.~\eqref{eq:loss_stee}) for training in Algorithm~\ref{alg:sdei} and Algorithm~\ref{alg:stee}, respectively. And we illustrate the other two in Appendix~\ref{sec:training_algo}.

\textbf{Comparison with consistency models.} To achieve the same objective of fitting the average velocity (the secant), consistency models~\cite{song2023consistency} (MeanFlow~\cite{geng2025mean}) uses differentiation to construct its loss, whereas our method uses integration. With identical model parameterization, the two methods can be viewed as differential and integral counterparts. The above fundamental difference leads to the following distinctive features of secant losses: i) \textit{Local accuracy}. Consistency models rely on either difference-based approximations of derivatives or explicit derivative terms in their loss functions, which can lead to training instability. In contrast, our loss functions do not involve explicit derivatives, and the solution is accurate when $s$ is near to $t$. 

\begin{wrapfigure}[18]{r}{0.35\linewidth}
    \setlength{\abovecaptionskip}{-3pt}
    \centering
    \includegraphics[width=\linewidth]{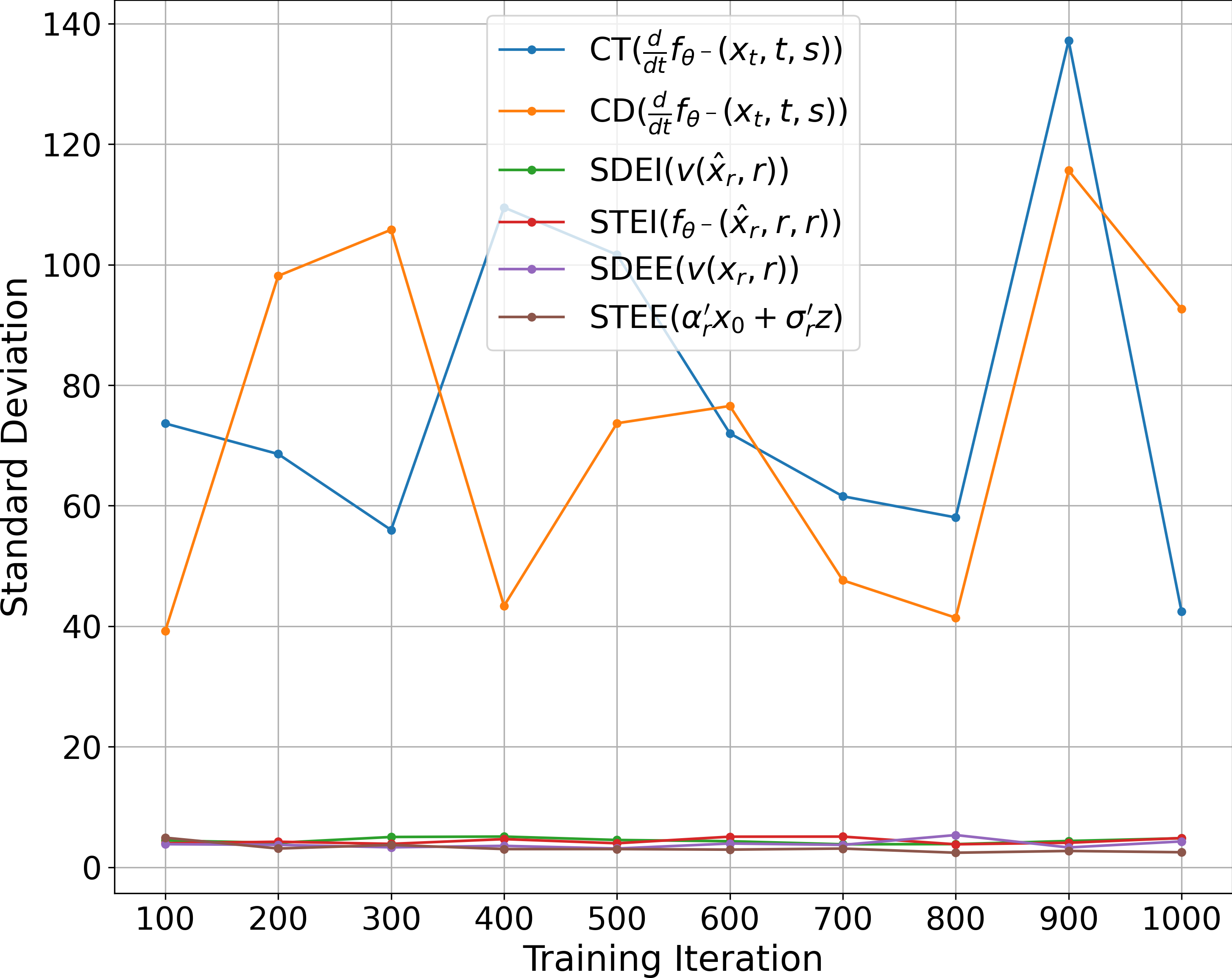}
    \caption{Standard deviation of inspected terms over training iterations. The label format follows \textit{loss type (inspected term)}. The target stability of secant losses is significantly better than that of consistency losses.}
    \label{fig:target_stability}
\end{wrapfigure}

ii) \textit{Target stability}. We refer to the stability of the prediction target in the loss function as target stability. Under the flow matching interpolant and model parametrization Eq.~\eqref{eq:secant_definition}, we compare diffusion loss Eq.~\ref{eq:diffusion}, secant losses and consistency loss~\cite{song2023consistency,kim2023consistency,geng2025mean}
\begin{equation}
    \mathcal{L}_\text{CT}(\theta)=\|\boldsymbol{f}_\theta(\boldsymbol{x}_t,t,s) - (\alpha_t'\boldsymbol{x}_0 + \sigma_t'\boldsymbol{z} + (s - t)\frac{d}{dt}\boldsymbol{f}_{\theta^-}(\boldsymbol{x}_t,t,s))\|_2
\end{equation}
and
\begin{equation}
    \mathcal{L}_\text{CD}(\theta)=\|\boldsymbol{f}_\theta(\boldsymbol{x}_t,t,s) - (\boldsymbol{v}(\boldsymbol{x}_t,t) + (s - t)\frac{d}{dt}\boldsymbol{f}_{\theta^-}(\boldsymbol{x}_t,t,s))\|_2,
\end{equation}
where CT and CD stand for consistency training and consistency distillation, respectively. One can see that in consistency losses, the item $\frac{d}{dt}\boldsymbol{f}_{\theta^-}(\boldsymbol{x}_t,t,s)$ (or its discrete version $\frac{\boldsymbol{f}_{\theta^-}(\boldsymbol{x}_t,t,s) - \boldsymbol{f}_{\theta^-}(\boldsymbol{x}_{t-\Delta t},t,s)}{\Delta t}$) is model-dependent and susceptible to numerical issues, which contributes to training instability. In stark contrast, the target of secant losses is either identical to the target of diffusion losses ($\alpha_t'\boldsymbol{x}_0+\sigma_t'\boldsymbol{z}$), or the diffusion model itself ($\boldsymbol{v}(\boldsymbol{x}_t,t)$). We illustrate the standard deviation of different losses across training iterations in Fig.~\ref{fig:target_stability}. This intrinsic target stability provides a compelling explanation for the robust training performance observed with secant losses.

\subsection{Practical Choices}
Our proposed method is designed for simplicity and robustness, mirroring the design of standard diffusion models. In general it uses: i) A simple time sampling strategy (following diffusion models), ii) a standard loss weighting (following diffusion models), iii) a simple Mean Squared Error (MSE) loss (following diffusion models), iv) a stable loss target discussed in Section~\ref{subsec:secant_expectation}. The strong parallels between our method and standard diffusion models provide a powerful cue: if a standard diffusion model loss works well on a given dataset, it is highly likely our secant losses will too. Here we will discuss more practical designs in application.

\begin{table}[t]\scriptsize
    \centering
    \begin{minipage}[ht]{0.36\textwidth}
        \tabcaption{Cost comparison among diffusion loss and secant losses.
        }
    \label{tab:cost}
    \centering
    \renewcommand{\arraystretch}{1.1}
    \addtolength{\tabcolsep}{0pt}
    \begin{tabular}{@{}lcccccc@{}}
    \toprule
        Loss & Teacher & \#Forward & \#Backward \\
        \midrule
        $\mathcal{L}_\textrm{Diff}$ & \ding{55} & 1 & 1 \\
        $\mathcal{L}_\textrm{SDEI}$ & \ding{51} & 3 & 1 \\
        $\mathcal{L}_\textrm{STEI}$ & \ding{55} & 4 & 2 \\
        $\mathcal{L}_\textrm{SDEE}$ & \ding{51} & 3 & 1 \\
        $\mathcal{L}_\textrm{STEE}$ & \ding{55} & 2 & 1 \\
    \bottomrule
    \end{tabular}
    \end{minipage}%
    \hfill
    \begin{minipage}[ht]{0.62\textwidth}
        \centering
        \includegraphics[width=\linewidth]{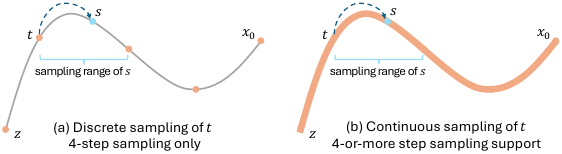}
        \figcaption{Discrete vs. continuous sampling of $t$ when estimating the interior point.}
        \label{fig:t_sampling}
    \end{minipage}
\end{table}

\noindent\textbf{Diffusion initialization.} As explained in Remark~\ref{rmk:intuitive}, we can use a pretrained diffusion model as the initialization such that $\boldsymbol{f}_\theta(\boldsymbol{x}_t,t,t)=\boldsymbol{v}(\boldsymbol{x}_t,t)$, which largely accelerates the learning process.

\noindent\textbf{Application of classifier-free guidance (CFG)~\cite{ho2022classifier}.} For all the secant losses except $\mathcal{L}_\textrm{STEE}$, we can embed CFG as an additional input into the model~\cite{meng2023distillation}. Specifically, in the loss functions we substitute $\boldsymbol{v}(\boldsymbol{x}_t,t)$ with 
\begin{equation}
\label{eq:embeded_cfg}
    \boldsymbol{v}^g(\boldsymbol{x}_t,t)=\boldsymbol{v}^u(\boldsymbol{x}_t,t)+w(\boldsymbol{v}^c(\boldsymbol{x}_t,t)-\boldsymbol{v}^u(\boldsymbol{x}_t,t)),
\end{equation}
where $w$ is the guidance scale, and the superscripts $g$, $u$ and $c$ stand for the model with guidance, the unconditional model and the conditional model, respectively. 
In contrast, the treatment to $\mathcal{L}_\textrm{STEE}$ is similar to training diffusion models. We can train unconditional and conditional models by randomly dropping the class label, and apply CFG at inference via
\begin{equation}
\label{eq:seperate_cfg}
    \boldsymbol{f}_\theta^g (\boldsymbol{x}_t,t,s)=\boldsymbol{f}_\theta^u (\boldsymbol{x}_t,t,s)+w(\boldsymbol{f}_\theta^c (\boldsymbol{x}_t,t,s)-\boldsymbol{f}_\theta^u (\boldsymbol{x}_t,t,s)).
\end{equation}

\noindent\textbf{Step intervals at each sampling step.} Adjusting the step intervals may lead to better performance~\cite{karras2022elucidating,watson2021learning}. While for simplicity, we always use uniform sampling steps. Specifically, if the number of steps is $N$, then the steps $(t,s)$ are $(\frac{N-i}{N},\frac{N-i-1}{N})$, $i=0,...,N-1$. We provide details of sampling in Appendix~\ref{sec:inference_algo}.

\noindent\textbf{The sampling of $t$ and $s$ when estimating the interior point.} Theorem~\ref{thm:sdei_loss} states that we can fix $t$ and randomly sample $s$. This enables few-step inference at a fixed step count $N$ by setting $t\in\{1,\frac{N-1}{N},...,\frac{i}{N},...,\frac{1}{N},0\}$ in training. When $t=\frac{i}{N}$, we randomly sample $s\in[\frac{i-1}{N},\frac{i}{N}]$. However, this approach constrains us to using exactly $N$ steps during inference. Figure~\ref{fig:t_sampling} (a) illustrates this with an example where $N=4$. If we attempt to use more NFEs, such as $2N$, we cannot determine the value of $\boldsymbol{f}_\theta(\boldsymbol{x}_{\frac{2N-1}{2N}},\frac{2N-1}{2N},\frac{2N-2}{2N})$ at the second step, as the model was not trained with $t=\frac{2N-1}{2N}$. Similarly, with fewer NFEs, such as $\frac{N}{2}$, we cannot compute $\boldsymbol{f}_\theta(\boldsymbol{x}_1,1,\frac{N-2}{N})$ because the case where $t=1$ and $s=\frac{N-2}{N}$ was not included in training. To enable a flexible step-accuracy trade-off in $\mathcal{L}_\textrm{SDEI}$, we should continuously sample $t$, as shown in Figure~\ref{fig:t_sampling} (b). This approach allows inference with any number of NFEs greater than $N$. Furthermore, if the model is required to perform inversion from images back to Gaussian noise, we should incorporate cases of $s<t$ during training. The sampling strategy of $t$ and $s$ can be chosen according to specific application requirements, due to the limited model capacity. However, for variants that estimate the end point, the sampling of $t$ and $s$ must be both continuous and bidirectional.

\noindent\textbf{The sampling of $r$.} In Eq.~\eqref{eq:expectation_convert}, we can alter the sampling distribution of $r$ by

\begin{equation}
\label{eq:importance_sampling}
\begin{split}
    \boldsymbol{f}(\boldsymbol{x}_t, t, s)&=\int_t^s \boldsymbol{v}(\boldsymbol{x}_r,r)p_{\mathcal{U}(t,s)}(r)dr\\
    &=\int_t^s \boldsymbol{v}(\boldsymbol{x}_r,r)\frac{p_{\mathcal{U}(t,s)}(r)}{q(r)}q(r)dr\\
    &=\mathbb{E}_{r\sim q(r)}\boldsymbol{v}(\boldsymbol{x}_r,r)\frac{p_{\mathcal{U}(t,s)}(r)}{q(r)},
\end{split}
\end{equation}
where $q(r)$ is another probability density function that has positive density value within the limits of integration. This means we can unevenly sample $r$ based on the importance, \textit{i.e.}, how we allocate the training effort across different time intervals. We note that while importance sampling can reduce variance, it is not the focus of our paper. The method used in this paper to reduce variance is to employ a stable target.

\noindent\textbf{Resource costs.} As the loss formulations suggest, compared to the diffusion loss $\mathcal{L}_\textrm{Diff}$, our proposed loss functions require additional computational resources for forward evaluation and/or backward propagation. We summarize the computational costs in Table~\ref{tab:cost}, and test the practical usage in Appendix~\ref{subsec:training_efficiency}.

\section{Experiments}
\label{sec:experiments}

\begin{table}[t]
\centering
\begin{minipage}{0.35\textwidth}
\centering
\small
\caption{Unconditional image generation on CIFAR-$10$.}
\label{tab:cifar10}
\addtolength{\tabcolsep}{-3.5pt}
\begin{tabular}{lcc}
\toprule
Method & FID$\downarrow$ & Steps$\downarrow$ \\
\midrule
\multicolumn{3}{l}{\textbf{Diffusion}} \\
\midrule
DDPM~\cite{ho2020denoising} & 3.17 & 1000 \\
Score SDE (deep)~\cite{song2020score} & 2.20 & 2000 \\
EDM~\cite{karras2022elucidating} \textbf{(Teacher)} & 1.97 & 35 \\
Flow Matching~\cite{lipman2022flow} & 6.35 & 142 \\
Rectified Flow~\cite{liu2022flow} & 2.58 & 127 \\
\midrule
\multicolumn{3}{l}{\textbf{Fast Samplers}} \\
\midrule
DPM-Solver~\cite{lu2022dpm} & 4.70 & 10 \\
DPM-Solver++~\cite{lu2022dpmpp} & 2.91 & 10 \\
DPM-Solver-v3~\cite{zheng2023dpm} & 2.51 & 10 \\
DEIS~\cite{zhang2022fast} & 4.17 & 10 \\
UniPC~\cite{zhao2023unipc} & 3.87 & 10 \\
LD3~\cite{tong2024learning} & 2.38 & 10 \\
\midrule
\multicolumn{3}{l}{\textbf{Joint Training}} \\
\midrule
Diff-Instruct~\cite{luo2023diff} & 4.53 & 1 \\
DMD~\cite{yin2024one} & 3.77 & 1 \\
CTM~\cite{kim2023consistency} & 1.87 & 2 \\
SiD~\cite{zhou2024score} & 1.92 & 1 \\
SiD$^2$A~\cite{zhou2024adversarial} & 1.5 & 1 \\
SiM~\cite{luo2024one} & 2.06 & 1 \\
\midrule
\multicolumn{3}{l}{\textbf{Few-step Distillation}} \\
\midrule
KD~\cite{luhman2021knowledge} & 9.36 & 1 \\
PD~\cite{salimans2022progressive} & 4.51 & 2 \\
DFNO~\cite{zheng2023fast} & 3.78 & 1 \\
2-Rectified Flow~\cite{liu2022flow} & 4.85 & 1 \\
TRACT~\cite{berthelot2023tract} & 3.32 & 2 \\
PID~\cite{tee2024physics} & 3.92 & 1 \\
CD~\cite{song2023consistency} & 2.93 & 2 \\
sCD~\cite{lu2024simplifying} & 2.52 & 2 \\
\midrule
\multicolumn{3}{l}{\textbf{Few-step Training/Tuning}} \\
\midrule
iCT~\cite{song2023improved} & 2.46 & 2 \\
ECT~\cite{geng2024consistency} & 2.11 & 2 \\
sCT~\cite{lu2024simplifying} & 2.06 & 2 \\
IMM~\cite{zhou2025inductive} & 1.98 & 2 \\
\midrule
\textbf{SDEI} (Ours) & 3.23 & 4 \\
  & 2.14 & 10 \\
\bottomrule
\end{tabular}
\end{minipage}
\hfill
\begin{minipage}{0.6\textwidth}
\centering
\small
\caption{Class-conditional results on ImageNet-$256\times 256$.}
\label{tab:imagenet}
\addtolength{\tabcolsep}{-3pt}
\begin{tabular}{lcccc}
\toprule
Method & FID$\downarrow$ & IS$\uparrow$ & Steps$\downarrow$ & \#Params \\
\midrule
\multicolumn{3}{l}{\textbf{Diffusion}} \\
\midrule
ADM~\cite{dhariwal2021diffusion} & 10.94 & 100.98 & 250 & 554M \\
CDM~\cite{ho2022cascaded} & 4.88 & 158.71 & 8100 & - \\
SimDiff~\cite{hoogeboom2023simple} & 2.77 & 211.8 & 512 & 2B \\
LDM-4~\cite{rombach2022high} & 3.60 & 247.67 & 250 & 400M \\
U-DiT-L~\cite{tian2024u} & 3.37 & 246.03 & 250 & 916M \\
U-ViT-H~\cite{bao2023all} & 2.29 & 263.88 & 50 & 501M \\
DiT-XL/2~\cite{peebles2023scalable} & 2.27 & 278.24 & 250 & 675M \\
SiT-XL/2~\cite{ma2024sit} \textbf{(Teacher)} & 2.15 & 258.09 & 250 & 675M \\
\midrule
\multicolumn{3}{l}{\textbf{Fast Samplers}} \\
\midrule
Flow-DPM-Solver~\cite{xie2024sana,lu2022dpmpp} & 3.76 & 241.18 & 8 & 675M \\
UniPC~\cite{zhao2023unipc} & 7.51 & - & 10 & - \\
LD3~\cite{tong2024learning} & 4.32 & - & 7 & - \\
\midrule
\multicolumn{3}{l}{\textbf{GAN}} \\
\midrule
BigGAN~\cite{brock2018large} & 6.95 & 171.4 & 1 & 112M \\
GigaGAN~\cite{kang2023scaling} & 3.45 & 225.52 & 1 & 590M \\
StyleGAN-XL~\cite{sauer2022stylegan} & 2.30 & 265.12 & 1 & 166M \\
\midrule
\multicolumn{3}{l}{\textbf{Masked and AR}} \\
\midrule
VQGAN~\cite{esser2021taming} & 15.78 & 78.3 & 1024 & 227M \\
MaskGIT~\cite{chang2022maskgit} & 6.18 & 182.1 & 8 & 227M \\
MAR-H~\cite{li2024autoregressive} & 1.55 & 303.7 & 256 & 943M \\
VAR-d30~\cite{tian2024visual} & 1.97 & 334.7 & 10 & 2B \\
\midrule
\multicolumn{3}{l}{\textbf{Few-step Training/Tuning}} \\
\midrule
iCT~\cite{song2023improved} & 20.3 & - & 2 & 675M \\
Shortcut Models~\cite{frans2024one} & 7.80 & - & 4 & 675M \\
IMM (XL/2)~\cite{zhou2025inductive} & 7.77 & - & 1 & 675M \\
    & 3.99 & - & 2 & 675M \\
    & 2.51 & - & 4 & 675M \\
    & 1.99 & - & 8 & 675M \\
\midrule
\textbf{STEI} (Ours) & 7.12 & 241.75  & 1 & 675M \\
    & 4.41 & 242.00 & 2 & 675M \\
    & 2.78 & 269.87  & 4 & 675M \\
~~~~~~~~~+guidance interval~\cite{kynkaanniemi2024applying} & 2.27 & 273.76  & 4 & 675M \\
 & 2.36 & 247.72 & 8 & 675M \\
~~~~~~~~~+guidance interval~\cite{kynkaanniemi2024applying} & 1.96 & 276.12 & 8 & 675M \\
\textbf{STEE} (Ours) & 3.02 & 274.00 & 4 & 675M \\
~~~~~~~~~+guidance interval~\cite{kynkaanniemi2024applying} & 2.55 & 275.83 & 4 & 675M \\
 & 2.33 & 274.47 & 8 & 675M \\
~~~~~~~~~+guidance interval~\cite{kynkaanniemi2024applying} & 1.96 & 275.81 & 8 & 675M \\
\bottomrule
\end{tabular}
\end{minipage}
\end{table}
In this section, we first compare our method with other related approaches. Subsequently, we conduct ablation studies to validate the design choices of the proposed loss functions.

\subsection{Implementation Details}
We conduct experiments on common used image generation datasets including CIFAR-$10$~\cite{krizhevsky2009learning} and ImageNet-$256\times256$~\cite{deng2009imagenet}. For quantitative evaluation, we employ the Fr\'echet Inception Distance (FID)~\cite{heusel2017gans} score for both datasets, with additional Inception Score (IS) metric for ImageNet-$256\times 256$. There are UNet-based EDM~\cite{karras2022elucidating} and transformer-based DiT~\cite{peebles2023scalable} models involved in the experiments, and we load the pretrained weights of EDM and SiT~\cite{ma2024sit} for the two models respectively. More details can be found in Appendix~\ref{sec:experiment_setting}.

\subsection{Main Results}

\noindent\textbf{Overall Comparison.} We conduct a comprehensive comparison among various few-step diffusion approaches, including fast samplers, few-step distillation, and few-step training/fine-tuning methods. The results are summarized in Tables~\ref{tab:cifar10} and~\ref{tab:imagenet}. On CIFAR-$10$, our SDEI variant achieves intermediate performance between faster samplers and few-step distillation/training methods. Although consistency models~\cite{song2023consistency,geng2024consistency,kim2023consistency,lu2024simplifying} demonstrate superior performance on this dataset, we observe training instability issues. More notably, when scaling to ImageNet-$256\times 256$ with greater data variation, these models diverge in our experiments\footnote{Due to training divergence of consistency models, we cite the iCT~\cite{song2023improved} result reported in~\cite{zhou2025inductive}.}. In contrast, our proposed loss functions exhibit consistent stability and rapid convergence, enabling efficient model distillation or fine-tuning that requires merely $1\%$ of the original SiT training duration ($50$K $\sim$ $100$K versus $7$M iterations). On the larger-scale ImageNet-$256\times 256$ dataset, our approach consistently outperforms fast samplers and ranks second only to the recent IMM~\cite{zhou2025inductive} in both $4$-step and $8$-step settings. Notably, for one-step generation, STEI even surpasses both IMM and $4$-step Shortcut Models~\cite{frans2024one}. By incorporating interval CFG~\cite{kynkaanniemi2024applying}, our method further achieves a $4$-step FID of $2.27$ and an $8$-step FID of $1.96$, where we simply ignore the guidance for the first step in $4$-step generation and the first two steps in $8$-step generation. Fig.~\ref{fig:selected_imagenet} showcases generated samples using $8$ steps on ImageNet-$256\times 256$ with our best-performing model. While both effective, our method differs from IMM: ours is grounded in the exact solution of the PF-ODE (therefore the performance of our models is bounded by the teacher model SiT\cite{ma2024sit}), whereas IMM relies on distribution-level losses~\cite{li2015generative}. For classifier-free guidance, IMM follows Eq.~\eqref{eq:seperate_cfg}, while STEI applies the integral-form CFG following Eq.~\eqref{eq:embeded_cfg}. This difference may explain our superior performance in one-step generation.

\begin{table}[!t]
    \centering
    \begin{minipage}[ht]{0.39\textwidth}
        \tabcaption{The effect of the weighting factor $\lambda$ in $\mathcal{L}_\textrm{STEI}$ on ImageNet-$256\times 256$ with $4$ sampling steps.}
        \label{tab:stei_loss}
        \centering
        \addtolength{\tabcolsep}{12pt}
        \begin{tabular}{@{}lcccc@{}}
            \toprule
            $\lambda$ & FID$\downarrow$ & IS$\uparrow$\\
            \midrule
            0.1 & 3.96 & 206.14 \\
            0.5 & 3.15 & 232.15 \\
            1.0 & 2.84 & 249.57 \\
            2.0 & 3.96 & 213.56 \\
            \bottomrule
        \end{tabular}
    \end{minipage}
    \hfill
    \begin{minipage}[ht]{0.59\textwidth}
        \centering
        \includegraphics[width=\linewidth]{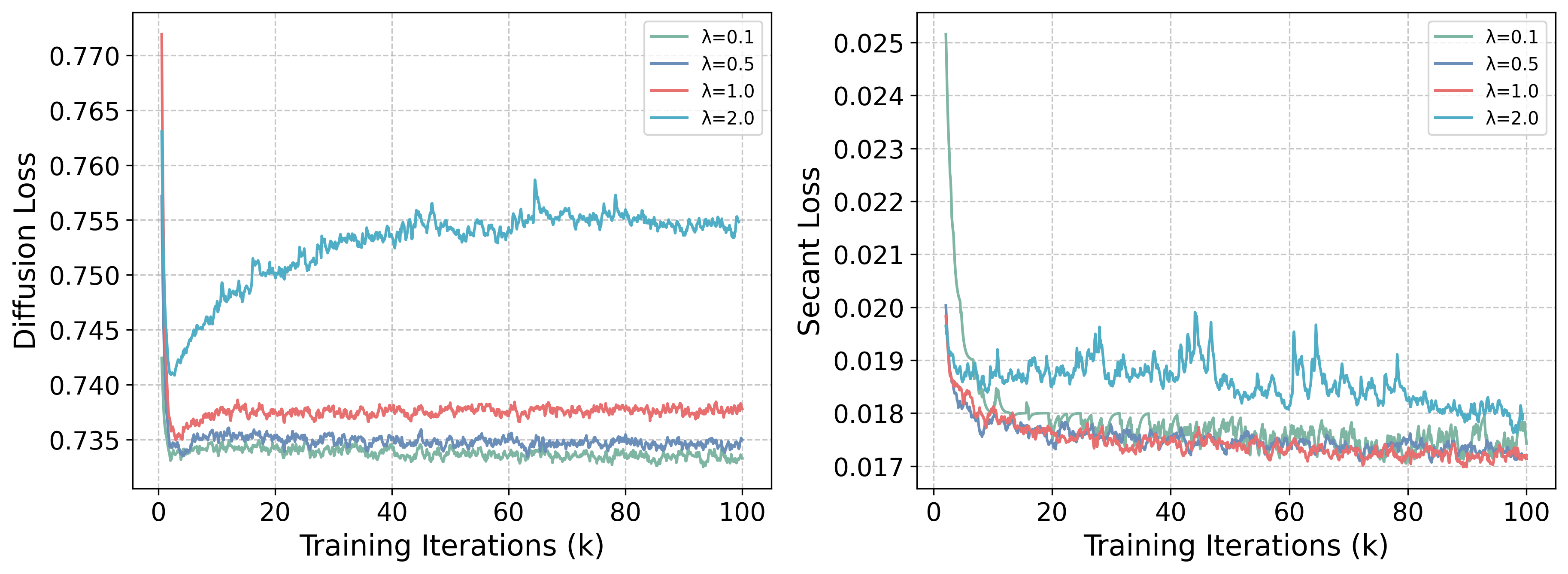}
        \figcaption{The effect of $\lambda$ in $\mathcal{L}_\textrm{STEI}$ on training dynamics.}
        \label{fig:stei_losses}
    \end{minipage}%
\end{table}

\noindent\textbf{Comparison of secant losses.} 
We evaluate the four types of secant losses proposed in Section~\ref{sec:secant} on CIFAR-$10$ and ImageNet-$256\times 256$ datasets. The result can be found at Table~\ref{tab:four_losses} in Appendix~\ref{sec:add_results}. In the absence of classifier-free guidance, similar trends emerge across both datasets. The distillation variants generally demonstrate superior performance compared to the training counterparts, with the exception of STEI on ImageNet-$256\times 256$. Furthermore, variants with interior-point estimation perform better and degrade more slowly as the step number decreases, which can be explained by three aspects: i) Input correctness. For inner-point variants, the input to the training network is the clean data $\boldsymbol{x}_t$, while in end-point variants, the input is the estimated $\hat{\boldsymbol{x}}_r$. Fitting a fixed, clean input distribution, is more efficient than fitting an input dependent of the model historical output. ii) Error accumulation path, \textit{i.e.}, the path from clean data to the prediction destination. In inner-point variants, the path is $\boldsymbol{x}_t\to \boldsymbol{x}_s$, while in end-point variants it is $\boldsymbol{x}_r \to \boldsymbol{x}_t \to \boldsymbol{x}_s$. Generally, the path in end-point variants is longer than that in inner-point variants. iii) Model capacity. As per Fig.~\ref{fig:traj} and Algorithm~\ref{alg:stee}, the end-point variants additionally require inversion, which costs part of the model capacity. With classifier guidance applied, the performance gap narrows on ImageNet-$256\times 256$, with the STEE variant achieving the best $8$-step FID of $2.33$. Based on the computational costs presented in Table~\ref{tab:cost}, we recommend using $\mathcal{L}_\textrm{SDEI}$ for distillation and $\mathcal{L}_\textrm{STEE}$ for training. These variants offer optimal performance while minimizing both computational time and GPU memory requirements compared to other methods.

\subsection{Ablations}
\noindent\textbf{The loss weighting in $\mathcal{L}_\textrm{STEI}$.} We investigate the balancing factor $\lambda$ in $\mathcal{L}_\textrm{STEI}$, which comprises the diffusion loss and the secant loss. Figure~\ref{fig:stei_losses} illustrates the training dynamics of both losses (excluding the $\lambda$ factor in the secant loss). When $\lambda$ increases from $0.1$ to $1.0$, we observe a reduction in the secant loss while maintaining relatively stable diffusion loss levels. However, at $\lambda=2$, the model exhibits significant deterioration in diffusion loss, accompanied by increased instability in the secant loss. The quantitative results presented in Table~\ref{tab:stei_loss} confirm that $\lambda=1$ achieves the optimal balance, yielding the best overall performance.

\begin{wrapfigure}[10]{r}{0.35\linewidth}
    \vspace{-18pt}
    \tabcaption{The effect of $t,s$ sampling on CIFAR-$10$ with $4$ steps.}
        \label{tab:sdei_capacity}
        \centering
        \renewcommand{\arraystretch}{1.2}
        \addtolength{\tabcolsep}{-1pt}
        \begin{tabular}{@{}lcccccc@{}}
    \toprule
        Gen. & Inv. & $t$ sampling & FID$\downarrow$ \\
        \midrule
        \ding{51} & \ding{55} & discrete & 3.23 \\
        \ding{51} & \ding{51} & discrete & 3.63 \\
        \ding{51} & \ding{55} & continuous & 4.29 \\
        \ding{51} & \ding{51} & continuous & 5.47 \\
    \bottomrule
    \end{tabular}
\end{wrapfigure}
\noindent\textbf{The sampling of $t$ and $s$ in interior-point estimation.}
As shown in Fig.~\ref{fig:t_sampling}, $\mathcal{L}_\textrm{SDEI}$ and $\mathcal{L}_\textrm{STEI}$ offer flexibility in sampling the time point $t$ and $s$ during training. Given the fixed model capacity, different sampling strategies lead to varying performance characteristics. For fixed $N$-step generation, discrete sampling suffices; to enable step-performance trade-off, continuous sampling of $t$ becomes necessary; to incorporate the capability of inversion, bidirectional sampling ($t<s$ and $t>s$) is required. The results in Table~\ref{tab:sdei_capacity} demonstrate that the best performance is achieved with the discrete sampling and generation-only configuration (first row), and each additional functionality comes at the cost of decreased performance.

\begin{wrapfigure}[11]{r}{0.4\linewidth}
    \vspace{-6pt}
        \tabcaption{$4$-step results of different $r$ distributions on ImageNet-$256\times 256$ with $\mathcal{L}_\textrm{SDEI}$. We first sample $\tilde{r}$, and obtain $r=t+(s-t)\tilde{r}$.
        }
        \label{tab:r_dist}
        \centering
        \begin{tabular}{@{}lcccc@{}}
            \toprule
            $\tilde{r}$ distribution & FID$\downarrow$ & IS$\uparrow$ \\
            \midrule
            $\mathcal{U}(0,1)$ & 3.57 & 222.83\\
            $\mathcal{TN}(0,0.5,0,1)$ & 3.84 & 215.57 \\ 
            $\mathcal{TN}(0.5,0.5,0,1)$ & 3.59 & 221.44 \\
            $\mathcal{TN}(1,0.5,0,1)$ & 3.71 & 216.93 \\
            \bottomrule
        \end{tabular}
\end{wrapfigure}
\noindent\textbf{The sampling of $r$.} Following Eq.~\eqref{eq:importance_sampling}, we know that we can use alternative distributions $q(r)$ to sample $r$. Here we investigate sampling strategies for $r$. The cases include standard uniform sampling and biased sampling schemes that favor positions closer to $t$, $s$ or the midpoint $\frac{s+t}{2}$. These biased sampling strategies are implemented using truncated normal distributions, denoted as $\mathcal{TN}(\mu,\sigma,a,b)$ where $\mu$ represents the mean, $\sigma$ the standard deviation, and $[a,b]$ the truncated interval. Details are provided in Appendix~\ref{sec:sampling_time_training}. As the results in Table~\ref{tab:r_dist} indicates, all sampling strategies works comparable, and the uniform strategy works slightly better than others.

\begin{wrapfigure}[12]{r}{0.32\linewidth}
    \vspace{-14pt}
    \setlength{\abovecaptionskip}{-10pt}
    \centering
    \includegraphics[width=\linewidth]{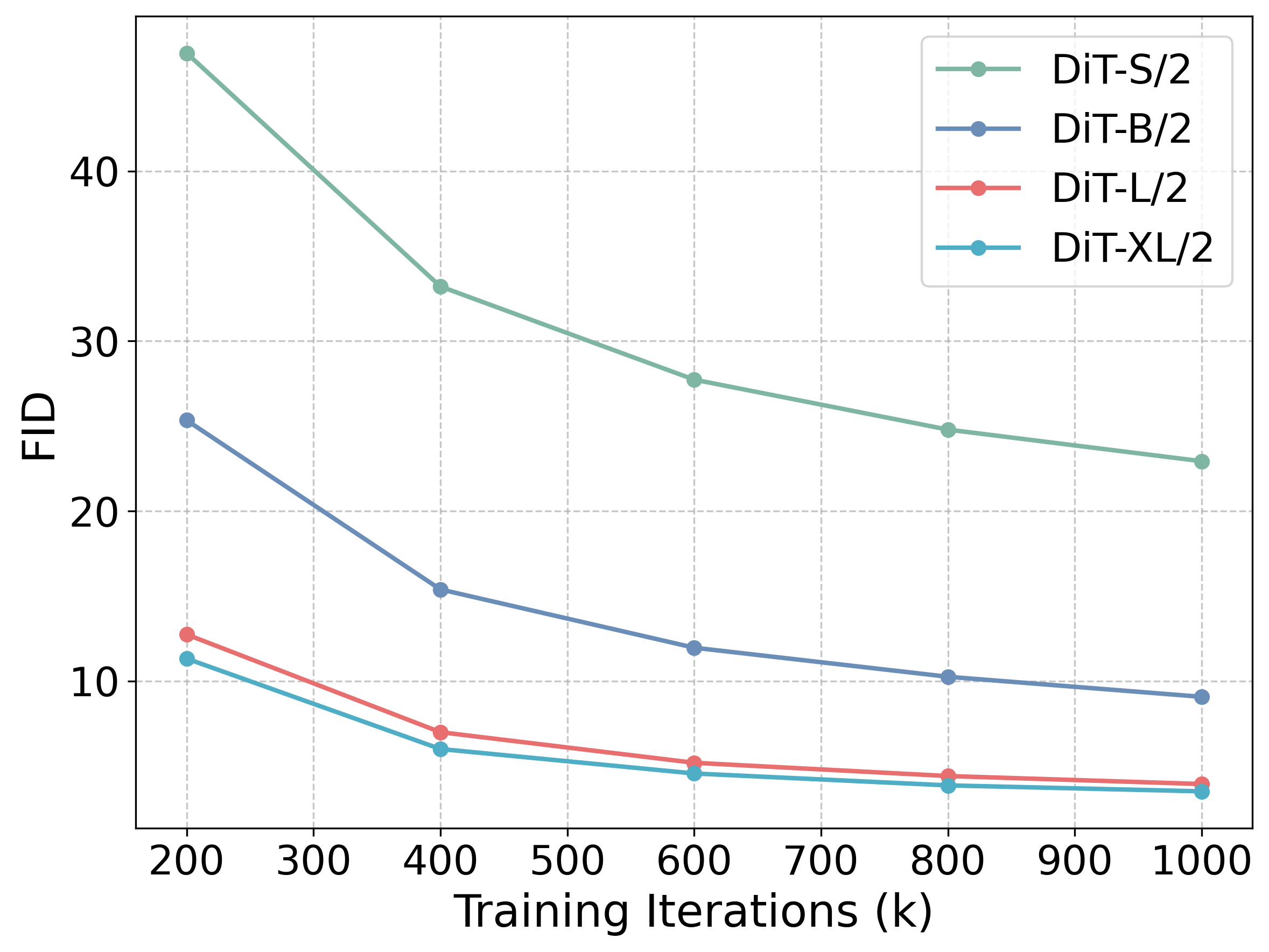}
    \caption{Results of training from scratch on ImageNet-$256\times 256$. The guidance scale is $1.5$.}
    \label{fig:fromscratch}
\end{wrapfigure}
\noindent\textbf{Training from scratch.} We evaluate the scalability and from-scratch training capabilities of the secant loss using $\mathcal{L}_\textrm{STEE}$ as the representative loss function. We conduct experiments with varying model capacities, including DiT-S/2, DiT-B/2, DiT-L/2 and DiT-XL/2. Figure~\ref{fig:fromscratch} presents the evolution of FID scores during the first $1000$K training iterations, with a guidance scale of $1.5$ used at sampling phase. The results clearly show a correlation between model capacity and generation quality, with larger architectures consistently achieving superior performance. After extending the training to $3000$K iterations, our DiT-XL/2 model achieves an 8-step FID of $2.68$, which is slightly higher than the fine-tuning FID. This is expected, given that the training period of the teacher model SiT~\cite{ma2024sit} ($7000$K iterations) is significantly longer than our training duration.

\section{Limitations}
\label{sec:limitation}
While our method can stably and rapidly convert a diffusion model into a few-step generator, there is still a significant performance gap between 1-step and 8-step generation. Additionally, the performance on ImageNet $256\times 256$ relies heavily on classifier-free guidance, and the theoretical relationship between CFG and secant losses may be explored in future work. Lastly, our method requires training data, which may present constraints in data-limited scenarios.

\section{Conclusion}
\label{sec:conclusion}
In this paper, we introduce a novel family of loss functions, termed secant losses, that efficiently learn to integrate diffusion ODEs through Monte Carlo integration and Picard iteration. Our proposed losses operate by estimating and averaging tangents to learn the corresponding secants. We present both theoretical and intuitive interpretations of these secant losses, and empirically demonstrate their robustness and efficiency on CIFAR-$10$ and ImageNet $256\times 256$ datasets.

\section*{Acknowledgments}
This work is partially supported by the National Natural Science Foundation of China (Grant No. 62306261), and The Shun Hing Institute of Advanced Engineering (SHIAE) Grant (No. 8115074). This study was supported in part by the Centre for Perceptual and Interactive Intelligence, a CUHK-led InnoCentre under the InnoHK initiative of the Innovation and Technology Commission of the Hong Kong Special Administrative Region Government.

\bibliographystyle{unsrt}
\bibliography{egbib}

\begin{thebibliography}{10}

\bibitem{sohl2015deep}
Jascha Sohl-Dickstein, Eric Weiss, Niru Maheswaranathan, and Surya Ganguli.
\newblock Deep unsupervised learning using nonequilibrium thermodynamics.
\newblock In {\em International conference on machine learning}, pages 2256--2265. pmlr, 2015.

\bibitem{ho2020denoising}
Jonathan Ho, Ajay Jain, and Pieter Abbeel.
\newblock Denoising diffusion probabilistic models.
\newblock {\em Advances in neural information processing systems}, 33:6840--6851, 2020.

\bibitem{song2019generative}
Yang Song and Stefano Ermon.
\newblock Generative modeling by estimating gradients of the data distribution.
\newblock {\em Advances in neural information processing systems}, 32, 2019.

\bibitem{song2020score}
Yang Song, Jascha Sohl-Dickstein, Diederik~P Kingma, Abhishek Kumar, Stefano Ermon, and Ben Poole.
\newblock Score-based generative modeling through stochastic differential equations.
\newblock {\em arXiv preprint arXiv:2011.13456}, 2020.

\bibitem{dhariwal2021diffusion}
Prafulla Dhariwal and Alexander Nichol.
\newblock Diffusion models beat gans on image synthesis.
\newblock {\em Advances in neural information processing systems}, 34:8780--8794, 2021.

\bibitem{karras2022elucidating}
Tero Karras, Miika Aittala, Timo Aila, and Samuli Laine.
\newblock Elucidating the design space of diffusion-based generative models.
\newblock {\em Advances in neural information processing systems}, 35:26565--26577, 2022.

\bibitem{peebles2023scalable}
William Peebles and Saining Xie.
\newblock Scalable diffusion models with transformers.
\newblock In {\em Proceedings of the IEEE/CVF international conference on computer vision}, pages 4195--4205, 2023.

\bibitem{rombach2022high}
Robin Rombach, Andreas Blattmann, Dominik Lorenz, Patrick Esser, and Bj{\"o}rn Ommer.
\newblock High-resolution image synthesis with latent diffusion models.
\newblock In {\em Proceedings of the IEEE/CVF conference on computer vision and pattern recognition}, pages 10684--10695, 2022.

\bibitem{blattmann2023align}
Andreas Blattmann, Robin Rombach, Huan Ling, Tim Dockhorn, Seung~Wook Kim, Sanja Fidler, and Karsten Kreis.
\newblock Align your latents: High-resolution video synthesis with latent diffusion models.
\newblock In {\em Proceedings of the IEEE/CVF conference on computer vision and pattern recognition}, pages 22563--22575, 2023.

\bibitem{ho2022video}
Jonathan Ho, Tim Salimans, Alexey Gritsenko, William Chan, Mohammad Norouzi, and David~J Fleet.
\newblock Video diffusion models.
\newblock {\em Advances in Neural Information Processing Systems}, 35:8633--8646, 2022.

\bibitem{blattmann2023stable}
Andreas Blattmann, Tim Dockhorn, Sumith Kulal, Daniel Mendelevitch, Maciej Kilian, Dominik Lorenz, Yam Levi, Zion English, Vikram Voleti, Adam Letts, et~al.
\newblock Stable video diffusion: Scaling latent video diffusion models to large datasets.
\newblock {\em arXiv preprint arXiv:2311.15127}, 2023.

\bibitem{kong2020diffwave}
Zhifeng Kong, Wei Ping, Jiaji Huang, Kexin Zhao, and Bryan Catanzaro.
\newblock Diffwave: A versatile diffusion model for audio synthesis.
\newblock {\em arXiv preprint arXiv:2009.09761}, 2020.

\bibitem{huang2023make}
Rongjie Huang, Jiawei Huang, Dongchao Yang, Yi~Ren, Luping Liu, Mingze Li, Zhenhui Ye, Jinglin Liu, Xiang Yin, and Zhou Zhao.
\newblock Make-an-audio: Text-to-audio generation with prompt-enhanced diffusion models.
\newblock In {\em International Conference on Machine Learning}, pages 13916--13932. PMLR, 2023.

\bibitem{liu2023audioldm}
Haohe Liu, Zehua Chen, Yi~Yuan, Xinhao Mei, Xubo Liu, Danilo Mandic, Wenwu Wang, and Mark~D Plumbley.
\newblock Audioldm: Text-to-audio generation with latent diffusion models.
\newblock {\em arXiv preprint arXiv:2301.12503}, 2023.

\bibitem{lu2022dpm}
Cheng Lu, Yuhao Zhou, Fan Bao, Jianfei Chen, Chongxuan Li, and Jun Zhu.
\newblock Dpm-solver: A fast ode solver for diffusion probabilistic model sampling in around 10 steps.
\newblock {\em Advances in Neural Information Processing Systems}, 35:5775--5787, 2022.

\bibitem{lu2022dpmpp}
Cheng Lu, Yuhao Zhou, Fan Bao, Jianfei Chen, Chongxuan Li, and Jun Zhu.
\newblock Dpm-solver++: Fast solver for guided sampling of diffusion probabilistic models.
\newblock {\em arXiv preprint arXiv:2211.01095}, 2022.

\bibitem{zheng2023dpm}
Kaiwen Zheng, Cheng Lu, Jianfei Chen, and Jun Zhu.
\newblock Dpm-solver-v3: Improved diffusion ode solver with empirical model statistics.
\newblock {\em Advances in Neural Information Processing Systems}, 36:55502--55542, 2023.

\bibitem{watson2021learning}
Daniel Watson, William Chan, Jonathan Ho, and Mohammad Norouzi.
\newblock Learning fast samplers for diffusion models by differentiating through sample quality.
\newblock In {\em International Conference on Learning Representations}, 2021.

\bibitem{luhman2021knowledge}
Eric Luhman and Troy Luhman.
\newblock Knowledge distillation in iterative generative models for improved sampling speed.
\newblock {\em arXiv preprint arXiv:2101.02388}, 2021.

\bibitem{zheng2023fast}
Hongkai Zheng, Weili Nie, Arash Vahdat, Kamyar Azizzadenesheli, and Anima Anandkumar.
\newblock Fast sampling of diffusion models via operator learning.
\newblock In {\em International conference on machine learning}, pages 42390--42402. PMLR, 2023.

\bibitem{salimans2022progressive}
Tim Salimans and Jonathan Ho.
\newblock Progressive distillation for fast sampling of diffusion models.
\newblock {\em arXiv preprint arXiv:2202.00512}, 2022.

\bibitem{meng2023distillation}
Chenlin Meng, Robin Rombach, Ruiqi Gao, Diederik Kingma, Stefano Ermon, Jonathan Ho, and Tim Salimans.
\newblock On distillation of guided diffusion models.
\newblock In {\em Proceedings of the IEEE/CVF Conference on Computer Vision and Pattern Recognition}, pages 14297--14306, 2023.

\bibitem{sauer2024adversarial}
Axel Sauer, Dominik Lorenz, Andreas Blattmann, and Robin Rombach.
\newblock Adversarial diffusion distillation.
\newblock In {\em European Conference on Computer Vision}, pages 87--103. Springer, 2024.

\bibitem{wang2022diffusion}
Zhendong Wang, Huangjie Zheng, Pengcheng He, Weizhu Chen, and Mingyuan Zhou.
\newblock Diffusion-gan: Training gans with diffusion.
\newblock {\em arXiv preprint arXiv:2206.02262}, 2022.

\bibitem{sauer2024fast}
Axel Sauer, Frederic Boesel, Tim Dockhorn, Andreas Blattmann, Patrick Esser, and Robin Rombach.
\newblock Fast high-resolution image synthesis with latent adversarial diffusion distillation.
\newblock In {\em SIGGRAPH Asia 2024 Conference Papers}, pages 1--11, 2024.

\bibitem{wang2023prolificdreamer}
Zhengyi Wang, Cheng Lu, Yikai Wang, Fan Bao, Chongxuan Li, Hang Su, and Jun Zhu.
\newblock Prolificdreamer: High-fidelity and diverse text-to-3d generation with variational score distillation.
\newblock {\em Advances in Neural Information Processing Systems}, 36:8406--8441, 2023.

\bibitem{yin2024one}
Tianwei Yin, Micha{\"e}l Gharbi, Richard Zhang, Eli Shechtman, Fredo Durand, William~T Freeman, and Taesung Park.
\newblock One-step diffusion with distribution matching distillation.
\newblock In {\em Proceedings of the IEEE/CVF conference on computer vision and pattern recognition}, pages 6613--6623, 2024.

\bibitem{yin2025improved}
Tianwei Yin, Micha{\"e}l Gharbi, Taesung Park, Richard Zhang, Eli Shechtman, Fredo Durand, and Bill Freeman.
\newblock Improved distribution matching distillation for fast image synthesis.
\newblock {\em Advances in Neural Information Processing Systems}, 37:47455--47487, 2025.

\bibitem{luo2023diff}
Weijian Luo, Tianyang Hu, Shifeng Zhang, Jiacheng Sun, Zhenguo Li, and Zhihua Zhang.
\newblock Diff-instruct: A universal approach for transferring knowledge from pre-trained diffusion models.
\newblock {\em Advances in Neural Information Processing Systems}, 36:76525--76546, 2023.

\bibitem{luo2024diff}
Weijian Luo, Colin Zhang, Debing Zhang, and Zhengyang Geng.
\newblock Diff-instruct*: Towards human-preferred one-step text-to-image generative models.
\newblock {\em arXiv preprint arXiv:2410.20898}, 2024.

\bibitem{xie2024distillation}
Sirui Xie, Zhisheng Xiao, Diederik~P Kingma, Tingbo Hou, Ying~Nian Wu, Kevin~Patrick Murphy, Tim Salimans, Ben Poole, and Ruiqi Gao.
\newblock Em distillation for one-step diffusion models.
\newblock {\em arXiv preprint arXiv:2405.16852}, 2024.

\bibitem{salimans2025multistep}
Tim Salimans, Thomas Mensink, Jonathan Heek, and Emiel Hoogeboom.
\newblock Multistep distillation of diffusion models via moment matching.
\newblock {\em Advances in Neural Information Processing Systems}, 37:36046--36070, 2025.

\bibitem{song2023consistency}
Yang Song, Prafulla Dhariwal, Mark Chen, and Ilya Sutskever.
\newblock Consistency models.
\newblock {\em arXiv preprint arXiv:2303.01469}, 2023.

\bibitem{kim2023consistency}
Dongjun Kim, Chieh-Hsin Lai, Wei-Hsiang Liao, Naoki Murata, Yuhta Takida, Toshimitsu Uesaka, Yutong He, Yuki Mitsufuji, and Stefano Ermon.
\newblock Consistency trajectory models: Learning probability flow ode trajectory of diffusion.
\newblock {\em arXiv preprint arXiv:2310.02279}, 2023.

\bibitem{lu2024simplifying}
Cheng Lu and Yang Song.
\newblock Simplifying, stabilizing and scaling continuous-time consistency models.
\newblock {\em arXiv preprint arXiv:2410.11081}, 2024.

\bibitem{boffi2024flow}
Nicholas~M Boffi, Michael~S Albergo, and Eric Vanden-Eijnden.
\newblock Flow map matching.
\newblock {\em arXiv preprint arXiv:2406.07507}, 2024.

\bibitem{krizhevsky2009learning}
Alex Krizhevsky, Geoffrey Hinton, et~al.
\newblock Learning multiple layers of features from tiny images.
\newblock 2009.

\bibitem{deng2009imagenet}
Jia Deng, Wei Dong, Richard Socher, Li-Jia Li, Kai Li, and Li~Fei-Fei.
\newblock Imagenet: A large-scale hierarchical image database.
\newblock In {\em 2009 IEEE conference on computer vision and pattern recognition}, pages 248--255. Ieee, 2009.

\bibitem{ma2024sit}
Nanye Ma, Mark Goldstein, Michael~S Albergo, Nicholas~M Boffi, Eric Vanden-Eijnden, and Saining Xie.
\newblock Sit: Exploring flow and diffusion-based generative models with scalable interpolant transformers.
\newblock In {\em European Conference on Computer Vision}, pages 23--40. Springer, 2024.

\bibitem{kynkaanniemi2024applying}
Tuomas Kynk{\"a}{\"a}nniemi, Miika Aittala, Tero Karras, Samuli Laine, Timo Aila, and Jaakko Lehtinen.
\newblock Applying guidance in a limited interval improves sample and distribution quality in diffusion models.
\newblock {\em arXiv preprint arXiv:2404.07724}, 2024.

\bibitem{hinton2015distilling}
Geoffrey Hinton, Oriol Vinyals, and Jeff Dean.
\newblock Distilling the knowledge in a neural network.
\newblock {\em arXiv preprint arXiv:1503.02531}, 2015.

\bibitem{goodfellow2014generative}
Ian Goodfellow, Jean Pouget-Abadie, Mehdi Mirza, Bing Xu, David Warde-Farley, Sherjil Ozair, Aaron Courville, and Yoshua Bengio.
\newblock Generative adversarial nets.
\newblock {\em Advances in neural information processing systems}, 27, 2014.

\bibitem{zhou2024score}
Mingyuan Zhou, Huangjie Zheng, Zhendong Wang, Mingzhang Yin, and Hai Huang.
\newblock Score identity distillation: Exponentially fast distillation of pretrained diffusion models for one-step generation.
\newblock In {\em Forty-first International Conference on Machine Learning}, 2024.

\bibitem{huang2024flow}
Zemin Huang, Zhengyang Geng, Weijian Luo, and Guo-jun Qi.
\newblock Flow generator matching.
\newblock {\em arXiv preprint arXiv:2410.19310}, 2024.

\bibitem{geng2024consistency}
Zhengyang Geng, Ashwini Pokle, William Luo, Justin Lin, and J~Zico Kolter.
\newblock Consistency models made easy.
\newblock {\em arXiv preprint arXiv:2406.14548}, 2024.

\bibitem{frans2024one}
Kevin Frans, Danijar Hafner, Sergey Levine, and Pieter Abbeel.
\newblock One step diffusion via shortcut models.
\newblock {\em arXiv preprint arXiv:2410.12557}, 2024.

\bibitem{liu2022flow}
Xingchao Liu, Chengyue Gong, and Qiang Liu.
\newblock Flow straight and fast: Learning to generate and transfer data with rectified flow.
\newblock {\em arXiv preprint arXiv:2209.03003}, 2022.

\bibitem{liu2023instaflow}
Xingchao Liu, Xiwen Zhang, Jianzhu Ma, Jian Peng, et~al.
\newblock Instaflow: One step is enough for high-quality diffusion-based text-to-image generation.
\newblock In {\em The Twelfth International Conference on Learning Representations}, 2023.

\bibitem{song2020denoising}
Jiaming Song, Chenlin Meng, and Stefano Ermon.
\newblock Denoising diffusion implicit models.
\newblock {\em arXiv preprint arXiv:2010.02502}, 2020.

\bibitem{jolicoeur2021gotta}
Alexia Jolicoeur-Martineau, Ke~Li, R{\'e}mi Pich{\'e}-Taillefer, Tal Kachman, and Ioannis Mitliagkas.
\newblock Gotta go fast when generating data with score-based models.
\newblock {\em arXiv preprint arXiv:2105.14080}, 2021.

\bibitem{kong2021fast}
Zhifeng Kong and Wei Ping.
\newblock On fast sampling of diffusion probabilistic models.
\newblock {\em arXiv preprint arXiv:2106.00132}, 2021.

\bibitem{bao2022analytic}
Fan Bao, Chongxuan Li, Jun Zhu, and Bo~Zhang.
\newblock Analytic-dpm: an analytic estimate of the optimal reverse variance in diffusion probabilistic models.
\newblock {\em arXiv preprint arXiv:2201.06503}, 2022.

\bibitem{zhang2022gddim}
Qinsheng Zhang, Molei Tao, and Yongxin Chen.
\newblock gddim: Generalized denoising diffusion implicit models.
\newblock {\em arXiv preprint arXiv:2206.05564}, 2022.

\bibitem{zhang2022fast}
Qinsheng Zhang and Yongxin Chen.
\newblock Fast sampling of diffusion models with exponential integrator.
\newblock {\em arXiv preprint arXiv:2204.13902}, 2022.

\bibitem{zhao2023unipc}
Wenliang Zhao, Lujia Bai, Yongming Rao, Jie Zhou, and Jiwen Lu.
\newblock Unipc: A unified predictor-corrector framework for fast sampling of diffusion models.
\newblock {\em Advances in Neural Information Processing Systems}, 36:49842--49869, 2023.

\bibitem{shih2023parallel}
Andy Shih, Suneel Belkhale, Stefano Ermon, Dorsa Sadigh, and Nima Anari.
\newblock Parallel sampling of diffusion models.
\newblock {\em Advances in Neural Information Processing Systems}, 36:4263--4276, 2023.

\bibitem{tang2024accelerating}
Zhiwei Tang, Jiasheng Tang, Hao Luo, Fan Wang, and Tsung-Hui Chang.
\newblock Accelerating parallel sampling of diffusion models.
\newblock In {\em Forty-first International Conference on Machine Learning}, 2024.

\bibitem{bao2022estimating}
Fan Bao, Chongxuan Li, Jiacheng Sun, Jun Zhu, and Bo~Zhang.
\newblock Estimating the optimal covariance with imperfect mean in diffusion probabilistic models.
\newblock {\em arXiv preprint arXiv:2206.07309}, 2022.

\bibitem{dockhorn2022genie}
Tim Dockhorn, Arash Vahdat, and Karsten Kreis.
\newblock Genie: Higher-order denoising diffusion solvers.
\newblock {\em Advances in Neural Information Processing Systems}, 35:30150--30166, 2022.

\bibitem{zhou2024fast}
Zhenyu Zhou, Defang Chen, Can Wang, and Chun Chen.
\newblock Fast ode-based sampling for diffusion models in around 5 steps.
\newblock In {\em Proceedings of the IEEE/CVF Conference on Computer Vision and Pattern Recognition}, pages 7777--7786, 2024.

\bibitem{sabour2024align}
Amirmojtaba Sabour, Sanja Fidler, and Karsten Kreis.
\newblock Align your steps: Optimizing sampling schedules in diffusion models.
\newblock {\em arXiv preprint arXiv:2404.14507}, 2024.

\bibitem{chen2024trajectory}
Defang Chen, Zhenyu Zhou, Can Wang, Chunhua Shen, and Siwei Lyu.
\newblock On the trajectory regularity of ode-based diffusion sampling.
\newblock {\em arXiv preprint arXiv:2405.11326}, 2024.

\bibitem{xue2024accelerating}
Shuchen Xue, Zhaoqiang Liu, Fei Chen, Shifeng Zhang, Tianyang Hu, Enze Xie, and Zhenguo Li.
\newblock Accelerating diffusion sampling with optimized time steps.
\newblock In {\em Proceedings of the IEEE/CVF Conference on Computer Vision and Pattern Recognition}, pages 8292--8301, 2024.

\bibitem{tong2024learning}
Vinh Tong, Trung-Dung Hoang, Anji Liu, Guy Van~den Broeck, and Mathias Niepert.
\newblock Learning to discretize denoising diffusion odes.
\newblock {\em arXiv preprint arXiv:2405.15506}, 2024.

\bibitem{lipman2022flow}
Yaron Lipman, Ricky~TQ Chen, Heli Ben-Hamu, Maximilian Nickel, and Matt Le.
\newblock Flow matching for generative modeling.
\newblock {\em arXiv preprint arXiv:2210.02747}, 2022.

\bibitem{albergo2023stochastic}
Michael~S Albergo, Nicholas~M Boffi, and Eric Vanden-Eijnden.
\newblock Stochastic interpolants: A unifying framework for flows and diffusions.
\newblock {\em arXiv preprint arXiv:2303.08797}, 2023.

\bibitem{geng2025mean}
Zhengyang Geng, Mingyang Deng, Xingjian Bai, J~Zico Kolter, and Kaiming He.
\newblock Mean flows for one-step generative modeling.
\newblock {\em arXiv preprint arXiv:2505.13447}, 2025.

\bibitem{ho2022classifier}
Jonathan Ho and Tim Salimans.
\newblock Classifier-free diffusion guidance.
\newblock {\em arXiv preprint arXiv:2207.12598}, 2022.

\bibitem{zhou2024adversarial}
Mingyuan Zhou, Huangjie Zheng, Yi~Gu, Zhendong Wang, and Hai Huang.
\newblock Adversarial score identity distillation: Rapidly surpassing the teacher in one step.
\newblock {\em arXiv preprint arXiv:2410.14919}, 2024.

\bibitem{luo2024one}
Weijian Luo, Zemin Huang, Zhengyang Geng, J~Zico Kolter, and Guo-jun Qi.
\newblock One-step diffusion distillation through score implicit matching.
\newblock {\em Advances in Neural Information Processing Systems}, 37:115377--115408, 2024.

\bibitem{berthelot2023tract}
David Berthelot, Arnaud Autef, Jierui Lin, Dian~Ang Yap, Shuangfei Zhai, Siyuan Hu, Daniel Zheng, Walter Talbott, and Eric Gu.
\newblock Tract: Denoising diffusion models with transitive closure time-distillation.
\newblock {\em arXiv preprint arXiv:2303.04248}, 2023.

\bibitem{tee2024physics}
Joshua Tian~Jin Tee, Kang Zhang, Hee~Suk Yoon, Dhananjaya~Nagaraja Gowda, Chanwoo Kim, and Chang~D Yoo.
\newblock Physics informed distillation for diffusion models.
\newblock {\em arXiv preprint arXiv:2411.08378}, 2024.

\bibitem{song2023improved}
Yang Song and Prafulla Dhariwal.
\newblock Improved techniques for training consistency models.
\newblock {\em arXiv preprint arXiv:2310.14189}, 2023.

\bibitem{zhou2025inductive}
Linqi Zhou, Stefano Ermon, and Jiaming Song.
\newblock Inductive moment matching.
\newblock {\em arXiv preprint arXiv:2503.07565}, 2025.

\bibitem{ho2022cascaded}
Jonathan Ho, Chitwan Saharia, William Chan, David~J Fleet, Mohammad Norouzi, and Tim Salimans.
\newblock Cascaded diffusion models for high fidelity image generation.
\newblock {\em Journal of Machine Learning Research}, 23(47):1--33, 2022.

\bibitem{hoogeboom2023simple}
Emiel Hoogeboom, Jonathan Heek, and Tim Salimans.
\newblock simple diffusion: End-to-end diffusion for high resolution images.
\newblock In {\em International Conference on Machine Learning}, pages 13213--13232. PMLR, 2023.

\bibitem{tian2024u}
Yuchuan Tian, Zhijun Tu, Hanting Chen, Jie Hu, Chao Xu, and Yunhe Wang.
\newblock U-dits: Downsample tokens in u-shaped diffusion transformers.
\newblock {\em arXiv preprint arXiv:2405.02730}, 2024.

\bibitem{bao2023all}
Fan Bao, Shen Nie, Kaiwen Xue, Yue Cao, Chongxuan Li, Hang Su, and Jun Zhu.
\newblock All are worth words: A vit backbone for diffusion models.
\newblock In {\em Proceedings of the IEEE/CVF conference on computer vision and pattern recognition}, pages 22669--22679, 2023.

\bibitem{xie2024sana}
Enze Xie, Junsong Chen, Junyu Chen, Han Cai, Haotian Tang, Yujun Lin, Zhekai Zhang, Muyang Li, Ligeng Zhu, Yao Lu, et~al.
\newblock Sana: Efficient high-resolution image synthesis with linear diffusion transformers.
\newblock {\em arXiv preprint arXiv:2410.10629}, 2024.

\bibitem{brock2018large}
Andrew Brock, Jeff Donahue, and Karen Simonyan.
\newblock Large scale gan training for high fidelity natural image synthesis.
\newblock {\em arXiv preprint arXiv:1809.11096}, 2018.

\bibitem{kang2023scaling}
Minguk Kang, Jun-Yan Zhu, Richard Zhang, Jaesik Park, Eli Shechtman, Sylvain Paris, and Taesung Park.
\newblock Scaling up gans for text-to-image synthesis.
\newblock In {\em Proceedings of the IEEE/CVF conference on computer vision and pattern recognition}, pages 10124--10134, 2023.

\bibitem{sauer2022stylegan}
Axel Sauer, Katja Schwarz, and Andreas Geiger.
\newblock Stylegan-xl: Scaling stylegan to large diverse datasets.
\newblock In {\em ACM SIGGRAPH 2022 conference proceedings}, pages 1--10, 2022.

\bibitem{esser2021taming}
Patrick Esser, Robin Rombach, and Bjorn Ommer.
\newblock Taming transformers for high-resolution image synthesis.
\newblock In {\em Proceedings of the IEEE/CVF conference on computer vision and pattern recognition}, pages 12873--12883, 2021.

\bibitem{chang2022maskgit}
Huiwen Chang, Han Zhang, Lu~Jiang, Ce~Liu, and William~T Freeman.
\newblock Maskgit: Masked generative image transformer.
\newblock In {\em Proceedings of the IEEE/CVF conference on computer vision and pattern recognition}, pages 11315--11325, 2022.

\bibitem{li2024autoregressive}
Tianhong Li, Yonglong Tian, He~Li, Mingyang Deng, and Kaiming He.
\newblock Autoregressive image generation without vector quantization.
\newblock {\em Advances in Neural Information Processing Systems}, 37:56424--56445, 2024.

\bibitem{tian2024visual}
Keyu Tian, Yi~Jiang, Zehuan Yuan, Bingyue Peng, and Liwei Wang.
\newblock Visual autoregressive modeling: Scalable image generation via next-scale prediction.
\newblock {\em Advances in neural information processing systems}, 37:84839--84865, 2024.

\bibitem{heusel2017gans}
Martin Heusel, Hubert Ramsauer, Thomas Unterthiner, Bernhard Nessler, and Sepp Hochreiter.
\newblock Gans trained by a two time-scale update rule converge to a local nash equilibrium.
\newblock {\em Advances in neural information processing systems}, 30, 2017.

\bibitem{li2015generative}
Yujia Li, Kevin Swersky, and Rich Zemel.
\newblock Generative moment matching networks.
\newblock In {\em International conference on machine learning}, pages 1718--1727. PMLR, 2015.

\end{thebibliography}



\newpage
\appendix
\setcounter{tocdepth}{4}
\allowdisplaybreaks
\setcounter{theorem}{0}
\section*{Appendices}
\section{Proofs}
\label{sec:proof}
\begin{proposition}
When 
\begin{align*}
    \mathcal{L}_\textrm{Diff}(\theta)=\mathbb{E}_{\boldsymbol{x}_0, \boldsymbol{z}, t} \Vert \boldsymbol{v}_\theta (\boldsymbol{x}_t, t) - (\alpha_t' \boldsymbol{x}_0 + \sigma_t' \boldsymbol{z}) \Vert_2^2
\end{align*}
reaches its minimum, the optimal solution $\boldsymbol{v}_\theta^*(\boldsymbol{x}_t,t)$ is
\begin{align*}
    \boldsymbol{v}(\boldsymbol{x}_t,t)=\mathbb{E}_{\boldsymbol{x}_0,\boldsymbol{z}}(\alpha_t' \boldsymbol{x}_0 + \sigma_t' \boldsymbol{z}\vert \boldsymbol{x}_t).
\end{align*}
\end{proposition}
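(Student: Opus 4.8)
The plan is to recognize this as the classical fact that the conditional expectation is the $L^2$-minimizing predictor, applied pointwise in the conditioning variable $(\boldsymbol{x}_t,t)$. Since $t$ enters only through an outer expectation against a fixed sampling density that is positive on $[0,1]$, it suffices to fix $t$ and show that the inner expectation $\mathbb{E}_{\boldsymbol{x}_0,\boldsymbol{z}}\Vert \boldsymbol{v}_\theta(\boldsymbol{x}_t,t)-(\alpha_t'\boldsymbol{x}_0+\sigma_t'\boldsymbol{z})\Vert_2^2$ is minimized exactly when $\boldsymbol{v}_\theta(\cdot,t)$ agrees with $\boldsymbol{v}(\cdot,t)$ as a function of $\boldsymbol{x}_t$ (almost everywhere). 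Throughout I would treat $\boldsymbol{v}_\theta$ as an unconstrained measurable function, i.e. assume the network family is expressive enough (a universal-approximation assumption standard in this literature) so that "reaches its minimum" is meaningful, and assume $\boldsymbol{x}_0,\boldsymbol{z}$ have finite second moments so that all integrals and the conditional expectation are well defined.

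The key step is a tower-property decomposition. Conditioning on the random variable $\boldsymbol{x}_t=\alpha_t\boldsymbol{x}_0+\sigma_t\boldsymbol{z}$, I would write
\[
\mathbb{E}_{\boldsymbol{x}_0,\boldsymbol{z}}\Vert \boldsymbol{v}_\theta(\boldsymbol{x}_t,t)-(\alpha_t'\boldsymbol{x}_0+\sigma_t'\boldsymbol{z})\Vert_2^2
=\mathbb{E}_{\boldsymbol{x}_t}\,\mathbb{E}_{\boldsymbol{x}_0,\boldsymbol{z}}\!\left[\Vert \boldsymbol{v}_\theta(\boldsymbol{x}_t,t)-(\alpha_t'\boldsymbol{x}_0+\sigma_t'\boldsymbol{z})\Vert_2^2 \,\big|\, \boldsymbol{x}_t\right],
\]
then inside the conditional expectation insert $\pm\,\boldsymbol{v}(\boldsymbol{x}_t,t)$ with $\boldsymbol{v}(\boldsymbol{x}_t,t)=\mathbb{E}(\alpha_t'\boldsymbol{x}_0+\sigma_t'\boldsymbol{z}\mid\boldsymbol{x}_t)$ and expand the squared norm. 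The cross term is $2\big\langle \boldsymbol{v}_\theta(\boldsymbol{x}_t,t)-\boldsymbol{v}(\boldsymbol{x}_t,t),\ \mathbb{E}[(\alpha_t'\boldsymbol{x}_0+\sigma_t'\boldsymbol{z})-\boldsymbol{v}(\boldsymbol{x}_t,t)\mid\boldsymbol{x}_t]\big\rangle$, which vanishes because $\boldsymbol{v}_\theta(\boldsymbol{x}_t,t)-\boldsymbol{v}(\boldsymbol{x}_t,t)$ is $\sigma(\boldsymbol{x}_t)$-measurable and the remaining conditional expectation is zero by the defining property of $\boldsymbol{v}$.

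Collecting terms and taking the expectation over $t$ yields $\mathcal{L}_\textrm{Diff}(\theta)=\mathbb{E}_{\boldsymbol{x}_0,\boldsymbol{z},t}\Vert \boldsymbol{v}_\theta(\boldsymbol{x}_t,t)-\boldsymbol{v}(\boldsymbol{x}_t,t)\Vert_2^2+C$, where $C=\mathbb{E}_{\boldsymbol{x}_0,\boldsymbol{z},t}\Vert (\alpha_t'\boldsymbol{x}_0+\sigma_t'\boldsymbol{z})-\boldsymbol{v}(\boldsymbol{x}_t,t)\Vert_2^2$ is independent of $\theta$. The remaining term is nonnegative and vanishes if and only if $\boldsymbol{v}_\theta=\boldsymbol{v}$ almost everywhere, so the minimizer is $\boldsymbol{v}_\theta^*=\boldsymbol{v}$. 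There is no real analytic obstacle here — the whole content is the $L^2$-orthogonality of the conditional expectation; the only point requiring care is the reading of "reaches its minimum," namely that it refers to the minimizer over all measurable functions (the neural network viewed as a universal approximator) rather than over a restricted parametric family, together with the mild integrability hypotheses needed to make the conditional expectation and all the expectations finite.
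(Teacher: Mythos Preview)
Your proposal is correct and follows essentially the same approach as the paper: both condition on $\boldsymbol{x}_t$ via the tower property and then use the $L^2$-orthogonality of the conditional expectation to decompose $\mathcal{L}_{\textrm{Diff}}$ into $\mathbb{E}\Vert \boldsymbol{v}_\theta-\boldsymbol{v}\Vert_2^2$ plus a $\theta$-independent remainder. The only cosmetic difference is that the paper expands the square and completes it, whereas you insert $\pm\boldsymbol{v}$ and kill the cross term directly; the content is identical.
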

\begin{proof}
    Since
    \begin{align*}
        \mathcal{L}_\textrm{Diff}(\theta)&=\mathbb{E}_{\boldsymbol{x}_0, \boldsymbol{z}, t} \Vert \boldsymbol{v}_\theta (\boldsymbol{x}_t, t) - (\alpha_t' \boldsymbol{x}_0 + \sigma_t' \boldsymbol{z}) \Vert_2^2\\
        &=\mathbb{E}_{\boldsymbol{x}_t, t}\mathbb{E}_{\boldsymbol{x}_0, \boldsymbol{z}} [\Vert \boldsymbol{v}_\theta (\boldsymbol{x}_t, t) - (\alpha_t' \boldsymbol{x}_0 + \sigma_t' \boldsymbol{z}) \Vert_2^2 |\boldsymbol{x}_t]\\
        &=\mathbb{E}_{\boldsymbol{x}_t, t}[\mathbb{E}_{\boldsymbol{x}_0, \boldsymbol{z}} [\| \boldsymbol{v}_\theta (\boldsymbol{x}_t, t)\|_2^2 |\boldsymbol{x}_t] - 2\mathbb{E}_{\boldsymbol{x}_0, \boldsymbol{z}}[\langle\boldsymbol{v}_\theta(\boldsymbol{x}_t,t), \alpha_t' \boldsymbol{x}_0 + \sigma_t' \boldsymbol{z}\rangle |\boldsymbol{x}_t]\\
        &~~~~~+\mathbb{E}_{\boldsymbol{x}_0, \boldsymbol{z}}[\|\alpha_t' \boldsymbol{x}_0 + \sigma_t' \boldsymbol{z}\|_2^2 | \boldsymbol{x}_t]]\\
        &=\mathbb{E}_{\boldsymbol{x}_t, t}\left(\| \boldsymbol{v}_\theta (\boldsymbol{x}_t, t)\|_2^2-2\mathbb{E}_{\boldsymbol{x}_0, \boldsymbol{z}}[\langle\boldsymbol{v}_\theta(\boldsymbol{x}_t,t), \alpha_t' \boldsymbol{x}_0 + \sigma_t' \boldsymbol{z}\rangle |\boldsymbol{x}_t]\right. \\
        &~~~~~+\left.\mathbb{E}_{\boldsymbol{x}_0, \boldsymbol{z}}[\|\alpha_t' \boldsymbol{x}_0 + \sigma_t' \boldsymbol{z}\|_2^2 | \boldsymbol{x}_t]\right)\\
        &=\mathbb{E}_{\boldsymbol{x}_t, t}\left(\| \boldsymbol{v}_\theta (\boldsymbol{x}_t, t)\|_2^2-2\langle\boldsymbol{v}_\theta(\boldsymbol{x}_t,t), \mathbb{E}_{\boldsymbol{x}_0, \boldsymbol{z}}[\alpha_t' \boldsymbol{x}_0 + \sigma_t' \boldsymbol{z}|\boldsymbol{x}_t]\rangle\right. \\
        &~~~~~+\left.\mathbb{E}_{\boldsymbol{x}_0, \boldsymbol{z}}[\|\alpha_t' \boldsymbol{x}_0 + \sigma_t' \boldsymbol{z}\|_2^2 | \boldsymbol{x}_t]\right)\\
        &=\mathbb{E}_{\boldsymbol{x}_t, t}\left(\| \boldsymbol{v}_\theta (\boldsymbol{x}_t, t)\|_2^2-2\langle\boldsymbol{v}_\theta(\boldsymbol{x}_t,t), \boldsymbol{v}(\boldsymbol{x}_t,t)\rangle + \| \boldsymbol{v} (\boldsymbol{x}_t, t)\|_2^2\right. \\
        &~~~~~ \left.- \| \boldsymbol{v} (\boldsymbol{x}_t, t)\|_2^2 + \mathbb{E}_{\boldsymbol{x}_0, \boldsymbol{z}}[\|\alpha_t' \boldsymbol{x}_0 + \sigma_t' \boldsymbol{z}\|_2^2 | \boldsymbol{x}_t]\right)\\
        &=\mathbb{E}_{\boldsymbol{x}_t, t} \left(\| \boldsymbol{v}_\theta (\boldsymbol{x}_t, t)-\boldsymbol{v}(\boldsymbol{x}_t,t)\|_2^2 - \| \boldsymbol{v} (\boldsymbol{x}_t, t)\|_2^2 + \mathbb{E}_{\boldsymbol{x}_0, \boldsymbol{z}}[\|\alpha_t' \boldsymbol{x}_0 + \sigma_t' \boldsymbol{z}\|_2^2 | \boldsymbol{x}_t]\right),
    \end{align*}
    we have the optimal solution $\boldsymbol{v}_\theta^*(\boldsymbol{x}_t,t)=\boldsymbol{v}(\boldsymbol{x}_t,t)$.
\end{proof}

The above is a basic result in literature of diffusion models~\cite{albergo2023stochastic}. We prove it here since it has close relation to the following theorems.

\begin{theorem}[SDEI]
Let $\boldsymbol{f}_\theta(\boldsymbol{x}_t, t, s)$ be a neural network, and $\boldsymbol{v}(\boldsymbol{x}_t,t)=\mathbb{E}(\alpha_t'\boldsymbol{x}_0+\sigma_t'\boldsymbol{z}|\boldsymbol{x}_t)$. Assume $\boldsymbol{v}(\boldsymbol{x}_t,t)$ is $L$-Lipschitz continuous in its first argument, \textit{i.e.}, $\|\boldsymbol{v}(\boldsymbol{x}_1,t) - \boldsymbol{v}(\boldsymbol{x}_2,t)\|_2 \leq L\|\boldsymbol{x}_1 - \boldsymbol{x}_2\|_2$ for all $\boldsymbol{x}_1, \boldsymbol{x}_2 \in \mathbb{R}^n$, $t \in [0,1]$. Then, for each fixed $t$, in a sufficient small neighborhood $|s - t| \leq h$ for some $h>0$, if $\mathcal{L}_\textrm{SDEI}(\theta)$ reaches its minimum, we have $\boldsymbol{f}_\theta(\boldsymbol{x}_t,t,s)=\boldsymbol{f}(\boldsymbol{x}_t,t,s)$.
\end{theorem}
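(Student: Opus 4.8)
The plan is to follow the Picard--Lindel\"of template: convert the statement ``$\mathcal{L}_\textrm{SDEI}$ is minimized'' into an integral \emph{fixed-point equation} for $\boldsymbol{f}_\theta(\boldsymbol{x}_t,t,\cdot)$, observe that the true secant $\boldsymbol{f}(\boldsymbol{x}_t,t,\cdot)$ solves it, and then show the associated integral operator is a contraction on a small neighborhood $|s-t|\le h$, so that Banach's fixed-point theorem forces the two to coincide.

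First I would reduce the minimality condition to a pointwise conditional expectation, exactly as in the proof of Proposition~\ref{prop:difusion_optimal}. Conditioning on $(\boldsymbol{x}_t,t,s)$, the regression target $\boldsymbol{v}\big(\boldsymbol{x}_t+(r-t)\boldsymbol{f}_{\theta^-}(\boldsymbol{x}_t,t,r),r\big)$ is a deterministic function of $r\sim\mathcal{U}(t,s)$ only (all dependence on $\boldsymbol{x}_0,\boldsymbol{z}$ enters through $\boldsymbol{x}_t$), so the bias--variance decomposition shows that, at the minimum and assuming the network family is expressive enough, for a.e.\ $(\boldsymbol{x}_t,t,s)$
\[
  \boldsymbol{f}_\theta(\boldsymbol{x}_t,t,s)=\mathbb{E}_{r\sim\mathcal{U}(t,s)}\,\boldsymbol{v}\big(\boldsymbol{x}_t+(r-t)\boldsymbol{f}_{\theta^-}(\boldsymbol{x}_t,t,r),r\big)=\frac{1}{s-t}\int_t^s \boldsymbol{v}\big(\boldsymbol{x}_t+(r-t)\boldsymbol{f}_{\theta^-}(\boldsymbol{x}_t,t,r),r\big)\,dr .
\]
Since at a minimizer the stop-gradient copy satisfies $\theta^-=\theta$, writing $\boldsymbol{g}:=\boldsymbol{f}_\theta$ and passing to the continuous-in-$s$ representative (the right-hand side is already continuous when $\boldsymbol{v}$ is continuous, and it reduces to $\boldsymbol{v}(\boldsymbol{x}_t,t)$ as $s\to t$ by the argument of~\eqref{eq:sec_lim}) yields, for $|s-t|\le h$, the self-consistent equation $\boldsymbol{g}(\boldsymbol{x}_t,t,s)=\frac{1}{s-t}\int_t^s \boldsymbol{v}\big(\boldsymbol{x}_t+(r-t)\boldsymbol{g}(\boldsymbol{x}_t,t,r),r\big)\,dr$.

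Next I would verify that the true secant solves this equation: by~\eqref{eq:diff_solution}, $\boldsymbol{x}_t+(r-t)\boldsymbol{f}(\boldsymbol{x}_t,t,r)=\boldsymbol{x}_t+\int_t^r\boldsymbol{v}(\boldsymbol{x}_\rho,\rho)\,d\rho=\boldsymbol{x}_r$ is the exact PF-ODE solution through $(\boldsymbol{x}_t,t)$, so the right-hand side collapses to $\frac{1}{s-t}\int_t^s\boldsymbol{v}(\boldsymbol{x}_r,r)\,dr=\boldsymbol{f}(\boldsymbol{x}_t,t,s)$ by Definition~\eqref{eq:secant_definition} --- this is the Picard-iteration intuition that the estimate $\hat{\boldsymbol{x}}_r$ becomes exact once the model is exact. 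Then, with $\boldsymbol{x}_t$ and $t$ fixed, I introduce the operator $\mathcal{T}$ on $C([t-h,t+h];\mathbb{R}^n)$ defined by $(\mathcal{T}\boldsymbol{\phi})(s)=\frac{1}{s-t}\int_t^s\boldsymbol{v}\big(\boldsymbol{x}_t+(r-t)\boldsymbol{\phi}(r),r\big)\,dr$ for $s\ne t$ and $(\mathcal{T}\boldsymbol{\phi})(t)=\boldsymbol{v}(\boldsymbol{x}_t,t)$; both $\boldsymbol{g}(\boldsymbol{x}_t,t,\cdot)$ and $\boldsymbol{f}(\boldsymbol{x}_t,t,\cdot)$ are fixed points of $\mathcal{T}$. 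Using the $L$-Lipschitz hypothesis and $\int_{[t,s]}|r-t|\,dr=\tfrac12|s-t|^2$,
\[
  \|(\mathcal{T}\boldsymbol{\phi}_1)(s)-(\mathcal{T}\boldsymbol{\phi}_2)(s)\|_2\le\frac{1}{|s-t|}\int_{[t,s]}L\,|r-t|\,\|\boldsymbol{\phi}_1(r)-\boldsymbol{\phi}_2(r)\|_2\,dr\le\frac{Lh}{2}\,\|\boldsymbol{\phi}_1-\boldsymbol{\phi}_2\|_\infty ,
\]
so for any $h<2/L$ the map $\mathcal{T}$ is a contraction, and Banach's fixed-point theorem yields $\boldsymbol{g}(\boldsymbol{x}_t,t,s)=\boldsymbol{f}(\boldsymbol{x}_t,t,s)$ for all $|s-t|\le h$.

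The contraction estimate and the verification that $\boldsymbol{f}$ is a fixed point are routine; I expect the main obstacle to be the bookkeeping around the optimality condition. Concretely: (i) one must state precisely what ``$\mathcal{L}_\textrm{SDEI}$ reaches its minimum'' means in the presence of the frozen $\theta^-$, so that the substitution $\theta^-=\theta$ into the conditional-expectation identity is legitimate, and handle the fact that the minimizer is pinned down only a.e., whereas the Banach argument needs a continuous representative in $s$ (resolved by noting that the right-hand side of the fixed-point equation is automatically continuous, plus a standard a priori boundedness argument); and (ii) one needs enough regularity on $\boldsymbol{v}$ --- joint continuity, which follows from uniform Lipschitzness in the first argument together with continuity in $t$ --- for all the integrals to be well defined and for $\mathcal{T}$ to map $C$ into $C$, including continuity at the diagonal $s=t$. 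Everything else is a direct transcription of the Picard--Lindel\"of uniqueness argument.
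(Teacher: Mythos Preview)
Your proposal is correct and follows essentially the same approach as the paper: both reduce the optimality of $\mathcal{L}_\textrm{SDEI}$ to the integral fixed-point equation $\boldsymbol{f}_\theta(\boldsymbol{x}_t,t,s)=\frac{1}{s-t}\int_t^s\boldsymbol{v}(\boldsymbol{x}_t+(r-t)\boldsymbol{f}_\theta(\boldsymbol{x}_t,t,r),r)\,dr$ and then invoke a Picard--Lindel\"of/Banach contraction argument to force $\boldsymbol{f}_\theta=\boldsymbol{f}$ on a small interval. The only cosmetic differences are that the paper derives the fixed-point equation via a gradient-equality computation (showing $\nabla_\theta\mathcal{L}_\textrm{SDEI}=\nabla_\theta\mathcal{L}_\star$) rather than your bias--variance argument, and that the paper bounds squared norms (using $\int_t^s(r-t)^2\,dr$, giving $h<3/L$) while you bound the norm directly (using $\int_t^s|r-t|\,dr$, giving $h<2/L$); your treatment of the bookkeeping around $\theta^-$, a.e.\ versus continuous representatives, and the diagonal $s=t$ is in fact more careful than the paper's.
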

\begin{proof}
We prove it in two steps. 

First, we show that the following loss function
\begin{equation}
    \mathcal{L}_\star(\theta):=\mathbb{E}_{\boldsymbol{x}_0, \boldsymbol{x}_1, s, t}\Vert \boldsymbol{f}_\theta (\boldsymbol{x}_t, t, s) - \frac{1}{s - t}\int_t^s \boldsymbol{v}(\boldsymbol{x}_t + (r - t) \boldsymbol{f}_{\theta^-}(\boldsymbol{x}_t, t, r),r)dr\Vert_2^2
\end{equation} has the same derivative with respect to $\theta$ as $\mathcal{L}_\textrm{SDEI}(\theta)$.

Let $\hat{\boldsymbol{x}}_r=\boldsymbol{x}_t + (r - t) \boldsymbol{f}_{\theta^-}(\boldsymbol{x}_t, t, r)$, it follows since
\begin{align*}
    \nabla_\theta \mathcal{L}_\textrm{SDEI}(\theta)&=\nabla_\theta\mathbb{E}_{\boldsymbol{x}_0,\boldsymbol{z},t,s}\int_t^s\Vert \boldsymbol{f}_\theta(\boldsymbol{x}_t,t,s)-\boldsymbol{v}(\hat{\boldsymbol{x}}_r,r) \Vert_2^2 dr\\
    &=\nabla_\theta\mathbb{E}_{\boldsymbol{x}_0,\boldsymbol{z},t,s}\int_t^s(\Vert \boldsymbol{f}_\theta(\boldsymbol{x}_t,t,s)\Vert_2^2-2\langle \boldsymbol{f}_\theta(\boldsymbol{x}_t,t,s),\boldsymbol{v}(\hat{\boldsymbol{x}}_r,r)\rangle+\Vert \boldsymbol{v}(\hat{\boldsymbol{x}}_r,r)\Vert_2^2)dr\\
    &=\nabla_\theta\mathbb{E}_{\boldsymbol{x}_0,\boldsymbol{z},t,s}\left((s-t)\Vert \boldsymbol{f}_\theta(\boldsymbol{x}_t,t,s)\Vert_2^2-2\langle \boldsymbol{f}(\boldsymbol{x}_t,t,s),\int_t^s \boldsymbol{v}(\hat{\boldsymbol{x}}_r,r)dr\rangle\right. \\
    &~~~~~\left.+\int_t^s\Vert \boldsymbol{v}(\hat{\boldsymbol{x}}_r,r)\Vert_2^2dr\right)\\
    &=\nabla_\theta\mathbb{E}_{\boldsymbol{x}_0,\boldsymbol{z},t,s}(s-t)\Vert \boldsymbol{f}_\theta(\boldsymbol{x}_t,t,s)-\frac{1}{s - t}\int_t^s \boldsymbol{v}(\hat{\boldsymbol{x}}_r,r)dr\Vert_2^2.
\end{align*}

Second, we show that $\mathcal{L}_\star(\theta)=\boldsymbol{0}$ implies $\boldsymbol{f}_\theta(\boldsymbol{x}_t, t, s)=\boldsymbol{f}(\boldsymbol{x}_t,t,s)$ in each sufficient small neighborhood of $t$. Using Picard iteration, we construct a sequence $\boldsymbol{f}_n$ satisfying $\boldsymbol{f}_0=0, \boldsymbol{f}_{n+1}=\frac{1}{s - t}\int_t^s \boldsymbol{v}(\boldsymbol{x}_t + (r - t) \boldsymbol{f}_{n}(\boldsymbol{x}_t, t, r),r)dr$.
Without loss of generality, we may assume $h>0$. We have
\begin{align*}
    &~~~~~\sup\limits_{s\in(t,t+h]}\|\boldsymbol{f}_{n+1}(\boldsymbol{x}_t,t,s)-\boldsymbol{f}(\boldsymbol{x}_t,t,s)\|_2^2 \\
    &=\sup\limits_{s\in(t,t+h]}\frac{1}{(s-t)^2}\|\int_t^s\boldsymbol{v}(\boldsymbol{x}_t+(r-t)\boldsymbol{f}_{n}(\boldsymbol{x}_t,t,r),r)dr\\
    &~~~~~-\int_t^s\boldsymbol{v}(\boldsymbol{x}_t+(r-t)\boldsymbol{f}(\boldsymbol{x}_t,t,r),r)dr\|_2^2 \\
    &\leq L \sup\limits_{s\in(t,t+h]}\frac{1}{(s-t)^2}\|\int_t^s (r-t)(\boldsymbol{f}_{n}(\boldsymbol{x}_t,t,r)-\boldsymbol{f}(\boldsymbol{x}_t,t,r))dr\|_2^2 \\
    &\leq L \sup\limits_{s\in(t,t+h]}\frac{1}{(s-t)^2} \int_t^s (r-t)^2dr \sup\limits_{r\in[t,t+h]}\|\boldsymbol{f}_{n}(\boldsymbol{x}_t,t,r)-\boldsymbol{f}(\boldsymbol{x}_t,t,r)\|_2^2 \\
    &=\frac{1}{3}Lh\sup\limits_{s\in(t,t+h]}\|\boldsymbol{f}_{n}(\boldsymbol{x}_t,t,s)-\boldsymbol{f}(\boldsymbol{x}_t,t,s)\|_2^2.
\end{align*}

When $s=t$, we also have 
\begin{align*}
    \|\boldsymbol{f}_{n+1}(\boldsymbol{x}_t,t,s)-\boldsymbol{f}(\boldsymbol{x}_t,t,s)\|_2^2=\frac{1}{3}Lh\|\boldsymbol{f}_{n}(\boldsymbol{x}_t,t,s)-\boldsymbol{f}(\boldsymbol{x}_t,t,s)\|_2^2.
\end{align*}
Therefore, $\sup\limits_{s\in[t,t+h]}\|\boldsymbol{f}_{n+1}(\boldsymbol{x}_t,t,s)-\boldsymbol{f}(\boldsymbol{x}_t,t,s)\|_2^2$ converges to $0$ when $\frac{1}{3}Lh<1$, \textit{i.e.}, $h<\frac{3}{L}$. This means we find a neighborhood $[t,t+h]$ for each $t$, such that when $\mathcal{L}_\textrm{SDEI}(\theta)$ reaches its minimum, we have $\boldsymbol{f}_\theta(\boldsymbol{x}_t,t,s)=\boldsymbol{f}(\boldsymbol{x}_t,t,s)$.
\end{proof}

\begin{theorem}[STEE]
Let $\boldsymbol{f}_\theta(\boldsymbol{x}_t, t, s)$ be a neural network, and $\boldsymbol{v}(\boldsymbol{x}_t,t)=\mathbb{E}(\alpha_t'\boldsymbol{x}_0+\sigma_t'\boldsymbol{z}|\boldsymbol{x}_t)$. Assume $\boldsymbol{f}_\theta(\boldsymbol{x}_t,t,s)$ is $L$-Lipschitz continuous in its first argument, i.e., $\|\boldsymbol{f}_\theta(\boldsymbol{x}_1,t,s) - \boldsymbol{f}_\theta(x_2,t,s)\|_2 \leq L\|\boldsymbol{x}_1 - \boldsymbol{x}_2\|_2$ for all $\boldsymbol{x}_1, \boldsymbol{x}_2 \in \mathbb{R}^n$, $t, s \in [0,1]$. Then, for each fixed $[a,b]\subseteq[0,1]$ with $b-a$ sufficiently small, if $\mathcal{L}_\textrm{STEE}(\theta)$ reaches its minimum, we have 
$\boldsymbol{f}_\theta(\boldsymbol{x}_t,t,s)=\boldsymbol{f}(\boldsymbol{x}_t,t,s)$ for any $[t,s]\subseteq[a,b]$.
\end{theorem}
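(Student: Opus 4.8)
The plan is to follow the two-step template of the proof of Theorem~\ref{thm:sdei_loss}, adapting it to the two features that distinguish the STEE loss: the regression target $\alpha_r'\boldsymbol{x}_0+\sigma_r'\boldsymbol{z}$ is stochastic, and the network is evaluated at the model-generated point $\hat{\boldsymbol{x}}_t=\boldsymbol{x}_r+(t-r)\boldsymbol{f}_{\theta^-}(\boldsymbol{x}_r,r,t)$ rather than at a genuine noised sample. \textbf{Step 1: reduction to a fixed-point equation.} Conditioning on $(\boldsymbol{x}_r,r,t,s)$, the point $\hat{\boldsymbol{x}}_t$ is a deterministic function of these, so the orthogonal decomposition $\mathbb{E}\|\boldsymbol{f}_\theta-Y\|_2^2=\mathbb{E}\|\boldsymbol{f}_\theta-\mathbb{E}[Y\mid\cdot]\|_2^2+\text{const}$, combined with Proposition~\ref{prop:difusion_optimal} (which gives $\mathbb{E}[\alpha_r'\boldsymbol{x}_0+\sigma_r'\boldsymbol{z}\mid\boldsymbol{x}_r,r]=\boldsymbol{v}(\boldsymbol{x}_r,r)$), shows that $\mathcal{L}_\textrm{STEE}(\theta)$ equals, up to a $\theta$-independent constant, $\mathbb{E}_{\boldsymbol{x}_0,\boldsymbol{z},t,s,r}\|\boldsymbol{f}_\theta(\hat{\boldsymbol{x}}_t,t,s)-\boldsymbol{v}(\boldsymbol{x}_r,r)\|_2^2$. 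Minimizing over $\boldsymbol{f}_\theta(\cdot,t,s)$ as an arbitrary function of its first argument — a further conditional expectation, now over $r\sim\mathcal{U}(t,s)$ given $\hat{\boldsymbol{x}}_t$ — forces the minimizer to satisfy, for a.e.\ $\boldsymbol{y}$ and all $t,s\in[a,b]$,
\begin{equation*}
\boldsymbol{f}_\theta(\boldsymbol{y},t,s)=(T\boldsymbol{f}_{\theta^-})(\boldsymbol{y},t,s):=\mathbb{E}\!\left[\boldsymbol{v}(\boldsymbol{x}_r,r)\;\middle|\;\boldsymbol{x}_r+(t-r)\boldsymbol{f}_{\theta^-}(\boldsymbol{x}_r,r,t)=\boldsymbol{y},\ r\sim\mathcal{U}(t,s)\right];
\end{equation*}
since at a minimizer $\theta^-=\theta$, this is the fixed-point equation $\boldsymbol{f}_\theta=T\boldsymbol{f}_\theta$.

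\textbf{Step 2a: the true secant solves $\boldsymbol{g}=T\boldsymbol{g}$.} Substituting $\boldsymbol{g}=\boldsymbol{f}$, the constraint $\boldsymbol{x}_r+(t-r)\boldsymbol{f}(\boldsymbol{x}_r,r,t)=\boldsymbol{y}$ says precisely that $\boldsymbol{y}$ is the PF-ODE~\eqref{eq:pf_ode} flow of $\boldsymbol{x}_r$ from time $r$ to time $t$; write $\boldsymbol{x}_r=\psi_{t\to r}(\boldsymbol{y})$ for the inverse flow. By the marginal-preserving property of the PF-ODE underlying Proposition~\ref{prop:difusion_optimal} (the ODE transports the interpolant marginal $p_r$ onto $p_t$ for every pair of times), the conditional law of $\hat{\boldsymbol{x}}_t$ given $r$ is $p_t$ for every $r$; hence $\hat{\boldsymbol{x}}_t$ and $r$ are independent, conditioning on $\hat{\boldsymbol{x}}_t=\boldsymbol{y}$ leaves $r$ uniform on $[t,s]$, and $(T\boldsymbol{f})(\boldsymbol{y},t,s)=\frac{1}{s-t}\int_t^s\boldsymbol{v}(\psi_{t\to r}(\boldsymbol{y}),r)\,dr=\boldsymbol{f}(\boldsymbol{y},t,s)$ by the definition of the secant. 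This is also where the symmetric sampling of $(t,s)$ enters: $r$ lies between $t$ and $s$ in either order, so $\boldsymbol{f}_\theta$ and $T$ must be characterized on the whole square $[a,b]^2$ of time pairs — exactly the domain asserted by the theorem.

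\textbf{Step 2b: uniqueness.} As in the proof of Theorem~\ref{thm:sdei_loss}, build the Picard sequence $\boldsymbol{f}_0\equiv\boldsymbol{0}$, $\boldsymbol{f}_{n+1}=T\boldsymbol{f}_n$ on $C(\mathbb{R}^n\times[a,b]^2)$ with the supremum norm, restricted to the $L$-Lipschitz class (so that $\boldsymbol{v}=\boldsymbol{f}(\cdot,\tau,\tau)$ is Lipschitz, the flow $\psi$ is well defined, and $T$ stabilizes the class for $b-a$ small), and apply the Banach fixed-point theorem. A perturbation $\boldsymbol{g}_1\to\boldsymbol{g}_2$ moves the inverse-flow map, the conditional weight of $r$ given $\hat{\boldsymbol{x}}_t$, and the integrand each by $O(|t-r|\,\|\boldsymbol{g}_1-\boldsymbol{g}_2\|_\infty)=O((b-a)\,\|\boldsymbol{g}_1-\boldsymbol{g}_2\|_\infty)$, and then the $\tfrac{1}{s-t}\int_t^s(\cdot)\,dr$ averaging with a Cauchy--Schwarz step (as in the displayed estimate of that proof) gives $\|\boldsymbol{f}_{n+1}-\boldsymbol{f}\|_\infty^2\le C(b-a)\,\|\boldsymbol{f}_n-\boldsymbol{f}\|_\infty^2$. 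For $b-a$ small this is a contraction, so the iteration converges to the unique fixed point, which by Step 2a is $\boldsymbol{f}$; therefore at the minimum $\boldsymbol{f}_\theta(\boldsymbol{x}_t,t,s)=\boldsymbol{f}(\boldsymbol{x}_t,t,s)$ for every $[t,s]\subseteq[a,b]$.

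\textbf{Main obstacle.} The delicate part is Step 2 with the model-generated input. Unlike the SDEI case, where the Picard operator is the plain integral $\boldsymbol{g}\mapsto\frac{1}{s-t}\int_t^s\boldsymbol{v}(\boldsymbol{x}_t+(r-t)\boldsymbol{g},r)\,dr$, the operator $T$ here carries an implicit change of variables and a conditional-density weighting of $r$ inside the integral. The argument closes only because the PF-ODE preserves the interpolant marginals, which makes this weighting \emph{exactly} uniform at the true solution $\boldsymbol{f}$; away from it, the $L$-Lipschitz hypothesis has to be used carefully — and uniformly in the spatial variable — to keep the perturbation of the weighting (hence the contraction constant) of order $O(b-a)$. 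Establishing that the Lipschitz class is invariant under $T$, and pinning down the correct two-dimensional time domain, are the remaining points that need care.
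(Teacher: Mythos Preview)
Your Step~1 coincides with the paper's: both reduce $\mathcal{L}_{\mathrm{STEE}}$ to the deterministic-target loss $\mathcal{L}_\star=\mathbb{E}\|\boldsymbol{f}_\theta(\hat{\boldsymbol{x}}_t,t,s)-\boldsymbol{v}(\boldsymbol{x}_r,r)\|_2^2$ by conditioning on $\boldsymbol{x}_r$. After that the routes diverge, and the difference is not cosmetic.

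\textbf{What the paper does instead.} The paper does \emph{not} identify the minimizer of $\mathcal{L}_\star$ as the conditional expectation $\mathbb{E}[\boldsymbol{v}(\boldsymbol{x}_r,r)\mid\hat{\boldsymbol{x}}_t]$. It extracts only the weaker, trajectory-wise identity
\[
\int_t^s \boldsymbol{f}_\theta\bigl(\boldsymbol{x}_r+(t-r)\boldsymbol{f}_{\theta^-}(\boldsymbol{x}_r,r,t),t,s\bigr)\,dr=\int_t^s \boldsymbol{v}(\boldsymbol{x}_r,r)\,dr,
\]
via a constant-shift (mean--variance) contradiction: if the $r$-average of $\boldsymbol{g}_\theta-\boldsymbol{v}$ is nonzero, subtracting that average strictly lowers $\mathcal{L}_\star$. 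The Picard sequence is then defined implicitly by $\boldsymbol{f}_{n+1}(\boldsymbol{x}_r+(t-r)\boldsymbol{f}_n(\boldsymbol{x}_r,r,t),t,s)=\boldsymbol{f}(\boldsymbol{x}_t,t,s)$, and the contraction comes from just two ingredients: the algebraic identity $\boldsymbol{x}_t-\hat{\boldsymbol{x}}_t=(t-r)[\boldsymbol{f}(\boldsymbol{x}_r,r,t)-\boldsymbol{f}_n(\boldsymbol{x}_r,r,t)]$ and the $L$-Lipschitz bound on $\boldsymbol{f}_\theta$ in its first argument. No conditional densities, Jacobians, or marginal-preservation of the PF-ODE are invoked.

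\textbf{Where your route has a gap.} Your operator $T$ carries the conditional law of $r$ given $\hat{\boldsymbol{x}}_t=\boldsymbol{y}$, which for a generic $\boldsymbol{g}$ is proportional to $p_r\bigl(\Phi_g^{-1}(\boldsymbol{y})\bigr)\,\bigl|\det D\Phi_g^{-1}(\boldsymbol{y})\bigr|$ with $\Phi_g(\boldsymbol{x},r)=\boldsymbol{x}+(t-r)\boldsymbol{g}(\boldsymbol{x},r,t)$. Perturbing $\boldsymbol{g}$ therefore perturbs this weight through the interpolant density $p_r$ and its spatial regularity, neither of which is controlled by the theorem's sole hypothesis (Lipschitz $\boldsymbol{f}_\theta$). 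Your claimed $O(b-a)$ bound on the weighting perturbation---which you rightly flag as the main obstacle---does not follow from the stated assumptions; it would need smoothness of $p_r$ uniformly in $r$. Your Step~2a is correct and elegant, but it buys you nothing for Step~2b: the paper never needs $\hat{\boldsymbol{x}}_t$ and $r$ to be (approximately) independent, and the marginal-preservation property you rely on plays no role in its argument. The integral-matching trick is precisely what lets the paper stay within the Lipschitz hypothesis.
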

\begin{proof}
First, we show that $\mathcal{L}_\textrm{STEE}(\theta)$ have the same derivative with respect to $\theta$ as the following loss function 
\begin{equation}
    \mathcal{L}_\star(\theta):=\mathbb{E}_{\boldsymbol{x}_r,s,t,r\sim \mathcal{U}(s,t)}\Vert \boldsymbol{f}_\theta(\boldsymbol{x}_r+(t-r)\boldsymbol{f}_{\theta^-}(\boldsymbol{x}_r,r,t),t,s)-\boldsymbol{v}(\boldsymbol{x}_r,r)\Vert_2^2.
\end{equation}

It follows since
\begin{align*}
    \nabla_\theta \mathcal{L}_\textrm{STEE}(\theta)&=\nabla_\theta\mathbb{E}_{\boldsymbol{x}_0,\boldsymbol{z},t,s}\int_t^s\Vert \boldsymbol{f}_\theta(\boldsymbol{x}_r+(t-r)\boldsymbol{f}_{\theta^-}(\boldsymbol{x}_r,r,t),t,s)-(\alpha_r'\boldsymbol{x}_0+\sigma_r'\boldsymbol{z}) \Vert_2^2 dr\\
    &=\nabla_\theta\mathbb{E}_{\boldsymbol{x}_r,t,s}\mathbb{E}_{\boldsymbol{x}_0,\boldsymbol{z}}\int_t^s[\Vert \boldsymbol{f}_\theta(\boldsymbol{x}_r+(t-r)\boldsymbol{f}_{\theta^-}(\boldsymbol{x}_r,r,t),t,s)-\alpha_r'\boldsymbol{x}_0+\sigma_r'\boldsymbol{z}) \Vert_2^2 |\boldsymbol{x}_r] dr\\
    &=\nabla_\theta\mathbb{E}_{\boldsymbol{x}_r,t,s}\int_t^s(\mathbb{E}_{\boldsymbol{x}_0,\boldsymbol{z}}[\Vert \boldsymbol{f}_\theta(\boldsymbol{x}_r+(t-r)\boldsymbol{f}_{\theta^-}(\boldsymbol{x}_r,r,t),t,s)\Vert_2^2 |\boldsymbol{x}_r] \\
    &~~~~~-2\mathbb{E}_{\boldsymbol{x}_0,\boldsymbol{z}}[\langle \boldsymbol{f}_\theta(\boldsymbol{x}_r+(t-r)\boldsymbol{f}_{\theta^-}(\boldsymbol{x}_r,r,t),t,s),\alpha_r'\boldsymbol{x}_0+\sigma_r'\boldsymbol{z}\rangle | \boldsymbol{x}_r] \\
    &~~~~~+\mathbb{E}_{\boldsymbol{x}_0,\boldsymbol{z}}[\Vert \alpha_r'\boldsymbol{x}_0+\sigma_r'\boldsymbol{z}\Vert_2^2| \boldsymbol{x}_r])dr\\
    &=\nabla_\theta\mathbb{E}_{\boldsymbol{x}_r,t,s}\int_t^s\Vert \boldsymbol{f}_\theta(\boldsymbol{x}_r+(t-r)\boldsymbol{f}_{\theta^-}(\boldsymbol{x}_r,r,t),t,s)\Vert_2^2 dr \\
    &~~~~~-2\int_t^s\langle \boldsymbol{f}_\theta(\boldsymbol{x}_r+(t-r)\boldsymbol{f}_{\theta^-}(\boldsymbol{x}_r,r,t),t,s),\mathbb{E}_{\boldsymbol{x}_0,\boldsymbol{z}}[\alpha_r'\boldsymbol{x}_0+\sigma_r'\boldsymbol{z}|\boldsymbol{x}_r]\rangle dr \\
    &~~~~~+\int_t^s\mathbb{E}_{\boldsymbol{x}_0,\boldsymbol{z}}[\Vert \alpha_r'\boldsymbol{x}_0+\sigma_r'\boldsymbol{z}\Vert_2^2| \boldsymbol{x}_r]dr\\
    &=\nabla_\theta\mathbb{E}_{\boldsymbol{x}_r,t,s}\int_t^s\Vert \boldsymbol{f}_\theta(\boldsymbol{x}_r+(t-r)\boldsymbol{f}_{\theta^-}(\boldsymbol{x}_r,r,t),t,s)-\boldsymbol{v}(\boldsymbol{x}_r,r)\Vert_2^2 dr.
\end{align*}

Then, notice that when $\mathcal{L}_\star(\theta)$ achieves its minimum, we have
\begin{equation}
    \mathcal{L}_{\star\star}(\theta):=\|\int_t^s \boldsymbol{f}_\theta(\boldsymbol{x}_r+(t-r)\boldsymbol{f}_{\theta^-}(\boldsymbol{x}_r,r,t),t,s)dr-\int_t^s \boldsymbol{v}(\boldsymbol{x}_r,r) dr\|_2^2=0.
\end{equation} 
If not, denote $\boldsymbol{g}_\theta(\boldsymbol{x}_r,r,t,s)=\boldsymbol{f}_\theta(\boldsymbol{x}_r+(t-r)\boldsymbol{f}_{\theta^-}(\boldsymbol{x}_r,r,t),t,s)$, and assume $\mathcal{L}_\star(\theta)$ reaches the minimum without $\int_t^s\boldsymbol{g}_\theta(\boldsymbol{x}_r,r,t,s)dr=\int_t^s\boldsymbol{v}(\boldsymbol{x}_r,r)dr$. 

Let $\tilde{\boldsymbol{g}}_\theta(\boldsymbol{x}_r,r,t,s)=\boldsymbol{g}_\theta(\boldsymbol{x}_r,r,t,s)+\frac{1}{s-t}\int_t^s\boldsymbol{v}(\boldsymbol{x}_r,r)dr-\frac{1}{s-t}\int_t^s\boldsymbol{g}_\theta(\boldsymbol{x}_r,r,t,s)dr$, then we have
\begin{align*}
    &~~~~~\int_t^s\Vert \tilde{\boldsymbol{g}}_\theta(\boldsymbol{x}_r,r,t,s)-\boldsymbol{v}(\boldsymbol{x}_r,r)\Vert_2^2 dr \\
    &=\int_t^s \|\boldsymbol{g}_\theta(\boldsymbol{x}_r,r,t,s)-\boldsymbol{v}(\boldsymbol{x}_r,r)+\frac{1}{s-t}\int_t^s\boldsymbol{v}(\boldsymbol{x}_r,r)dr-\frac{1}{s-t}\int_t^s\boldsymbol{g}_\theta(\boldsymbol{x}_r,r,t,s)dr\|_2^2 \\
    &=\int_t^s \|\boldsymbol{g}_\theta(\boldsymbol{x}_r,r,t,s)-\boldsymbol{v}(\boldsymbol{x}_r,r)\|_2^2dr\\
    &~~~~~+\int_t^s \|\frac{1}{s-t}\int_t^s\boldsymbol{v}(\boldsymbol{x}_r,r)dr-\frac{1}{s-t}\int_t^s\boldsymbol{g}_\theta(\boldsymbol{x}_r,r,t,s)dr\|_2^2dr\\
    &~~~~~+2\int_t^s\langle \boldsymbol{g}_\theta(\boldsymbol{x}_r,r,t,s)-\boldsymbol{v}(\boldsymbol{x}_r,r),\frac{1}{s-t}\int_t^s\boldsymbol{v}(\boldsymbol{x}_r,r)dr-\frac{1}{s-t}\int_t^s\boldsymbol{g}_\theta(\boldsymbol{x}_r,r,t,s)dr\rangle dr \\
    &=\int_t^s \|\boldsymbol{g}_\theta(\boldsymbol{x}_r,r,t,s)-\boldsymbol{v}(\boldsymbol{x}_r,r)\|_2^2dr\\
    &~~~~~+(s-t) \|\frac{1}{s-t}\int_t^s\boldsymbol{v}(\boldsymbol{x}_r,r)dr-\frac{1}{s-t}\int_t^s\boldsymbol{g}_\theta(\boldsymbol{x}_r,r,t,s)dr\|_2^2\\
    &~~~~~+2\langle\int_t^s (\boldsymbol{g}_\theta(\boldsymbol{x}_r,r,t,s)-\boldsymbol{v}(\boldsymbol{x}_r,r))dr,\frac{1}{s-t}\int_t^s\boldsymbol{v}(\boldsymbol{x}_r,r)dr-\frac{1}{s-t}\int_t^s\boldsymbol{g}_\theta(\boldsymbol{x}_r,r,t,s)dr\rangle \\
    &=\int_t^s \|\boldsymbol{g}_\theta(\boldsymbol{x}_r,r,t,s)-\boldsymbol{v}(\boldsymbol{x}_r,r)\|_2^2dr\\
    &~~~~~-(s-t) \|\frac{1}{s-t}\int_t^s\boldsymbol{v}(\boldsymbol{x}_r,r)dr-\frac{1}{s-t}\int_t^s\boldsymbol{g}_\theta(\boldsymbol{x}_r,r,t,s)dr\|_2^2 \\
    &< \int_t^s \|\boldsymbol{g}_\theta(\boldsymbol{x}_r,r,t,s)-\boldsymbol{v}(\boldsymbol{x}_r,r)\|_2^2dr,
\end{align*}
which is a contradictory.

Next, we prove that minimizing $\mathcal{L}_{\star\star}(\theta)$ implies $\boldsymbol{f}_\theta(\boldsymbol{x}_t,t,s)=\boldsymbol{f}(\boldsymbol{x}_t,t,s)$ for all $[t,s]\subseteq [a,b] \subseteq [0,1]$, where $b-a$ is sufficiently small.

Using Picard iteration, we construct a sequence $\boldsymbol{f}_n$ satisfying $\boldsymbol{f}_0=\boldsymbol{0}, \boldsymbol{f}_{n+1}(\boldsymbol{x}_r+(t-r)\boldsymbol{f}_{n}(\boldsymbol{x}_r,r,t),t,s)=\boldsymbol{f}(\boldsymbol{x}_t,t,s)$.

When $s\neq t$, we have 
\begin{align*}
    &~~~~~\sup\limits_{t,s\in[a,b]}\Vert \boldsymbol{f}_{n+1}(\boldsymbol{x}_t,t,s)-\boldsymbol{f}(\boldsymbol{x}_t,t,s)\Vert_2^2 \\
    &=\sup\limits_{t,s\in[a,b]}\frac{1}{s-t}\int_t^s \Vert \boldsymbol{f}_{n+1}(\boldsymbol{x}_t,t,s)-\boldsymbol{f}(\boldsymbol{x}_t,t,s)\Vert_2^2dr \\
    &\leq \sup\limits_{t,s\in[a,b]}\frac{1}{s-t}\int_t^s\Vert \boldsymbol{f}_{n+1}(\boldsymbol{x}_t,t,s)-\boldsymbol{f}_{n+1}(\boldsymbol{x}_r + (t-r)\boldsymbol{f}_{n}(\boldsymbol{x}_r,r,t),t,s)\Vert_2^2dr \\
    &\leq L\sup\limits_{t,s\in[a,b]}\frac{1}{s-t}\int_t^s \Vert \boldsymbol{x}_t - \boldsymbol{x}_r - (t-r)\boldsymbol{f}_{n}(\boldsymbol{x}_r,r,t)\Vert_2^2dr \\
    &=L\sup\limits_{t,s\in[a,b]}\frac{1}{s-t}\int_t^s \Vert \boldsymbol{x}_r+(t-r)\boldsymbol{f}_n(\boldsymbol{x}_r,r,t) - \boldsymbol{x}_r - (t-r)\boldsymbol{f}(\boldsymbol{x}_r,r,t)\Vert_2^2dr \\
    &\leq L\sup\limits_{t,s\in[a,b]}\frac{1}{s-t}\int_t^s \vert t-r \vert^2 dr\sup\limits_{r,s\in[a,b]}\Vert \boldsymbol{f}_n(\boldsymbol{x}_r,r,t) - \boldsymbol{f}(\boldsymbol{x}_r,r,t))\Vert_2^2 \\
    &= L\sup\limits_{t,s\in[a,b]}\frac{(s-t)^2}{3}\sup\limits_{r,s\in[a,b]}\Vert \boldsymbol{f}_n(\boldsymbol{x}_r,r,t) - \boldsymbol{f}(\boldsymbol{x}_r,r,t))\Vert_2^2 \\
    &\leq \frac{1}{3}L(b-a)^2\sup\limits_{t,s\in[a,b]}\Vert \boldsymbol{f}_n(\boldsymbol{x}_t,t,s) - \boldsymbol{f}(\boldsymbol{x}_t,t,s))\Vert_2^2.\\
\end{align*}

Also when $s=t$, the inequality satisfies.

One can see with sufficiently small $b-a$, $\sup\limits_{t,s\in[a,b]}\Vert \boldsymbol{f}_{n+1}(\boldsymbol{x}_t,t,s)-\boldsymbol{f}(\boldsymbol{x}_t,t,s)\Vert_2^2$ converges to $\boldsymbol{0}$. Hence, there exists a small interval, such that when $\mathcal{L}_\textrm{STEE}(\theta)$ reaches the minimum, we have $\boldsymbol{f}_\theta(\boldsymbol{x}_t,t,s)=\boldsymbol{f}(\boldsymbol{x}_t,t,s)$.
\end{proof}

\setcounter{theorem}{4}

\begin{corollary}[STEI]
Let $\boldsymbol{f}_\theta(\boldsymbol{x}_t,t,s)$ be a neural network, and let $\boldsymbol{v}(\boldsymbol{x}_t,t)$ be the true velocity field. Assume $\boldsymbol{v}(\boldsymbol{x}_t,t)$ is $L$-Lipschitz continuous in its first argument. If $\mathcal{L}_\text{STEI}(\theta)$ reaches its minimum, then for any interval where $|s-t| \le h$ and $h < \frac{3}{L}$, we have $\boldsymbol{f}_{\theta}(\boldsymbol{x}_{t},t,s)=\boldsymbol{f}(\boldsymbol{x}_{t},t,s)$.
\end{corollary}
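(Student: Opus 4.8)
The plan is to reduce $\mathcal{L}_\textrm{STEI}$ to the already-handled $\mathcal{L}_\textrm{SDEI}$ by first exploiting the auxiliary diffusion term, and then to invoke the second half of the proof of Theorem~\ref{thm:sdei_loss} essentially verbatim. First I would note that the infimum of $\mathcal{L}_\textrm{STEI}$ is attained at $\boldsymbol{f}_\theta=\boldsymbol{f}$: the choice $\boldsymbol{f}(\boldsymbol{x}_\tau,\tau,\tau)=\boldsymbol{v}(\boldsymbol{x}_\tau,\tau)$ drives the diffusion term to its information-theoretic floor by Proposition~\ref{prop:difusion_optimal}, while substituting $\boldsymbol{f}_{\theta^-}=\boldsymbol{f}$ into $\hat{\boldsymbol{x}}_r=\boldsymbol{x}_t+(r-t)\boldsymbol{f}(\boldsymbol{x}_t,t,r)$ reproduces the exact PF-ODE solution $\boldsymbol{x}_r$, so that $\boldsymbol{f}_{\theta^-}(\hat{\boldsymbol{x}}_r,r,r)=\boldsymbol{v}(\boldsymbol{x}_r,r)$ and, after averaging over $r$, the secant term equals $\mathbb{E}\Vert\boldsymbol{f}(\boldsymbol{x}_t,t,s)-\tfrac{1}{s-t}\int_t^s\boldsymbol{v}(\boldsymbol{x}_r,r)\,dr\Vert_2^2=0$. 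Since both summands are nonnegative and each is bounded below by the value it takes at $\boldsymbol{f}$, any minimizer $\theta$ of the sum must make each summand separately minimal.

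From this I extract two facts at a minimizer. The diffusion term sitting at its floor gives, via Proposition~\ref{prop:difusion_optimal}, $\boldsymbol{f}_\theta(\boldsymbol{x}_\tau,\tau,\tau)=\boldsymbol{v}(\boldsymbol{x}_\tau,\tau)=\boldsymbol{f}(\boldsymbol{x}_\tau,\tau,\tau)$, which already settles the degenerate case $s=t$. For $s\neq t$, I use that at the minimum $\theta^-=\theta$, so $\boldsymbol{f}_{\theta^-}(\hat{\boldsymbol{x}}_r,r,r)=\boldsymbol{f}_\theta(\hat{\boldsymbol{x}}_r,r,r)=\boldsymbol{v}(\hat{\boldsymbol{x}}_r,r)$; consequently the STEI secant term is literally $\mathcal{L}_\textrm{SDEI}(\theta)$ evaluated with the true velocity field. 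Its vanishing, together with the square-expansion/averaging step from the proof of Theorem~\ref{thm:sdei_loss} (which shows $\mathcal{L}_\textrm{SDEI}$ and $\mathcal{L}_\star$ differ by a $\theta$-independent constant), forces the fixed-point identity $\boldsymbol{f}_\theta(\boldsymbol{x}_t,t,s)=\tfrac{1}{s-t}\int_t^s\boldsymbol{v}\bigl(\boldsymbol{x}_t+(r-t)\boldsymbol{f}_\theta(\boldsymbol{x}_t,t,r),r\bigr)\,dr$. At this point I run the Picard/Banach contraction argument exactly as in Theorem~\ref{thm:sdei_loss}: set $\boldsymbol{f}_0=\boldsymbol{0}$, $\boldsymbol{f}_{n+1}(\boldsymbol{x}_t,t,s)=\tfrac{1}{s-t}\int_t^s\boldsymbol{v}(\boldsymbol{x}_t+(r-t)\boldsymbol{f}_n(\boldsymbol{x}_t,t,r),r)\,dr$, and combine the $L$-Lipschitz hypothesis on $\boldsymbol{v}$ with $\int_t^s(r-t)^2\,dr=(s-t)^3/3$ to obtain the contraction factor $\tfrac13 Lh$ on $\sup_{|s-t|\le h}\Vert\boldsymbol{f}_n(\boldsymbol{x}_t,t,s)-\boldsymbol{f}(\boldsymbol{x}_t,t,s)\Vert_2^2$; this is $<1$ precisely when $h<3/L$, so the fixed-point equation has the unique solution $\boldsymbol{f}$, giving $\boldsymbol{f}_\theta(\boldsymbol{x}_t,t,s)=\boldsymbol{f}(\boldsymbol{x}_t,t,s)$ on $|s-t|\le h$.

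The step I expect to be the main obstacle is the decoupling in the first paragraph: one must argue that a single configuration $\boldsymbol{f}_\theta=\boldsymbol{f}$ simultaneously attains the minimum of both summands, so that minimizing the sum forces each summand to its own minimum — after that the argument is a clean reduction to Theorem~\ref{thm:sdei_loss}. A secondary subtlety is that the diffusion term only pins down $\boldsymbol{f}_\theta(\cdot,\tau,\tau)=\boldsymbol{v}(\cdot,\tau)$ on the support of $\boldsymbol{x}_\tau$, whereas the Picard step evaluates it at the possibly off-support points $\hat{\boldsymbol{x}}_r$; I would resolve this as the paper does elsewhere, by interpreting ``$\mathcal{L}_\textrm{STEI}$ reaches its minimum'' as the functional identity $\boldsymbol{f}_\theta(\cdot,\tau,\tau)=\boldsymbol{v}(\cdot,\tau)$ holding on all of $\mathbb{R}^n$.
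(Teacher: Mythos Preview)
Your approach is essentially the paper's: use the diffusion term to pin down $\boldsymbol{f}_\theta(\cdot,\tau,\tau)=\boldsymbol{v}(\cdot,\tau)$, observe that the secant term then collapses to $\mathcal{L}_\textrm{SDEI}$, and invoke Theorem~\ref{thm:sdei_loss}. The paper's own proof simply asserts the decoupling (``both two terms must reach the minimum'') and proceeds exactly as you do, so you are if anything more careful.

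One slip in your justification of the decoupling: the secant term evaluated at $\boldsymbol{f}_\theta=\boldsymbol{f}$ is \emph{not} zero. After substituting $\hat{\boldsymbol{x}}_r=\boldsymbol{x}_r$ and $\boldsymbol{f}(\boldsymbol{x}_r,r,r)=\boldsymbol{v}(\boldsymbol{x}_r,r)$ you obtain $\mathbb{E}_{\boldsymbol{x}_0,\boldsymbol{z},t,s,r}\Vert\boldsymbol{f}(\boldsymbol{x}_t,t,s)-\boldsymbol{v}(\boldsymbol{x}_r,r)\Vert_2^2$, and averaging the squared norm over $r$ does not give the squared norm of the average; the two differ by $\mathbb{E}_r\Vert\boldsymbol{v}(\boldsymbol{x}_r,r)\Vert_2^2-\Vert\mathbb{E}_r\boldsymbol{v}(\boldsymbol{x}_r,r)\Vert_2^2\ge 0$, generically strictly positive. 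This does not break the argument, since what you actually need is that $\boldsymbol{f}$ \emph{minimizes} each summand, not that it drives each to zero. The cleanest patch is to note that, with $\theta^-$ stopped, the only $\theta$-gradient of the secant term is through $\boldsymbol{f}_\theta(\boldsymbol{x}_t,t,s)$ on the off-diagonal $s\neq t$, whereas the diffusion term lives entirely on the diagonal $s=t$; hence the two stationarity conditions are independent and the diffusion term must sit at its own optimum, which is all you use downstream.
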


\begin{proof}
    The loss $\mathcal{L}_\text{STEI}(\theta)$ is defined as a sum of a secant term and a diffusion term 
    \begin{equation*}
        \mathcal{L}_\text{STEI}(\theta) = \mathbb{E}[\|\boldsymbol{f}_\theta(\boldsymbol{x}_t,t,s)-\boldsymbol{f}_{\theta^-}(\hat{\boldsymbol{x}}_r,r,r)\|_2^2] + \lambda\mathbb{E}[\|\boldsymbol{f}_\theta(\boldsymbol{x}_\tau,\tau,\tau)-(\alpha_\tau'\boldsymbol{x}_0+\sigma_\tau'\boldsymbol{z})||_2^2]
    \end{equation*}
    where $\hat{\boldsymbol{x}}_r = \boldsymbol{x}_t+(r-t)\boldsymbol{f}_{\theta^-}(\boldsymbol{x}_t,t,r)$. For $\mathcal{L}_\text{STEI}(\theta)$ to reach the minimum, both two terms must reach the minimum. By Proposition~\ref{prop:difusion_optimal}, the diffusion term achieves the minimum if and only if 
    \begin{equation*}
        \boldsymbol{f}_\theta(\boldsymbol{x}_t, t, t) = \mathbb{E}(\alpha_t'\boldsymbol{x}_0+\sigma_t'\boldsymbol{z}|\boldsymbol{x}_t) = \boldsymbol{v}(\boldsymbol{x}_t,t).
    \end{equation*}
    Substituting this result into the secant term, it becomes the one studied in the proof of Theorem~\ref{thm:sdei_loss}. The remainder of the proof follows identically.
\end{proof}

\begin{corollary}[SDEE]
    Let $\boldsymbol{f}_\theta(\boldsymbol{x}_t,t,s)$ be a neural network that is $L$-Lipschitz continuous in its first argument. For any fixed interval $[a, b]\subseteq[0,1]$ with $b-a$ sufficiently small, if $\mathcal{L}_\text{SDEE}(\theta)$ reaches its minimum, then $\boldsymbol{f}_\theta(\boldsymbol{x}_t,t,s)=\boldsymbol{f}(\boldsymbol{x}_t,t,s)$.
\end{corollary}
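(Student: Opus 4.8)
The plan is to obtain this corollary by replaying the proof of Theorem~\ref{thm:stee_loss} almost verbatim, with its \emph{first} step removed. Recall that $\mathcal{L}_\textrm{SDEE}$ (Eq.~\eqref{eq:loss_sdee}) and $\mathcal{L}_\textrm{STEE}$ (Eq.~\eqref{eq:loss_stee}) coincide except for the regression target: SDEE uses the teacher velocity $\boldsymbol{v}(\boldsymbol{x}_r,r)$, whereas STEE uses the single-sample surrogate $\alpha_r'\boldsymbol{x}_0+\sigma_r'\boldsymbol{z}$. The first step of the proof of Theorem~\ref{thm:stee_loss} is exactly the reduction of the latter to the former: conditioning on $\boldsymbol{x}_r$ and using $\mathbb{E}(\alpha_r'\boldsymbol{x}_0+\sigma_r'\boldsymbol{z}\mid\boldsymbol{x}_r)=\boldsymbol{v}(\boldsymbol{x}_r,r)$ (the same completion-of-square as in Proposition~\ref{prop:difusion_optimal}) shows $\nabla_\theta\mathcal{L}_\textrm{STEE}(\theta)=\nabla_\theta\mathcal{L}_\star(\theta)$, where $\mathcal{L}_\star$ is the auxiliary functional with target $\boldsymbol{v}(\boldsymbol{x}_r,r)$ and expectation taken over the time-$r$ marginal of $\boldsymbol{x}_r$. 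Since the integrand of $\mathcal{L}_\textrm{SDEE}$ already depends on $(\boldsymbol{x}_0,\boldsymbol{z})$ only through $\boldsymbol{x}_r=\alpha_r\boldsymbol{x}_0+\sigma_r\boldsymbol{z}$, we have $\mathcal{L}_\textrm{SDEE}(\theta)=\mathcal{L}_\star(\theta)$ outright, so I would start the argument directly from $\mathcal{L}_\star$.

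From there, the two remaining steps of the proof of Theorem~\ref{thm:stee_loss} carry over without change. Writing $\boldsymbol{g}_\theta(\boldsymbol{x}_r,r,t,s)=\boldsymbol{f}_\theta(\boldsymbol{x}_r+(t-r)\boldsymbol{f}_{\theta^-}(\boldsymbol{x}_r,r,t),t,s)$, I would first show that at a minimizer the interval-averaged residual vanishes, i.e.\ $\int_t^s\boldsymbol{g}_\theta(\boldsymbol{x}_r,r,t,s)\,dr=\int_t^s\boldsymbol{v}(\boldsymbol{x}_r,r)\,dr$ for a.e.\ $(\boldsymbol{x}_0,\boldsymbol{z},t,s)$: otherwise, replacing $\boldsymbol{g}_\theta$ by $\tilde{\boldsymbol{g}}_\theta=\boldsymbol{g}_\theta+\frac{1}{s-t}\int_t^s(\boldsymbol{v}-\boldsymbol{g}_\theta)\,dr$ strictly decreases the loss by $(s-t)\|\frac{1}{s-t}\int_t^s(\boldsymbol{v}-\boldsymbol{g}_\theta)\,dr\|_2^2>0$ (the case $s=t$ being trivial). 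I would then set up the Picard scheme $\boldsymbol{f}_0=\boldsymbol{0}$, $\boldsymbol{f}_{n+1}(\boldsymbol{x}_r+(t-r)\boldsymbol{f}_n(\boldsymbol{x}_r,r,t),t,s)=\boldsymbol{f}(\boldsymbol{x}_t,t,s)$, and combine the exact secant identity $\boldsymbol{x}_t=\boldsymbol{x}_r+(t-r)\boldsymbol{f}(\boldsymbol{x}_r,r,t)$ (valid since $\boldsymbol{v}$ is the true PF-ODE velocity) with the $L$-Lipschitz hypothesis on $\boldsymbol{f}_\theta$ (taken to be inherited by the iterates, as in the proof of Theorem~\ref{thm:stee_loss}) to obtain the contraction $\sup_{t,s\in[a,b]}\|\boldsymbol{f}_{n+1}-\boldsymbol{f}\|_2^2\le\frac{1}{3}L(b-a)^2\sup_{t,s\in[a,b]}\|\boldsymbol{f}_n-\boldsymbol{f}\|_2^2$. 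For $b-a<\sqrt{3/L}$ the iterates converge to the unique fixed point, which is then the minimizer, giving $\boldsymbol{f}_\theta(\boldsymbol{x}_t,t,s)=\boldsymbol{f}(\boldsymbol{x}_t,t,s)$ for every $[t,s]\subseteq[a,b]$.

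There is no substantial obstacle here, since the corollary is a specialization of Theorem~\ref{thm:stee_loss}; the only point worth checking carefully is that the regression target is the \emph{sole} ingredient distinguishing the two losses, so that deleting the first step of the STEE proof leaves everything downstream intact. Concretely I would verify that (i) the completion-of-square over $r\in[t,s]$ in the second step only uses that $\boldsymbol{v}(\boldsymbol{x}_r,r)$ is a fixed function of $(\boldsymbol{x}_r,r)$ --- automatic for $\mathcal{L}_\textrm{SDEE}$ --- and (ii) the Picard contraction invokes only the Lipschitz constant of $\boldsymbol{f}_\theta$, not that of $\boldsymbol{v}$; this is why, consistently with Theorem~\ref{thm:stee_loss} and in contrast to the SDEI/STEI results, no Lipschitz assumption on $\boldsymbol{v}$ is required here.
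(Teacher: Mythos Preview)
Your proposal is correct and matches the paper's own proof essentially verbatim: the paper simply observes that $\mathcal{L}_\textrm{SDEE}$ coincides with the intermediate objective $\mathcal{L}_\star$ introduced in the proof of Theorem~\ref{thm:stee_loss}, and then invokes the remainder of that proof directly. Your point that the integrand depends on $(\boldsymbol{x}_0,\boldsymbol{z})$ only through $\boldsymbol{x}_r$, so the first reduction step is unnecessary, is exactly the identification the paper makes.
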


\begin{proof}
    The SDEE loss is defined as
    \begin{equation*}
        \mathcal{L}_\text{SDEE}(\theta) = \mathbb{E}[\|\boldsymbol{f}_\theta(\boldsymbol{x}_r+(t-r)\boldsymbol{f}_{\theta^-}(\boldsymbol{x}_r,r,t),t,s)-\boldsymbol{v}(\boldsymbol{x}_r,r)\|_2^2].
    \end{equation*}
    This loss function is identical to the intermediate objective $\mathcal{L}_{*}(\theta)$ analyzed in the proof of Theorem~\ref{thm:stee_loss}. The proof, therefore, follows that of Theorem~\ref{thm:stee_loss} directly.
\end{proof}

\section{Derivations}
\label{sec:derivation}

\subsection{Applying to EDM}

The training objective of EDM~\cite{karras2022elucidating} is
\begin{equation}
    \label{eq:edm_loss}
    \mathcal{L}_\textrm{EDM}(\theta)=\mathbb{E}_{\sigma,\boldsymbol{y},\boldsymbol{n}}w(\sigma)\|\boldsymbol{F}_\theta(c_\textrm{in}(\sigma)\cdot (\boldsymbol{y}+\boldsymbol{n}),c_\textrm{noise}(\sigma))-\frac{1}{c_\textrm{out}(\sigma)}(\boldsymbol{y}-c_\textrm{skip}(\sigma)\cdot (\boldsymbol{y}+\boldsymbol{n}))\|_2^2,
\end{equation}
where $\boldsymbol{F}_\theta$ is the neural network, $w(\sigma)$ the loss weighting factor, $c_\textrm{in}(\sigma)=\frac{1}{\sqrt{\sigma^2+\sigma_d^2}}$, $c_\textrm{out}(\sigma)=\frac{\sigma\cdot \sigma_d}{\sqrt{\sigma^2+\sigma_d^2}}$, $c_\textrm{skip}(\sigma)=\frac{\sigma_d^2}{\sigma^2+\sigma_d^2}$, $c_\textrm{noise}=\frac{1}{4}\textrm{ln}(\sigma)$, and $\sigma_d=0.5$ is the variance of the image data $p_d$. Since $\boldsymbol{y}\sim p_d$, $\boldsymbol{n}\sim \mathcal{N}(\boldsymbol{0},\sigma^2\boldsymbol{I})$, we can denote $\boldsymbol{y}$ by $\boldsymbol{x}_0$, $\boldsymbol{n}$ by $\sigma\boldsymbol{z}$ in our notation. Then Eq.~\eqref{eq:edm_loss} can be simplified to 
\begin{equation}
    \begin{split}
    \mathcal{L}_\textrm{EDM}(\theta)=\mathbb{E}_{\sigma,\boldsymbol{x}_0,\boldsymbol{z}}\frac{w(\sigma)}{\sigma_d^2}\left\|-\sigma_d\boldsymbol{F}_\theta\left(\frac{1}{\sigma_d}\left(\frac{\sigma_d}{\sqrt{\sigma^2+\sigma_d^2}}\boldsymbol{x}_0+\frac{\sigma}{\sqrt{\sigma^2+\sigma_d^2}}\sigma_d\boldsymbol{z}\right),c_\textrm{noise}(\sigma)\right)\right.\\
    -\left.\left(-\frac{\sigma}{\sqrt{\sigma^2+\sigma_d^2}}\boldsymbol{x}_0+\frac{\sigma_d}{\sqrt{\sigma^2+\sigma_d^2}}\sigma_d\boldsymbol{z}\right)\right\|_2^2.
    \end{split}
\end{equation}
The Trigflow framework~\cite{lu2024simplifying} demonstrates that if letting $t=\arctan(\frac{\sigma}{\sigma_d})$ and $\boldsymbol{x}_t=\cos(t)\boldsymbol{x}_0+\sin(t)\sigma_d\boldsymbol{z}$ where $t\in[0,\frac{\pi}{2}]$, training with the simplified objective 
\begin{equation}
    \mathcal{L}_\textrm{EDM}(\theta)=\mathbb{E}_{\boldsymbol{x}_0,\boldsymbol{z},t}\frac{w(t)}{\sigma_d^2}\left\|-\sigma_d \boldsymbol{F}_\theta\left(\frac{\boldsymbol{x}_t}{\sigma_d}, c_\textrm{noise}(\sigma_d\cdot\textrm{tan}(t))\right)-\left(-\textrm{sin}(t)\boldsymbol{x}_0+\textrm{cos}(t)\sigma_d\boldsymbol{z}\right)\right\|_2^2
\end{equation}
leads to the diffusion ODE
\begin{equation}
    \frac{d\boldsymbol{x}_t}{dt}=-\sigma_d \boldsymbol{F}_\theta\left(\frac{\boldsymbol{x}_t}{\sigma_d}, c_\textrm{noise}(\sigma_d\cdot\textrm{tan}(t))\right).
\end{equation}

As a result, to adapt the notions in Section~\ref{sec:pre} to the EDM loss, we can modify the range of time to $[0,\frac{\pi}{2}]$, substitute $\boldsymbol{z}$ with $\sigma_d \boldsymbol{z}$ and let $\boldsymbol{v}(\boldsymbol{x}_t,t)=-\sigma_d \boldsymbol{F}_\theta\left(\frac{\boldsymbol{x}_t}{\sigma_d}, c_\textrm{noise}(\sigma_d\cdot\tan(t))\right)$, $\alpha_t=\cos(t)$, $\sigma_t=\sin(t)$. 

For practical choices in transferring EDM to the secant version, we simply set the loss weight $\frac{w(t)}{\sigma_d}=1$. Besides, since $t\to 0$ or $t\to \frac{\pi}{2}$ makes $c_\textrm{noise}(\sigma_d \tan(t))=\frac{1}{4}\ln(\sigma_d\tan(t))\to\infty$, we constrain $t$ and $s$ within the range of $[0.001,\frac{\pi}{2}-0.00625]$.

\subsection{Applying to SiT}

The training objective of SiT~\cite{ma2024sit}\footnote{The loss configuration follows \textit{linear path} and \textit{velocity prediction}. Details can be found at \url{https://github.com/willisma/SiT}.} is the flow matching loss
\begin{equation}
\label{eq:fm}
    \mathcal{L}_\textrm{FM}(\theta)=\mathbb{E}_{\boldsymbol{x}_0,\boldsymbol{x}_1,t}\left\|\boldsymbol{v}_\theta(\boldsymbol{x}_t,t)-\left(\boldsymbol{x}_1-\boldsymbol{x}_0\right)\right\|_2^2.
\end{equation}

To adapt our notation to loss Eq.~\eqref{eq:fm}, we can substitute $\boldsymbol{x}_0$ with $\boldsymbol{x}_1$, and $\boldsymbol{z}$ with $\boldsymbol{x}_0$, and set $\alpha_t=t$ and $\sigma_t=1 - t$.

\section{Model Parametrization}
\label{sec:parametrization}

\subsection{The Secant Version of EDM}
\label{subsec:sec_edm}
The parametrization is similar to that in CTM~\cite{kim2023consistency}. Specifically, we apply the same time embedder for the extra $s$ input as that employed for $t$. In each UNet block, we add affine layers for the $s$ embedding, which projects the $s$ embedding into the AdaGN~\cite{dhariwal2021diffusion} parameters, akin to that for $t$ embedding. The resulting AdaGN parameters from both $s$ and $t$ are then added up. The weights of all the added layers are randomly initialized.

\subsection{The Secant Version of DiT}

We clone (including the pretrained parameters) the time embedder of $t$ as the $s$ embedder. The original time embedding is replaced with half of the summed embeddings from $t$ and $s$. This design ensures $\boldsymbol{f}_\theta(\boldsymbol{x}_t,t,t)=\boldsymbol{v}_\theta(\boldsymbol{x}_t,t)$ when loading the pretrained SiT weights. For classifier-free guidance, we add another time embedder for the CFG scale $w$, whose weights are randomly initialized. 

We also tried the parametrization in Section~\ref{subsec:sec_edm} for DiT, and find that the performances are comparable. However, this implementation introduces significantly more parameters ($\sim 226$M), since the module for producing AdaLN parameters is very large.

\section{Training Algorithms for STEI and SDEE}
\label{sec:training_algo}

Similar to Algorithm~\ref{alg:sdei} for $\mathcal{L}_\textrm{SDEI}$ and Algorithm~\ref{alg:stee} for $\mathcal{L}_\textrm{STEE}$, we provide the training algorithms with $\mathcal{L}_\textrm{STEI}$ and $\mathcal{L}_\textrm{SDEE}$ in Algorithm~\ref{alg:stei} and Algorithm~\ref{alg:sdee}, respectively.
\begin{figure}[t]\small
    \centering
    \begin{minipage}{0.54\textwidth}
        \centering
        \begin{algorithm}[H]
            \caption{Secant Training by Estimating the Interior Point (STEI)}
            \label{alg:stei}
            \begin{algorithmic}
                \STATE \textbf{Input:} dataset $\mathcal{D}$, neural network $\boldsymbol{f}_\theta$, learning rate $\eta$
                \vspace{-10pt}
                \STATE \REPEAT
                \STATE $\theta^- \leftarrow \theta$
                \STATE Sample $\boldsymbol{x}\sim \mathcal{D}$, $\boldsymbol{z}\sim\mathcal{N}(\boldsymbol{0},\boldsymbol{I})$
                \STATE Sample $t$ and $s$
                \STATE Sample $r\sim \mathcal{U}[0,1]$, $r \leftarrow t + r(s-t)$
                \STATE $\boldsymbol{x}_t \leftarrow t\boldsymbol{x}+(1-t)\boldsymbol{z}$
                \STATE $\hat{\boldsymbol{x}}_r \leftarrow \boldsymbol{x}_t+(r-t)\boldsymbol{f}_{\theta^-}(\boldsymbol{x}_t,t,r)$
                \STATE $\boldsymbol{v}_r\leftarrow \boldsymbol{v}(\hat{\boldsymbol{x}}_r,r)$
                \STATE Sample $\tau\in \mathcal{U}[0,1]$
                \STATE $\boldsymbol{x}_\tau \leftarrow \tau \boldsymbol{x} + (1 - \tau) \boldsymbol{z}$
                \STATE $\boldsymbol{u}_\tau \leftarrow \boldsymbol{x}-\boldsymbol{z}$
                \STATE $\mathcal{L}(\theta)=\mathbb{E}_{\boldsymbol{x},\boldsymbol{z},t,s,r}\Vert \boldsymbol{f}_\theta(\boldsymbol{x}_t,t,s)-\boldsymbol{f}_\theta(\boldsymbol{x}_r,r,r)\Vert_2^2$
                \STATE $+\lambda\mathbb{E}_{\boldsymbol{x}_0, \boldsymbol{z}, \tau}\Vert \boldsymbol{f}_\theta(\boldsymbol{x}_\tau,\tau,\tau) - \boldsymbol{u}_\tau\Vert_2^2$
                \STATE $\theta \leftarrow \theta - \eta \nabla_\theta \mathcal{L}(\theta)$
                \UNTIL{convergence}
            \end{algorithmic}
        \end{algorithm}
    \end{minipage}\hfill
    \begin{minipage}{0.45\textwidth}
        \centering
        \begin{algorithm}[H]
            \caption{Secant Distillation by Estimating the End Point (SDEE)}
            \label{alg:sdee}
            \begin{algorithmic}
                \STATE \textbf{Input:} dataset $\mathcal{D}$, neural network $\boldsymbol{f}_\theta$, teacher diffusion model $\boldsymbol{v}$, learning rate $\eta$
                \vspace{-10pt}
                \STATE \REPEAT
                \STATE $\theta^- \leftarrow \theta$
                \STATE Sample $\boldsymbol{x}\sim \mathcal{D}$, $\boldsymbol{z}\sim\mathcal{N}(\boldsymbol{0},\boldsymbol{I})$
                \STATE Sample $t$ and $s\sim \mathcal{U}[0,1]$
                \STATE Sample $r\sim \mathcal{U}[0,1]$, $r \leftarrow t + r(s-t)$
                \STATE $\boldsymbol{x}_r \leftarrow r\boldsymbol{x}+(1-r)\boldsymbol{z}$
                \STATE $\hat{\boldsymbol{x}}_t \leftarrow \boldsymbol{x}_r+(t-r)\boldsymbol{f}_{\theta^-}(\boldsymbol{x}_r,r,t)$
                \STATE $\mathcal{L}(\theta)=\mathbb{E}_{\boldsymbol{x},\boldsymbol{z},t,s,r}\Vert \boldsymbol{f}_\theta(\hat{\boldsymbol{x}}_t,t,s)-\boldsymbol{v}(\boldsymbol{x}_r,r)\Vert_2^2$
                \STATE $\theta \leftarrow \theta - \eta \nabla_\theta \mathcal{L}(\theta)$
                \UNTIL{convergence}
            \end{algorithmic}
        \end{algorithm}
    \end{minipage}
\end{figure}
\begin{figure}[t]\small
    \centering
    \begin{minipage}{0.49\textwidth}
        \centering
        \begin{algorithm}[H]
            \caption{Sampling of $t,s$ in SDEI and STEI for EDM}
            \label{alg:tsampling_ei_edm}
            \begin{algorithmic}
                \STATE \textbf{Input:} Gaussian distribution parameter $P_\textrm{mean}$ and $P_\textrm{std}$, number of steps $N$, boundary constants $\epsilon_1$ and $\epsilon_2$
                \STATE \textbf{Output:} sampled time point $t$ and $s$
                \STATE Sample $\sigma\sim \mathcal{N}(P_\textrm{mean},P_\textrm{std})$
                \STATE $t\leftarrow \arctan(\frac{\sigma}{\sigma_d})$
                \STATE Clip $t$ into $[\epsilon_1, \frac{\pi}{2}-\epsilon_2]$
                \STATE Round $t$ to be discrete
                \STATE Sample $d\sim \mathcal{U}(0,\frac{1}{N})$
                \STATE $s\leftarrow t - d$
                \STATE Clip $s$ into $[\epsilon_1, \frac{\pi}{2}-\epsilon_2]$
            \end{algorithmic}
        \end{algorithm}
    \end{minipage}\hfill
    \begin{minipage}{0.49\textwidth}
        \centering
        \begin{algorithm}[H]
            \caption{Sampling of $t,s$ in SDEE and STEE for EDM}
            \label{alg:tsampling_ee_edm}
            \begin{algorithmic}
                \STATE \textbf{Input:} Gaussian distribution parameter $P_\textrm{mean}$ and $P_\textrm{std}$, number of steps $N$, boundary constants $\epsilon_1$ and $\epsilon_2$
                \STATE \textbf{Output:} sampled time point $t$ and $s$
                \STATE Sample $\sigma\sim \mathcal{N}(P_\textrm{mean},P_\textrm{std})$
                \STATE $t\leftarrow \arctan(\frac{\sigma}{\sigma_d})$
                \STATE Clip $t$ into $[\epsilon_1, \frac{\pi}{2}-\epsilon_2]$
                \STATE Sample $d\sim \mathcal{U}(0,\frac{1}{N})$
                \STATE $d \leftarrow - d$ with probability of $0.5$
                \STATE $s\leftarrow t - d$
                \STATE Clip $s$ into $[\epsilon_1, \frac{\pi}{2}-\epsilon_2]$
            \end{algorithmic}
        \end{algorithm}
    \end{minipage}
    \begin{minipage}{0.49\textwidth}
        \centering
        \begin{algorithm}[H]
            \caption{Sampling of $t,s$ in SDEI and STEI for DiT}
            \label{alg:tsampling_ei_dit}
            \begin{algorithmic}
                \STATE \textbf{Input:} number of steps $N$
                \STATE \textbf{Output:} sampled time point $t$ and $s$
                \STATE Sample $d \sim \mathcal{U}(0,\frac{1}{N})$
                \STATE Sample $t \sim \mathcal{U}(0,1-d)$
                \STATE Round $t$ to be discrete (SDEI only)
                \STATE $s\leftarrow t + d $
            \end{algorithmic}
        \end{algorithm}
    \end{minipage}\hfill
    \begin{minipage}{0.49\textwidth}
        \centering
        \begin{algorithm}[H]
            \caption{Sampling of $t,s$ in SDEE and STEE for DiT}
            \label{alg:tsampling_ee_dit}
            \begin{algorithmic}
                \STATE \textbf{Input:} number of steps $N$
                \STATE \textbf{Output:} sampled time point $t$ and $s$
                \STATE Sample $d \sim \mathcal{U}(0,\frac{1}{N})$
                \STATE Sample $t \sim \mathcal{U}(0,1-d)$
                \STATE $s\leftarrow t + d $
                \STATE Swap $t$ and $s$ with probability of $0.5$
            \end{algorithmic}
        \end{algorithm}
    \end{minipage}
\end{figure}

\section{Sampling Time Points During Training}
\label{sec:sampling_time_training}
\noindent\textbf{Sampling $t$ and $s$ for EDM.}
We sample $\sigma$ from the Gaussian proposal distribution with $P_\textrm{mean}=-1.0$ and $P_\textrm{std}=1.4$~\cite{lu2024simplifying}, which we find sightly better than the default configuration of $P_\textrm{mean}=-1.2$ and $P_\textrm{std}=1.2$ in EDM~\cite{karras2022elucidating}, and derive $t$ by $t=\arctan(\frac{\sigma}{\sigma_d})$. Then, we sample the distance $d=|s-t|$ by $d\sim \mathcal{U}(0,\frac{1}{N})$, where $N$ is the pre-set number of steps. If the loss estimates the interior point, we round the sampled $t$ to be discrete; while for losses that estimate the end point, we randomly multiply $-1$ to $d$ with a probability of $0.5$. Finally, we get $s$ by $s=t-d$. The sampling processes are illustrated in Algorithm~\ref{alg:tsampling_ei_edm} and Algorithm~\ref{alg:tsampling_ee_edm}.

\noindent\textbf{Sampling $t$ and $s$ for DiT.}
We first sample the distance $d=|s-t|$ by $d\sim\mathcal{U}(0,\frac{1}{N})$. Then, we sample $t$ by $t\sim\mathcal{U}(0,1-d)$. If estimating the interior point, we round $t$ to be discrete and derive $s$ by $s=t+d$; if estimating the end point, we get $s$ by $s=t+d$ and randomly swap $t$ and $s$ with a probability of $0.5$. The sampling strategies are illustrated in Algorithm~\ref{alg:tsampling_ei_dit} and Algorithm~\ref{alg:tsampling_ee_dit}.

\noindent\textbf{Sampling $r$.}
The default sampling strategy for $r$ follows a uniform distribution between the endpoints $t$ and $s$. Specifically, we first draw $\tilde{r}\sim \mathcal{U}(0,1)$ from a standard uniform distribution, then obtain $r$ through the linear transformation $r=t+\tilde{r}(s-t)$. 

In our ablation studies, we explore alternative sampling strategies using the truncated normal distribution. Specifically, we sample $\tilde{r}$ from a truncated normal distribution $\tilde{r}\sim\mathcal{TN}(\mu,\sigma,0,1)$, where the distribution is confined to the interval $[0,1]$. The value of $r$ is then obtained through the same linear transformation $r=t+\tilde{r}(s-t)$. Let $\tilde{q}(\tilde{r})$ denote the probability density function of $\mathcal{TN}(\mu,\sigma,0,1)$. Assuming $s\neq t$ without loss of generality, we can derive
\begin{equation}
    r\sim q(r)=\frac{1}{|s-t|}\tilde{q}(\frac{r-t}{s-t}).
\end{equation}
Then Eq.~\eqref{eq:importance_sampling} becomes
\begin{equation}
\begin{split}
    \boldsymbol{f}(\boldsymbol{x}_t, t, s)
    &=\mathbb{E}_{r\sim q(r)}\boldsymbol{v}(\boldsymbol{x}_r,r)\frac{p_{\mathcal{U}(t,s)}(r)}{q(r)}\\
    &=\mathbb{E}_{\tilde{r}\sim\tilde{q}(\tilde{r})}\boldsymbol{v}(\boldsymbol{x}_r,r)\frac{\frac{1}{|s-t|}}{\frac{1}{|s-t|}\tilde{q}(\tilde{r})}\\
    &=\mathbb{E}_{\tilde{r}\sim\tilde{q}(\tilde{r})}\boldsymbol{v}(\boldsymbol{x}_r,r)\frac{1}{\tilde{q}(\tilde{r})}.
\end{split}
\end{equation}
The probability distributions of $\tilde{r}$ under different $\mu$ values are illustrated in Fig.~\ref{fig:r_sampling}.
\begin{figure}[!t]
    \centering
    \includegraphics[width=\linewidth]{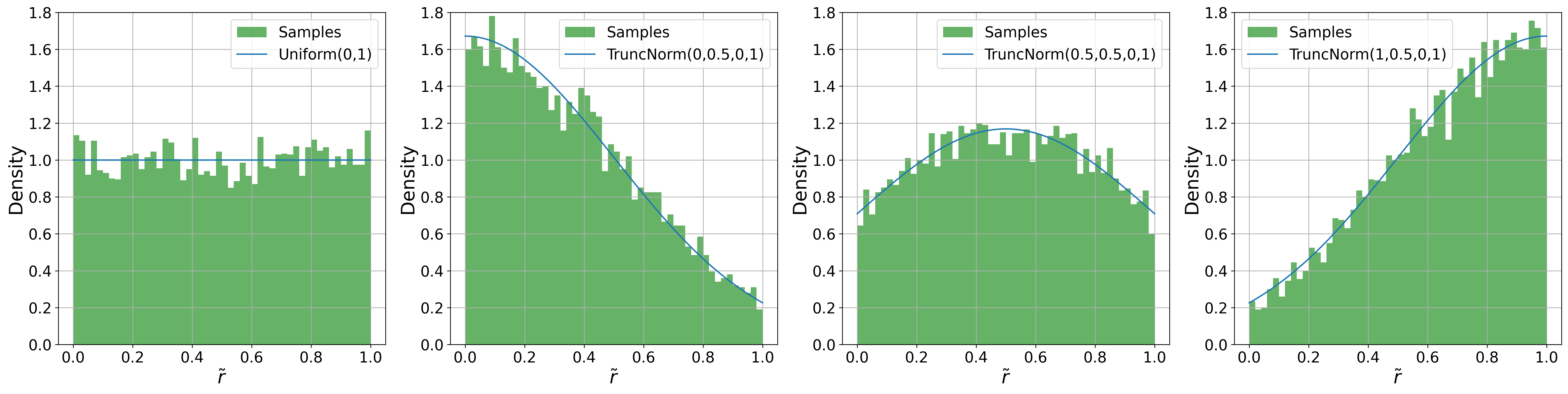}
    \caption{Uniform distribution and truncated normal distribution under different $\mu$ values.}
    \label{fig:r_sampling}
\end{figure}
\section{Sampling Algorithms at Inference}
\label{sec:inference_algo}
For simplicity, we always sample adjacent time steps with uniform spacing. The sampling process differs between STEE-trained models and those trained with SDEI, STEI, or SDEE, due to their distinct handling of classifier-free guidance. The detailed sampling processes are presented in Algorithm~\ref{alg:inference_other} and Algorithm~\ref{alg:inference_stee}.

\begin{algorithm}[H]
    \caption{Sampling Using Models Trained with SDEI, STEI and SDEE}
    \label{alg:inference_other}
    \begin{algorithmic}
        \STATE \textbf{Input:} model $\boldsymbol{f}_\theta$, number of steps $N$, guidance scale $w$
        \STATE \textbf{Output:} sampled image $\boldsymbol{x}_0$
        \STATE Sample $\boldsymbol{x}_1\sim \mathcal{N}(\boldsymbol{0},\boldsymbol{I})$
        \FOR{$i=0$ to $N-1$}
        \STATE $\boldsymbol{x}_{\frac{N-i-1}{N}}\leftarrow \boldsymbol{x}_{\frac{N-i}{N}}-\frac{1}{N}\boldsymbol{f}_\theta(\boldsymbol{x}_{\frac{N-i}{N}},\frac{N-i}{N},\frac{N-i-1}{N},w)$
        \ENDFOR
    \end{algorithmic}
\end{algorithm}

\begin{algorithm}[H]
    \caption{Sampling Using Models Trained with STEE}
    \label{alg:inference_stee}
    \begin{algorithmic}
        \STATE \textbf{Input:} model $\boldsymbol{f}_\theta$, number of steps $N$, guidance scale $w$
        \STATE \textbf{Output:} sampled image $\boldsymbol{x}_0$
        \STATE Sample $\boldsymbol{x}_1\sim \mathcal{N}(\boldsymbol{0},\boldsymbol{I})$
        \FOR{$i=0$ to $N-1$}
        \IF{$w>1$}
        \STATE $\boldsymbol{f}_\theta (\boldsymbol{x}_{\frac{N-i}{N}},\frac{N-i}{N},\frac{N-i-1}{N}) \leftarrow \boldsymbol{f}_\theta^u (\boldsymbol{x}_{\frac{N-i}{N}},\frac{N-i}{N},\frac{N-i-1}{N})+w(\boldsymbol{f}_\theta^c (\boldsymbol{x}_{\frac{N-i}{N}},\frac{N-i}{N},\frac{N-i-1}{N})-\boldsymbol{f}_\theta^u (\boldsymbol{x}_{\frac{N-i}{N}},\frac{N-i}{N},\frac{N-i-1}{N})$
        \ENDIF
        \STATE $\boldsymbol{x}_{\frac{N-i-1}{N}}\leftarrow \boldsymbol{x}_{\frac{N-i}{N}}-\frac{1}{N}\boldsymbol{f}_\theta(\boldsymbol{x}_{\frac{N-i}{N}},\frac{N-i}{N},\frac{N-i-1}{N})$
        \ENDFOR
    \end{algorithmic}
\end{algorithm}
\section{Additional Quantitative Results}
\label{sec:add_results}
\subsection{Additional Performance}
For a detailed comparison among secant losses, we provide more results on CIFAR-$10$ and ImageNet-$256\times 256$ in Table~\ref{tab:four_losses}.
\subsection{Training Efficiency}
\label{subsec:training_efficiency}
In practical application, methods like continuous-time CMs~\cite{song2023consistency,lu2024simplifying} and MeanFlow~\cite{geng2025mean} require analytical Jacobian-vector product (JVP) operations to compute the loss. This imposes a significant computational burden, especially in PyTorch. To provide a direct comparison, we benchmark the training speed and memory usage under the same EDM setup (the batch size is $512$ on $8$ A100 GPUs with $64$ per GPU) on CIFAR-10. The results are shown in Table~\ref{tab:training_efficiency}.

\begin{table}[!t]
    \caption{The performance comparison of secant losses on CIFAR-$10$ and ImageNet-$256\times256$.
        }
    \label{tab:four_losses}
    \centering
    \renewcommand{\arraystretch}{1.1}
    \addtolength{\tabcolsep}{4.5pt}
    \begin{tabular}{@{}lcc|cccccc@{}}
    \toprule
         &  & CIFAR-$10$ & \multicolumn{4}{c}{ImageNet-$256\times 256$}\\
        Type & Steps & FID$\downarrow$ & FID$\downarrow$ & IS$\uparrow$ & FID (w/ CFG)$\downarrow$ & IS (w/ CFG)$\uparrow$ \\
        \midrule
        SDEI & 1 & 22.67 & 43.49 & 54.40 & 8.97 & 253.25 \\
        SDEI & 2 & 5.88 & 20.83 & 93.17 & 4.81 & 257.56 \\
        SDEI & 4 & 3.23 & 14.21 & 116.03 & 3.11 & 258.81 \\
        SDEI & 8 & 2.27 & 9.14 & 139.69 & 2.46 & 248.36 \\
        STEI & 1 & 36.87 & 38.44 & 56.60 & 7.12 & 241.75 \\
        STEI & 2 & 9.21 & 21.10 & 95.77 & 4.41 & 241.99 \\
        STEI & 4 & 4.04 & 10.91 & 135.03 & 2.78 & 269.87 \\
        STEI & 8 & 2.59 & 7.64 & 157.03 & 2.36 & 274.72 \\
        SDEE & 4 & 10.19 & 19.99 & 96.87 & 3.96 & 247.20 \\
        SDEE & 8 & 3.18 & 9.46 & 136.97 & 2.46 & 258.94 \\
        STEE & 4 & 10.55 & 22.82 & 83.87 & 3.02 & 274.00 \\
        STEE & 8 & 3.78 & 12.98 & 110.03 & 2.33 & 274.47 \\
    \bottomrule
    \end{tabular}
\end{table}

\begin{table}[!t]
    \caption{Comparison on training efficiency among different approaches.}
    \label{tab:training_efficiency}
    \centering
    \renewcommand{\arraystretch}{1.1}
    \begin{tabular}{@{}lcc@{}}
    \toprule
        Method & Training Speed (sec/KIMG) & Memory Usage (G, Per GPU) \\
        \midrule
        EDM~\cite{karras2022elucidating} (teacher diffusion model) & 0.57 & 8.69 \\
        MeanFlow~\cite{geng2025mean} & 1.04 & 38.73 \\ 
        SDEI & 0.91 & 9.64 \\
        STEI & 1.42 & 16.95 \\
        SDEE & 0.91 & 9.64 \\
        STEI & 0.72 & 9.34 \\
    \bottomrule
    \end{tabular}
\end{table}

\section{Experimental Settings}
\label{sec:experiment_setting}
\begin{table}\scriptsize
\caption{Experimental settings of four loss functions on different models and datasets.}
\centering
\addtolength{\tabcolsep}{-3.2pt}
\begin{tabular}{lcccccccc}
\toprule
 & \multicolumn{4}{c}{CIFAR-$10$} & \multicolumn{4}{c}{ImageNet-$256\times256$} \\
& SDEI & STEI & SDEE & STEE & SDEI & STEI & SDEE & STEE \\
\midrule
\multicolumn{2}{l}{\textbf{Model Setting}} \\
\hline
Architecture & DDPM++ & DDPM++ & DDPM++ & DDPM++ & DiT-XL/2 & DiT-XL/2 & DiT-XL/2 & DiT-XL/2 \\
Params (M) & 55 & 55 & 55 & 55 & 675 & 675 & 675 & 675 \\
$\sigma_d$ & 0.5 & 0.5 & 0.5 & 0.5 & - & -  & - & - \\
$c_{\text{noise}}(t)$ & $\frac{1}{4}\ln(\sigma_d\tan t)$ & $\frac{1}{4}\ln(\sigma_d\tan t)$ & $\frac{1}{4}\ln(\sigma_d\tan t)$ & $\frac{1}{4}\ln(\sigma_d\tan t)$ & $t$ & $t$ & $t$ & $t$ \\
Boundary $\epsilon_1$ & 0.001 & 0.001 & 0.001 & 0.001 & - & - & - & - \\
Boundary $\epsilon_2$ & 0.00625 & 0.00625 & 0.00625 & 0.00625 & - & - & - & - \\
Initialization & EDM & EDM & EDM & EDM & SiT-XL/2 & SiT-XL/2 & SiT-XL/2 & SiT-XL/2 \\
\hline
\multicolumn{2}{l}{\textbf{Training Setting}} \\
\hline
Precision & fp32 & fp32 & fp32 & fp32 & fp16 & fp16 & fp16 & fp16 \\
Dropout & 0 & 0.2 & 0 & 0.2 & 0 & 0 & 0 & 0 \\
Optimizer & RAdam & RAdam  & RAdam & RAdam & AdamW & AdamW & AdamW & AdamW \\
Optimizer $\epsilon$ & 1e-8 & 1e-8 & 1e-8 & 1e-8 & 1e-8 & 1e-8 & 1e-8 & 1e-8 \\
$\beta_1$ & 0.9 & 0.9 & 0.9 & 0.9 & 0.9 & 0.9 & 0.9 & 0.9 \\
$\beta_2$ & 0.99 & 0.99 & 0.99 & 0.99 & 0.999 & 0.999 & 0.999 & 0.999 \\
Learning Rate & 1e-4 & 1e-4  & 1e-4 & 1e-4 & 1e-5 & 1e-5 & 1e-5 & 1e-4 \\
Weight Decay & 0 & 0 & 0 & 0 & 0 & 0 & 0 & 0 \\
Batch Size & 512 & 512 & 512 & 512 & 256 & 256 & 256 & 256 \\
Training iters & 100K & 100k & 100k & 100k  & 100K & 100k & 100k & 100k \\
$t,s$ sampling & discrete & discrete & continuous & continuous & discrete & continuous & continuous & continuous \\
$r$ sampling & Uniform & Uniform & Uniform & Uniform  & Uniform & Uniform & Uniform & Uniform \\
EMA Rate & EDM's & EDM's & EDM's & EDM's & 0.9999 & 0.9999 & 0.9999 & 0.9999 \\
Label Dropout & - & - & - & - & - & 0.1 & - & 0.1 \\
Embed CFG & - & - & - & - & \ding{51} & \ding{51} & \ding{51} & \ding{55} \\
$x$-flip & \ding{55} & \ding{55} & \ding{55} & \ding{55} & \ding{55} & \ding{55} & \ding{55} & \ding{55} \\
\bottomrule
\end{tabular}
\label{table:exp_setting}
\end{table}
The detailed experimental configurations are presented in Table~\ref{table:exp_setting}. Basically, we follow the settings of EDM and SiT. Except for STEE on ImageNet-$256\times 256$, we multiply the learning rate by a factor of $0.1$. For CIFAR-$10$ dataset, we use the DDPM++ architecture, and adopts the Trigflow framework. For ImageNet-$256\times 256$, we cache the latent codes on disk, and for simplicity we disable the horizontal flip data augmentation. For SDEI, STEI and SDEE, we embed CFG scale as a conditional input to the model, with the CFG range $[1, 2]$ for $2$-step, $4$-step and $8$-step models and $[1,2.5]$ for $1$-step ones. In the experiment concerning training from scratch on ImageNet-$256\times 256$, we maintain identical settings to those specified in the STEE column, while only disable the pretrained weights and alter the model size. To calculate the FID and IS score at evaluation, we use the codebase provided in MAR~\cite{li2024autoregressive} for simplicity. All the experiments can be done on a sever with $8$ NVIDIA A100 GPUs.

\section{More Visualizations}
\label{sec:add_visualization}
Additional visualization results are presented for both datasets.
For CIFAR-$10$ dataset, we provide $4$-step and $8$-step results using $\mathcal{L}_\textrm{SDEI}$ in Fig.~\ref{fig:random_cifar_4step} and Fig.~\ref{fig:random_cifar_8step}, respectively. For ImageNet-$256\times 256$, extended visualizations of $8$-step generation using $\mathcal{L}_\textrm{STEE}$ are displayed in Fig.~\ref{fig:random_imagenet_8step1} and Fig.~\ref{fig:random_imagenet_8step2}.
\begin{figure}[!t]
    \centering
    \includegraphics[width=\linewidth]{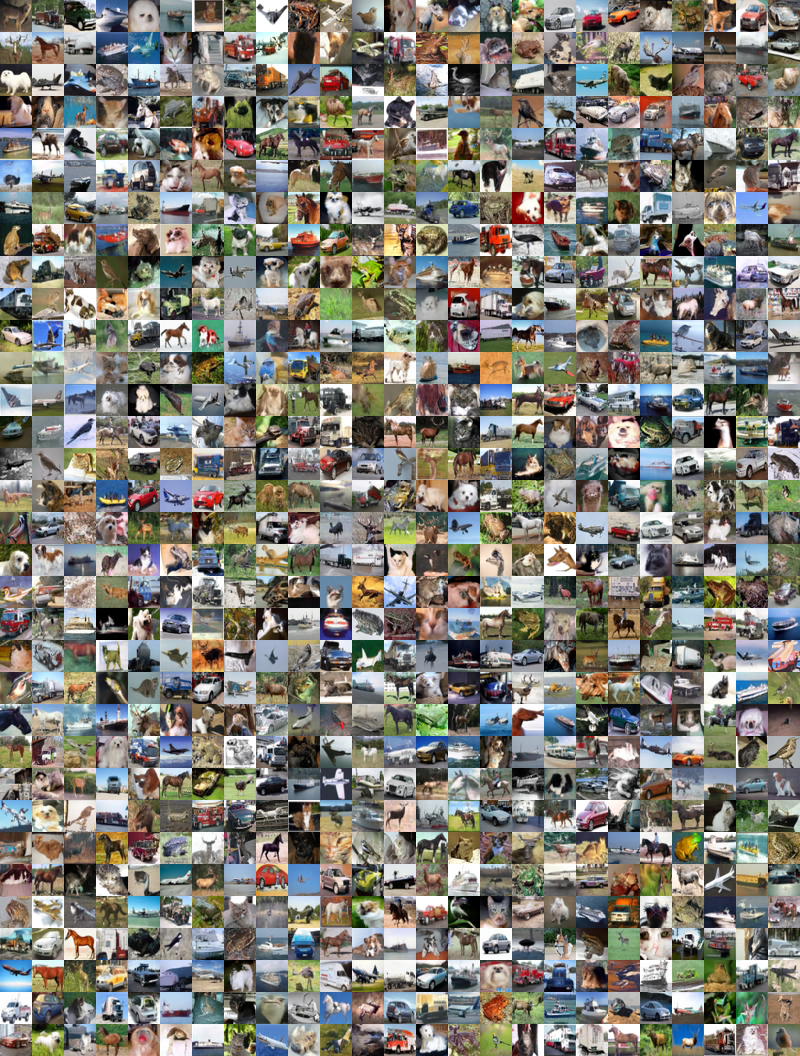}
    \caption{Uncurated $4$-step samples on unconditional CIFAR-$10$.}
    \label{fig:random_cifar_4step}
\end{figure}
\begin{figure}[!t]
    \centering
    \includegraphics[width=\linewidth]{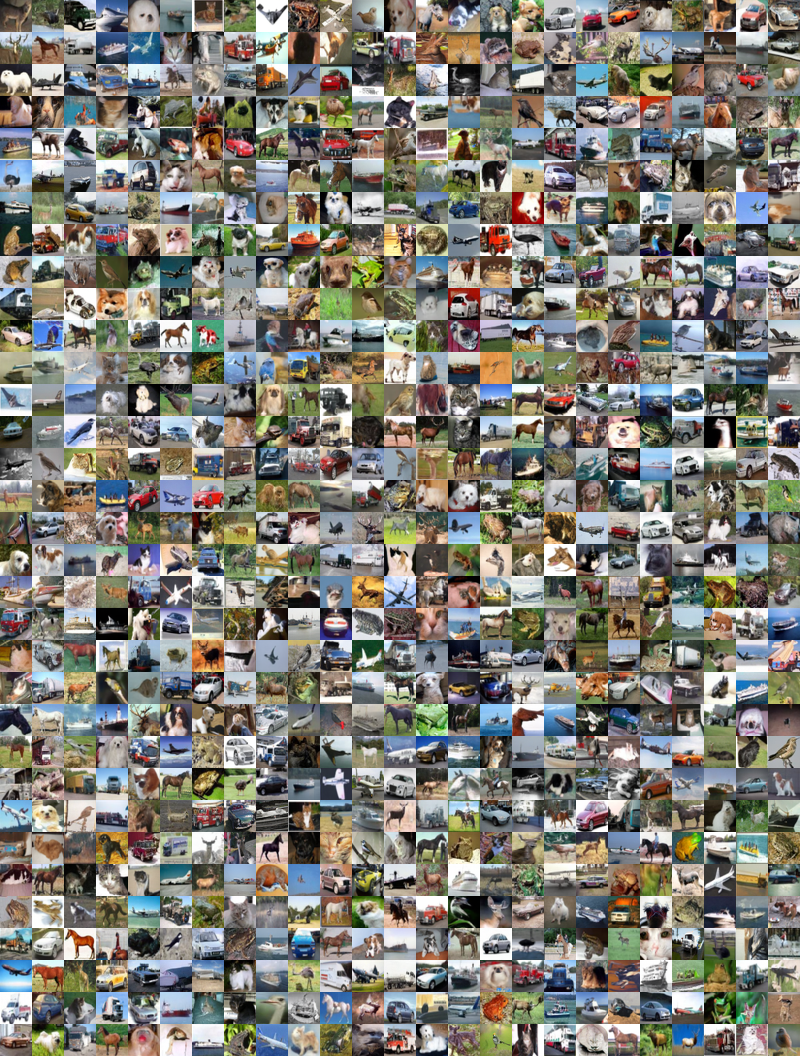}
    \caption{Uncurated $8$-step samples on unconditional CIFAR-$10$.}
    \label{fig:random_cifar_8step}
\end{figure}
\begin{figure}[!t]
    \centering
    \includegraphics[width=\linewidth]{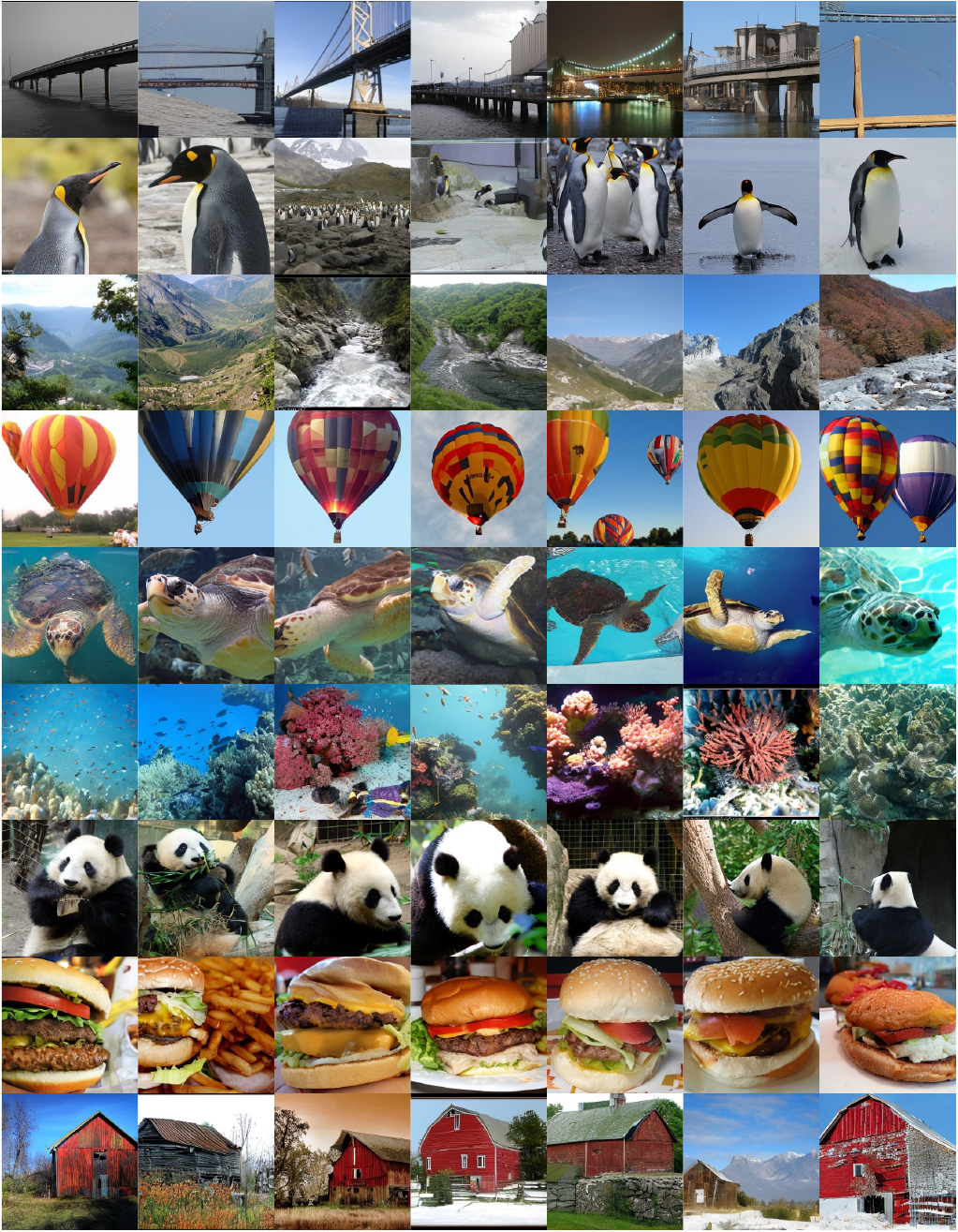}
    \caption{Uncurated $8$-step samples on ImageNet-$256\times 256$. Guidance scale is $2.1$, and the guidance of first two steps is ignored.}
    \label{fig:random_imagenet_8step1}
\end{figure}
\begin{figure}[!t]
    \centering
    \includegraphics[width=\linewidth]{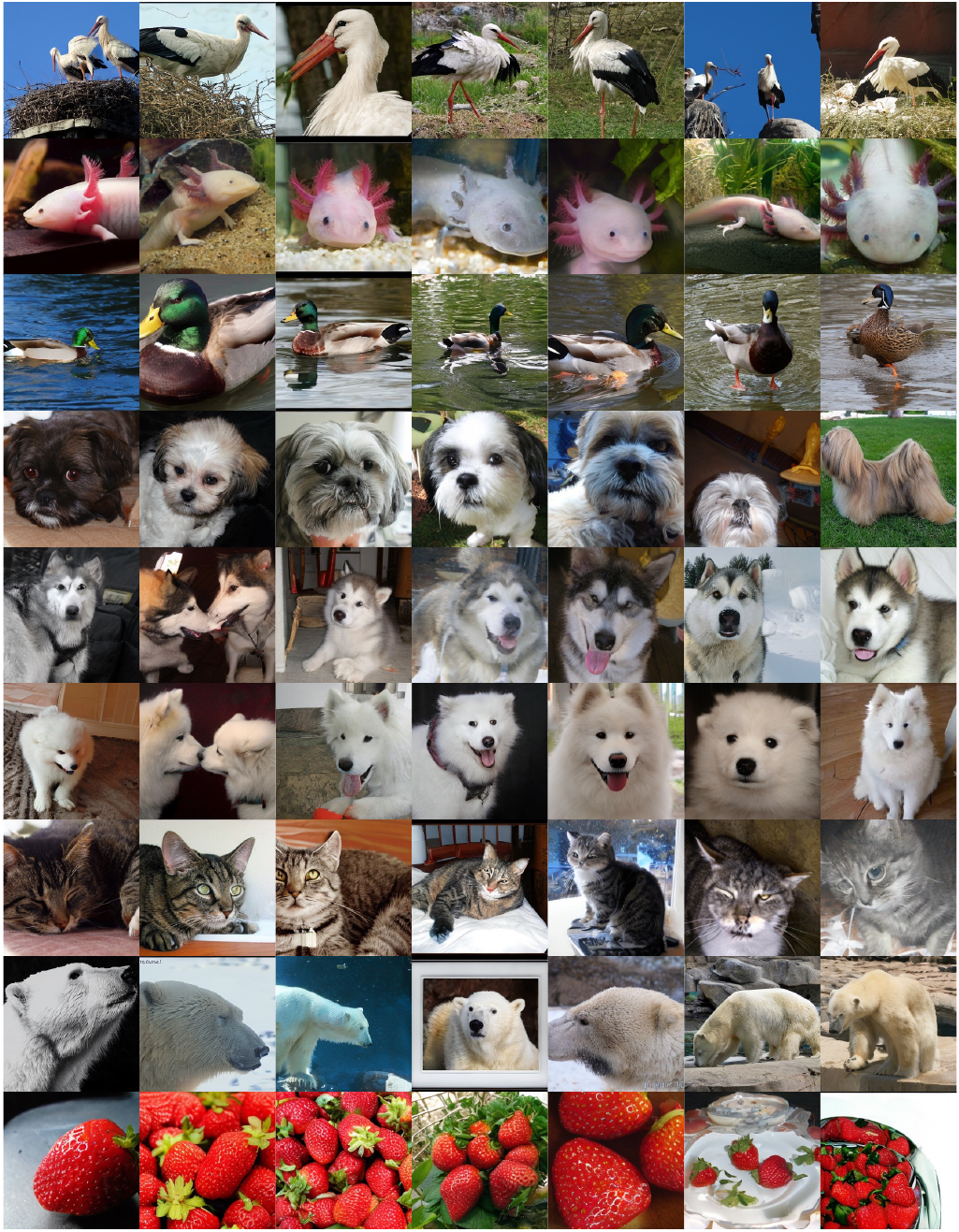}
    \caption{Uncurated $8$-step samples on ImageNet-$256\times 256$. Guidance scale is $2.1$, and the guidance of first two steps is ignored.}
    \label{fig:random_imagenet_8step2}
\end{figure}


\end{document}